\pgfplotsset{compat=newest}
\newtheorem{theorem}{Theorem}[section]
\newtheorem{definition}{Definition}[section]
\newtheorem{lemma}[theorem]{Lemma}
\newtheorem{remark}{Remark}[section]
\newtheorem{proposition}[theorem]{Proposition}
\newtheorem{corollary}[theorem]{Corollary}
\def\eqref#1{(\ref{#1})}
\def\R{\mathbb R}
\def\Rp{\mathbb R_{+}}
\def\Rpz{\mathbb R_{\geq0}}
\def\H{\mathcal H}
\def\1{\mathbbm{1}}
\def\Trn{\text{Trn}}
\def\Perm{\mathcal{P}}
\def\ptheta{\pmb{\theta}}
\def\pupsilon{\pmb{\upsilon}}
\def\prho{\pmb{\rho}}
\def\pgamma{\pmb{\gamma}}
\def\pphi{\pmb{\phi}}
\def\dim{\text{dim}}
\def\summ{\textit{sum}}
\def\dout{d_{\text{out}}}
\def\din{d_{\text{in}}}
\def\bm{\pmb{m}}
\def\br{\pmb{r}}
\def\subs{\Gamma_{s}(\ptheta^r)}
\def\subsg{\Gamma_{s}(\ptheta_*)}
\def\subsc{\gbar{\Gamma}_{s}(\ptheta_*^r)}
\def\expmanfc{\Theta_{r \to m}(\ptheta_*^r)}
\def\expmanc{\gbar{\Theta}_{r \to m}(\ptheta_*^r)}
\def\expman{\Theta_{r \to m}(\ptheta^r)}
\def\expmang{\Theta_{r^* \to m}(\ptheta_*)}
\def\expmanmultig{\Theta_{\br^* \to \bm}(\ptheta_*)}
\def\ftwo{f^{(2)}(x | \ptheta)}
\def\fL{f^{(L)}(x | \ptheta)}
\newcommand\gbar[1]{\,\overline{\!{#1}\!}}
\icmltitlerunning{Geometry of the Loss Landscape in Overparameterized Neural Networks}
\begin{document}

\twocolumn[
\icmltitle{Geometry of the Loss Landscape in Overparameterized Neural Networks: Symmetries and Invariances}

\icmlsetsymbol{eq}{*}

\begin{icmlauthorlist}
    \icmlauthor{Berfin \c{S}im\c{s}ek}{cft,lcn}
    \icmlauthor{François Ged}{cft}
    \icmlauthor{Arthur Jacot}{cft}
    \icmlauthor{Francesco Spadaro}{cft} \\
    \icmlauthor{Clément Hongler}{cft,eq}
    \icmlauthor{Wulfram Gerstner}{lcn,eq}
    \icmlauthor{Johanni Brea}{lcn,eq}
\end{icmlauthorlist}

\icmlaffiliation{cft}{Chair of Statistical Field Theory, École Polytechnique Fédérale de Lausanne, Switzerland}
\icmlaffiliation{lcn}{Laboratory of Computational Neuroscience, École Polytechnique Fédérale de Lausanne, Switzerland}

\icmlcorrespondingauthor{Berfin \c{S}im\c{s}ek}{berfin.simsek@epfl.ch}

\icmlkeywords{Neural Network Landscapes, overparameterization}

\vskip 0.3in
]
\printAffiliationsAndNotice{\icmlEqualContribution}

\begin{abstract}
We study how permutation symmetries in overparameterized multi-layer neural
networks generate `symmetry-induced' critical points.
Assuming a network with $ L $ layers of minimal widths $ r_1^*, \ldots, r_{L-1}^* $ reaches a zero-loss minimum at $ r_1^*! \cdots r_{L-1}^*! $ isolated points that are permutations of one another,
we show that adding one extra neuron to each layer is sufficient to connect all these previously discrete minima into a single manifold.
For a two-layer overparameterized network of width $ r^*+ h =: m $ we explicitly describe the manifold of global minima: it consists of $ T(r^*, m) $ affine subspaces of dimension at least $ h $ that are connected to one another.
For a network of width $m$, we identify the number $G(r,m)$ of affine subspaces containing only symmetry-induced critical points that are related to the critical points of a smaller network of width $r<r^*$.
Via a combinatorial analysis, we derive closed-form formulas for $ T $ and $ G $ and show that the number of symmetry-induced critical subspaces dominates the number of affine subspaces forming the global minima manifold in the mildly overparameterized regime (small $ h $) and vice versa in the vastly overparameterized regime ($h \gg r^*$).
Our results provide new insights into the minimization of the non-convex loss function of overparameterized neural networks.
\end{abstract}

\section{Introduction}

Neural network landscapes were traditionally thought of as highly non-convex landscapes, where non-global critical points may harm gradient-descent by slowing it down (due to saddles) or making it stop in local minima.
Earlier works have argued in favor of a proliferation of saddles in high-dimensional neural network landscapes through an analogy with random error functions \cite{dauphin2014identifying}.
On the other hand, practical neural network landscapes are found to exhibit surprising properties, such as the connectivity of global minima \cite{draxler2018essentially, garipov2018loss} and the convergence to a global minimum in the so-called overparameterized regime \cite{jacot2018neural}, thereby ruling out proliferating saddles as a problem in this regime.
Yet, in mildly overparameterized networks, gradient descent may find a global minimum only for a small fraction of random initializations \cite{sagun2014explorations, chizat2018global, frankle2018lottery}.

In this work, we study the width-dependent scaling of the number of symmetry-induced critical points and the connectivity of global minima by exploiting the permutation symmetry and further invariances of the network parameterization.
The permutation symmetry introduces an invariance to a permutation in parameterization that is characteristic for many machine learning models beyond neural networks, such as mixture models, multiple kernel learning, or matrix factorization.

Further invariances in a neural network of width $m$ induce equal loss manifolds such that all points in the manifold are equivalent to a single point in a narrower network of width $ r < m $.
The mapping approach from a point in parameter space of the narrower network to a parameter manifold of the full network is particularly useful for the study of critical points as critical points of the narrow network turn into symmetry-induced critical subspaces of the full one. In particular, a global minimum of the narrow network turns into a collection of global minima subspaces that are connected to one another.

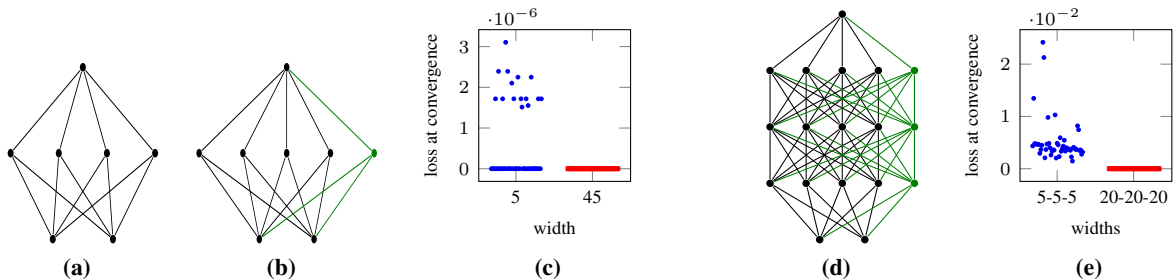
\begin{figure*}[h!]
    \tikzset{font = \scriptsize, mark size = .7}
\pgfplotsset{width = 3.6cm, height = 3.6cm}
\centering
\begin{tabular}{llllll}
    \begin{adjustbox}{width=2cm, height=2.4cm}
\begin{tikzpicture}[every node/.style = {fill = black, inner sep = 1, circle}]
    \foreach \i in {1,2} {
        \node (in\i) at (\i, 0) {};
    };
    \foreach \i in {1,...,4} {
        \node (h\i) at (.8*\i - .5, 1) {};
    };
    \node (out) at (1.5, 2) {};
    \foreach \i in {1,...,4} {
        \foreach \j in {1,2} {
            \draw (in\j) -- (h\i);
        }
        \draw (h\i) -- (out);
    }; 
\end{tikzpicture}
\end{adjustbox} &
    \begin{adjustbox}{width=2.4cm, height=2.4cm}
\begin{tikzpicture}[every node/.style = {fill = black, inner sep = 1, circle}]
    \foreach \i in {1,2} {
        \node (in\i) at (\i, 0) {};
    };
    \foreach \i in {1,...,4} {
        \node (h\i) at (.8*\i - .9, 1) {};
    };
    \node (out) at (1.5, 2) {};
    \foreach \i in {1,...,4} {
        \foreach \j in {1,2} {
            \draw (in\j) -- (h\i);
        }
        \draw (h\i) -- (out);
    };
    \node[green!50!black] (h5) at (3.1, 1) {};
    \foreach \j in {1,2} {
        \draw[green!50!black] (in\j) -- (h5);
    }
    \draw[green!50!black] (h5) -- (out);
\end{tikzpicture}
\end{adjustbox} &
    \begin{tikzpicture}
\begin{axis}[only marks, ylabel={loss at convergence}, xtick={1,2}, xticklabels={5,45}, xlabel={width}]
    \addplot
        coordinates {
            (0.68,4.798395452638219e-32)
            (0.6933333333333334,1.3389165795411656e-31)
            (0.7066666666666667,5.986262297576164e-32)
            (0.72,5.074097880846038e-32)
            (0.7333333333333334,1.7176022709040847e-6)
            (0.7466666666666667,1.7410315040868256e-31)
            (0.76,4.7162713250869535e-32)
            (0.7733333333333333,2.3919528215368495e-6)
            (0.7866666666666667,1.45740996357942e-31)
            (0.8,1.4245603125589137e-31)
            (0.8133333333333334,5.446589459382135e-32)
            (0.8266666666666667,1.7176022709040955e-6)
            (0.84,5.569775650709033e-32)
            (0.8533333333333334,7.666873907821702e-32)
            (0.8666666666666667,3.1056484870609425e-6)
            (0.88,5.346867304498455e-32)
            (0.8933333333333333,2.3919528215368745e-6)
            (0.9066666666666667,6.147577548123292e-32)
            (0.92,6.5347341494364e-32)
            (0.9333333333333333,4.282186650887408e-32)
            (0.9466666666666667,2.1035430269633406e-6)
            (0.96,4.513894010764192e-32)
            (0.9733333333333334,1.7176022709040968e-6)
            (0.9866666666666667,3.774776862802804e-31)
            (1.0,4.8423905209692535e-32)
            (1.0133333333333334,1.375872436939235e-31)
            (1.0266666666666666,2.248953749976516e-6)
            (1.04,5.587373678041446e-32)
            (1.0533333333333335,5.549244618821217e-32)
            (1.0666666666666667,1.7176022709040904e-6)
            (1.08,1.5141149772488473e-6)
            (1.0933333333333333,4.68694127953293e-32)
            (1.1066666666666667,4.757333388862586e-32)
            (1.12,4.622415179314079e-32)
            (1.1333333333333333,1.7176022709041002e-6)
            (1.1466666666666667,4.974375725962359e-32)
            (1.16,1.5528903142592819e-6)
            (1.1733333333333333,4.6546782294235045e-32)
            (1.1866666666666668,5.631368746372482e-32)
            (1.2,2.248953749976509e-6)
            (1.2133333333333334,2.9602814977675758e-31)
            (1.2266666666666668,6.907225727972497e-32)
            (1.24,6.009726334019383e-32)
            (1.2533333333333334,4.933313662186726e-32)
            (1.2666666666666666,1.1790678312717383e-31)
            (1.28,7.423434529723308e-32)
            (1.2933333333333334,1.717602270904095e-6)
            (1.3066666666666666,6.2824957576718e-32)
            (1.32,6.2824957576718e-32)
            (1.3333333333333333,1.7176022709040836e-6)
        }
        ;
    \addplot
        coordinates {
            (1.68,4.845434935640551e-17)
            (1.6933333333333334,4.710290723953481e-17)
            (1.7066666666666668,7.45825061016093e-17)
            (1.72,2.0218052232303973e-16)
            (1.7333333333333334,2.93897011819373e-17)
            (1.7466666666666666,4.741471488217759e-17)
            (1.76,7.78589256500585e-17)
            (1.7733333333333334,3.7788208711726603e-17)
            (1.7866666666666666,2.5588672078449037e-16)
            (1.8,1.0462997078740756e-16)
            (1.8133333333333332,3.5371530679458856e-17)
            (1.8266666666666667,5.068953811544443e-17)
            (1.84,2.1205794206777482e-17)
            (1.8533333333333333,4.2868662196560685e-17)
            (1.8666666666666667,8.125067177780508e-17)
            (1.8800000000000001,1.2265585858734658e-16)
            (1.8933333333333333,4.9414352616791514e-17)
            (1.9066666666666667,1.114654345834633e-16)
            (1.92,4.863227437166358e-17)
            (1.9333333333333333,8.954194064919412e-17)
            (1.9466666666666668,4.79400903478199e-17)
            (1.96,4.883737521593099e-17)
            (1.9733333333333334,2.4942714066280585e-17)
            (1.9866666666666666,1.9357457648042708e-16)
            (2.0,8.582650915195954e-17)
            (2.013333333333333,4.366716964373899e-17)
            (2.026666666666667,2.4934617943708424e-16)
            (2.04,1.5782164727422665e-16)
            (2.0533333333333332,6.555899121690997e-17)
            (2.066666666666667,4.7814092281654133e-17)
            (2.08,3.1503237052334357e-17)
            (2.0933333333333333,6.591283461665985e-17)
            (2.1066666666666665,1.4009323829618706e-16)
            (2.12,2.7553804023680573e-16)
            (2.1333333333333333,4.1563817420847774e-17)
            (2.1466666666666665,4.5609786243056056e-17)
            (2.16,3.910576723108334e-17)
            (2.1733333333333333,2.034913958892805e-16)
            (2.1866666666666665,6.763033015391691e-17)
            (2.2,8.635643835464755e-17)
            (2.2133333333333334,3.732287243131954e-16)
            (2.2266666666666666,1.1811799574615018e-16)
            (2.24,8.452461563486795e-17)
            (2.2533333333333334,4.0254807054541755e-16)
            (2.2666666666666666,1.8891705567895134e-17)
            (2.28,5.805480843041512e-17)
            (2.2933333333333334,3.6249414273692065e-17)
            (2.3066666666666666,1.4783730551601792e-16)
            (2.32,1.3500693947331865e-15)
            (2.3333333333333335,7.964556218902253e-17)
        }
        ;
\end{axis}
\end{tikzpicture} \hspace{1.2cm} &
    \begin{tikzpicture}[every node/.style = {fill = black, inner sep = 1, circle}, yscale = 0.75, xscale = 0.6, newnode/.style = {green!50!black}]
    \foreach \i in {1,2} {
        \node (in\i) at (\i, 0) {};
    };
    \foreach \i in {1,...,4} {
        \node (h1\i) at (.8*\i - .9, 1) {};
    };
    \foreach \i in {1,...,4} {
        \node (h2\i) at (.8*\i - .9, 2) {};
    };
    \foreach \i in {1,...,4} {
        \node (h3\i) at (.8*\i - .9, 3) {};
    };
    \node (out) at (1.5, 4) {};
    \foreach \i in {1,...,4} {
        \foreach \j in {1,2} {
            \draw (in\j) -- (h1\i);
        }
        \foreach \j in {1,...,4} {
            \draw (h1\i) -- (h2\j);
            \draw (h2\i) -- (h3\j);
        }
        \draw (h3\i) -- (out);
    };
    \foreach \i in {1,...,3} {
        \node[newnode] (h\i5) at (3.1, \i) {};
    }
    \foreach \i in {1,2} {
        \draw[newnode] (in\i) -- (h15);
    }
    \foreach \i in {1,...,5} {
        \draw[newnode] (h2\i) -- (h15);
        \draw[newnode] (h1\i) -- (h25);
        \draw[newnode] (h3\i) -- (h25);
        \draw[newnode] (h2\i) -- (h35);
    }
    \draw[newnode] (h35) -- (out);
\end{tikzpicture} &
    \begin{tikzpicture}
\begin{axis}[only marks, ylabel={loss at convergence}, xtick={1,2},
xticklabels={5-5-5,20-20-20}, xlabel={widths}]
    \addplot
        coordinates {
            (0.68,0.0043551999002999)
            (0.6933333333333334,0.013461110863562144)
            (0.7066666666666667,0.004805520101473104)
            (0.72,0.00471389632417121)
            (0.7333333333333334,0.004721985618800117)
            (0.7466666666666667,0.004675814103854579)
            (0.76,0.004650303458914569)
            (0.7733333333333333,0.0029390316944898354)
            (0.7866666666666667,0.003669285151194906)
            (0.8,0.004490168228607979)
            (0.8133333333333334,0.024174307170947422)
            (0.8266666666666667,0.021270489078553573)
            (0.84,0.0020866174497368366)
            (0.8533333333333334,0.003676770305382683)
            (0.8666666666666667,0.004721192518722597)
            (0.88,0.009816125445068751)
            (0.8933333333333333,0.004850082691029067)
            (0.9066666666666667,0.004013806173798219)
            (0.92,0.0026235219666558276)
            (0.9333333333333333,0.003382268927953068)
            (0.9466666666666667,0.0036075108758624927)
            (0.96,0.003560533713884744)
            (0.9733333333333334,0.010283433414225013)
            (0.9866666666666667,0.0020478012417873034)
            (1.0,0.004875180587011746)
            (1.0133333333333334,0.004561948140865512)
            (1.0266666666666666,0.0023213029878292927)
            (1.04,0.005928965601184014)
            (1.0533333333333335,0.003336488368541117)
            (1.0666666666666667,0.0038245893160895896)
            (1.08,0.004324948898285068)
            (1.0933333333333333,0.005443467853661867)
            (1.1066666666666667,0.003368384481496864)
            (1.12,0.0040190082516728615)
            (1.1333333333333333,0.003730498192875604)
            (1.1466666666666667,0.0037759685093642194)
            (1.16,0.0037545603303565612)
            (1.1733333333333333,0.0035294177715176974)
            (1.1866666666666668,0.002267300061154192)
            (1.2,0.001461098983321214)
            (1.2133333333333334,0.004084999190951158)
            (1.2266666666666668,0.003979962982465959)
            (1.24,0.0037283478422445236)
            (1.2533333333333334,0.003683100229774075)
            (1.2666666666666666,0.008200941538627377)
            (1.28,0.007460606475048363)
            (1.2933333333333334,0.0034893419578807426)
            (1.3066666666666666,0.0036015834532282277)
            (1.32,0.002774542773425838)
            (1.3333333333333333,0.003213278372108965)
        }
        ;
    \addplot
        coordinates {
            (1.68,5.723556089140091e-6)
            (1.6933333333333334,6.270197600253691e-6)
            (1.7066666666666668,3.4970509392871213e-6)
            (1.72,5.228532844385378e-6)
            (1.7333333333333334,6.371978245982154e-6)
            (1.7466666666666666,3.8892151473821075e-6)
            (1.76,4.422527148694087e-6)
            (1.7733333333333334,1.2442312850915064e-5)
            (1.7866666666666666,4.373559669358041e-6)
            (1.8,5.2450918258996e-6)
            (1.8133333333333332,6.3244847605397344e-6)
            (1.8266666666666667,7.0076992676329105e-6)
            (1.84,4.392401606340131e-6)
            (1.8533333333333333,6.175787173376639e-6)
            (1.8666666666666667,4.170103912786634e-6)
            (1.8800000000000001,4.358823477884257e-6)
            (1.8933333333333333,7.355758215504749e-6)
            (1.9066666666666667,5.859440902078967e-6)
            (1.92,4.5117658353531666e-6)
            (1.9333333333333333,7.595506613439916e-6)
            (1.9466666666666668,6.142853636358983e-6)
            (1.96,6.453908807818233e-6)
            (1.9733333333333334,4.040145276459618e-6)
            (1.9866666666666666,3.9822952915648074e-6)
            (2.0,3.2256012667898644e-6)
            (2.013333333333333,8.939321983481519e-6)
            (2.026666666666667,5.120392923474475e-6)
            (2.04,4.640989001212733e-6)
            (2.0533333333333332,4.417849534959585e-6)
            (2.066666666666667,5.258011134254064e-6)
            (2.08,5.715134330837762e-6)
            (2.0933333333333333,4.859301611059056e-6)
            (2.1066666666666665,5.194979947832384e-6)
            (2.12,3.844839441316876e-6)
            (2.1333333333333333,4.9809133903831965e-6)
            (2.1466666666666665,4.19688778384005e-6)
            (2.16,4.569473370054264e-6)
            (2.1733333333333333,5.135402167651158e-6)
            (2.1866666666666665,4.72604423769936e-6)
            (2.2,5.389620315683631e-6)
            (2.2133333333333334,6.294063001076576e-6)
            (2.2266666666666666,3.32394753140405e-6)
            (2.24,5.784270256841017e-6)
            (2.2533333333333334,3.6074654705934254e-6)
            (2.2666666666666666,5.144212378836289e-6)
            (2.28,5.64813189609432e-6)
            (2.2933333333333334,6.29087395852139e-6)
            (2.3066666666666666,5.517178977438293e-6)
            (2.32,3.1876023868080357e-6)
            (2.3333333333333335,5.415097810344741e-6)
        }
        ;
\end{axis}
\end{tikzpicture} \\
    \hspace{0.65cm} {\small\bf{(a)}}  & \hspace{0.85cm} {\small\bf{(b)}}  & \hspace{1.5cm} {\small\bf{(c)}} \hspace{1.85cm} & \hspace{0.65cm} {\small\bf{(d)}} & \hspace{1.5cm} {\small\bf{(e)}}
\end{tabular}

    \vspace{-0.35cm}
    \caption{\textit{Graph of {\bf (a)} a minimal network of width 4 (teacher) and {\bf (b)} a mildly overparameterized student network of width 5.
     {\bf (c)} With 50 random initializations,  mildly overparameterized networks (blue) find a global minimum for only a fraction of initializations, whereas vastly overparameterized networks (red, width 45) consistently find a global minimum.
    {\bf (d)} Graph of student network with three hidden layer learning from a teacher with widths $ 4-4-4 $. {\bf (e)} Vastly  overparameterized networks (red) consistently find a global minimum whereas mildly overparameterized networks (blue) typically do not. } \label{fig:converge-or-not}
    } 
\end{figure*}

\subsection{Main Contributions}
\begin{enumerate}
    \item Suppose an $ L $-layer Artificial Neural Network (ANN) with hidden layer widths $ r_1^*, \ldots, r_{L-1}^* $ reaches a unique (up to permutation) zero-loss global minimum (we call such a network \emph{minimal} if it cannot achieve zero loss if any neuron is removed). The permutation symmetries give rise to $ r_1^*! \cdots r_{L-1}^*! $ equivalent discrete global minima.
    We show that adding one neuron to each layer is sufficient to connect these global minima into a single zero-loss manifold.
    \item For a two-layer overparameterized network of width $ m = r^* + h $,
   we describe the geometry of the global minima manifold precisely:
   it consists of a union of a number $ T(r^*, m ) $ of affine subspaces of dimension $ \geq h $ and it is connected.
   Furthermore, we show that the global minima manifold contains \textit{all} the zero-loss points for smooth activation functions satisfying a technical condition and in the presence of infinitely many data points with full support of the input space.
    \item
    The symmetries of the network generate symmetry-induced critical points, such as saddle points, which may prevent the convergence to a global minimum (see Figure~\ref{fig:converge-or-not}). We find a surprising scaling relation between the number of subspaces formed by the symmetry-induced critical points and the number of subspaces making up the global minima:
\begin{itemize}
\item When the number of additional neurons satisfies $h\ll r^* $ (i.e. at the beginning of the overparameterized regime), the number of subspaces that make up the global minima manifold is much {\em smaller} than the number of subspaces that make up the symmetry-induced critical points.
In this sense, there is a proliferation of saddles and the global minima manifold is `tiny'.
\item Conversely, when $h\gg r^*$ (i.e. we are far into or within the overparameterized regime), the number of subspaces  that make up the global minima manifold is much {\em greater} than the number of subspaces that make up the symmetry-induced (non-global) critical points. In this sense the global minima manifold is `huge'.
\end{itemize}
\item One may worry that, by adding $ h $ neurons, a saddle of a network of width $ r $ could transform into a local minimum. However, we show that this is not the case and a saddle point in the smaller network transforms into symmetry-induced saddle points.
\end{enumerate}

\subsection{Related Work}

A number of recent works have explored the typical path taken by a gradient-based optimizer.
For very wide ANNs, the gradient flow converges to a global minimum in spite of the non-convexity of the loss \cite{jacot2018neural, du2018gradient, chizat2018global, arora2019fine, du2019gradient, lee2019wide, lee2020finite}. 
First-order gradient algorithms provably escape strict saddles \cite{jin2017escape, lee2019first},
although they can face an exponential slowdown around these saddles \cite{du2017gradient}.
For pruned ANNs, the training with typical (random) initialization does not reach any global minimum, in spite of their presence in the landscape \cite{frankle2018lottery}.

Another body of work focuses on the geometric investigation of neural network landscapes.
\citet{dauphin2014identifying} suggested a proliferation of saddles in ANN landscapes through an analogy with high-dimensional Gaussian Processes.
Other models have been proposed to understand the general structure of ANN landscapes inspired by statistical physics \cite{geiger2019jamming}, and via high-dimensional wedges \cite{fort2019large}.
These model-based empirical works focus mainly on the Hessian spectrum at the critical points.

Another line of work suggests that global minima found by stochastic gradient descent are connected (i.e. there is a path linking arbitrary two minima along which the loss increases only negligibly) via simply parameterized low-loss curves \cite{draxler2018essentially, garipov2018loss}
or line segments \cite{sagun2017empirical, frankle2020linear, fort2020deep}.
Theoretical work limited to ReLU-type activation functions, showed that in overparameterized networks, all global minima lie in a connected manifold   \cite{freeman2016topology, nguyen2019connected}, however without giving a geometrical description of this manifold.
\citet{cooper2020critical} studied the geometry of a subset of the manifolds of critical points.
\citet{kuditipudi2019explaining} showed that the global minima for ReLU networks, for which {\em half} of the neurons can be dropped without incurring a significant increase in loss, are connected via piecewise linear paths of minimal cost.

In this paper, we show that adding or removing a {\em single} neuron radically changes the connectedness without any change in loss. We are the first to prove the connectivity of the global minima manifold for continuously differentiable activation functions.
The focus on symmetries in our work is similar to that of \cite{fukumizu2000local, brea2019weight, fukumizu2019semi} regarding the critical points coming from neuron replications. In an orthogonal direction, \citet{kunin2020neural, gluch2021noether} present a catalog of symmetries appearing in deep networks, which however does not include the permutation symmetry.
To the best of our knowledge, this work is the first to study the scaling of the number of critical points in ANN landscapes as a function of the overparameterization amount.
A key challenge to overcome is the numerous
equivalent
arrangements of neurons inside the network.

\textbf{Notation.} For $ m \geq 1$, set $ [m] = \{ 1, \ldots, m \} $ and let $ S_m $ denote the symmetric group on $ m $ symbols, i.e. the set of permutations of $[m]$.
For a  permutation $\pi \in S_m$ and $ D \geq 1$, the map $ \Perm_{\pi} : \R^{Dm} \to \R^{Dm} $ permutes the units $ \vartheta_i \in \R^D $ of a point $ \ptheta = (\vartheta_1, \ldots, \vartheta_m) $ according to $\pi$, i.e. $ \Perm_{\pi} \ptheta = (\vartheta_{\pi(1)}, \ldots, \vartheta_{\pi(m)}) $; we sometimes use $ \ptheta_\pi := \Perm_\pi \ptheta $.

\section{Symmetric Losses}\label{sec:sym-loss}

\begin{figure}
     \begin{center}
    \includegraphics[width=0.3\textwidth]{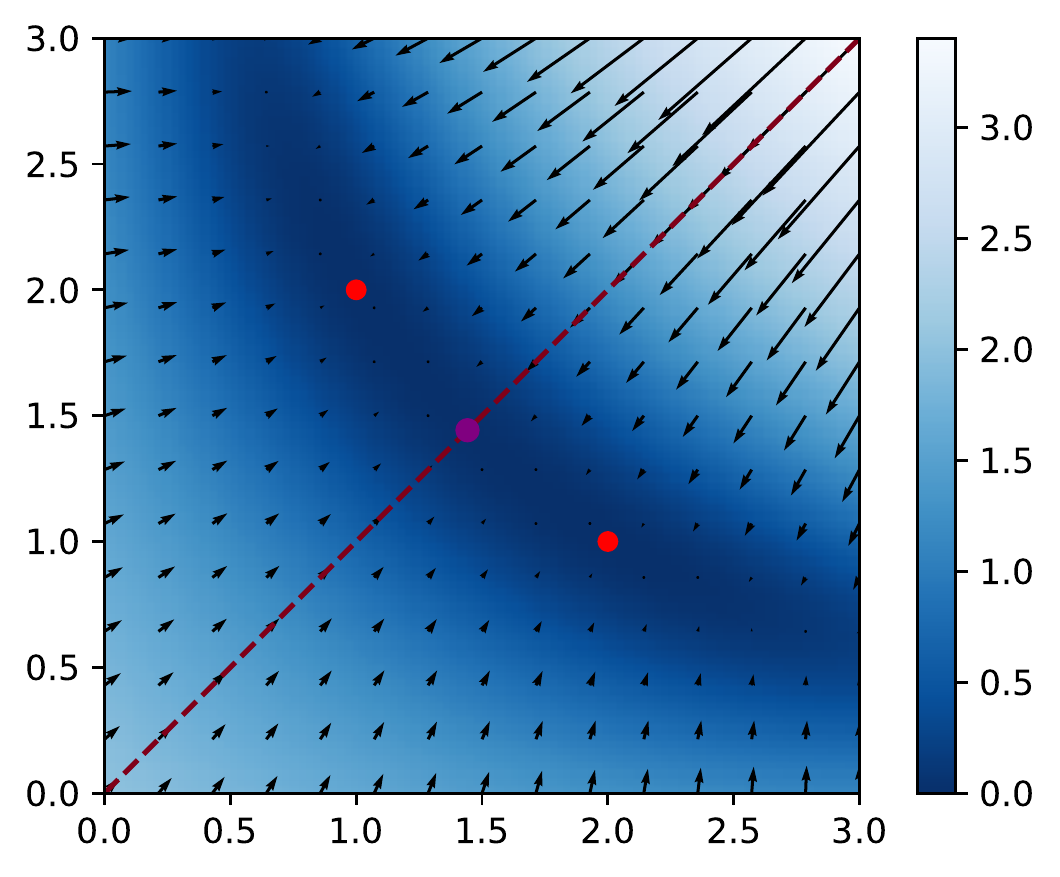}
  \end{center}
  \vspace{-0.3cm}
  \caption{\label{fig:no-gradient-outside} {\small \textit{No gradient pointing outside of a symmetry subspace.} The gradient flow of a permutation-symmetric loss $ L(w_1, w_2)= \log(\frac{1}{2} ((w_1 + w_2 - 3)^2 + (w_1 w_2 - 2)^2 ) + 1) $. Red: permutation-symmetric global minima, purple: saddle, dashed line: the symmetry subspace.}}
\end{figure}

Numerous machine learning models involve permutation-symmetric parameterizations: mixture models,
matrix factorization, and neural networks.
In this section, we abstract away the particular parameterization of these models and focus on the implications of permutation symmetry on the gradient flow. In particular, the discussion here is general and applies to ANNs which is the main focus of this paper.

\begin{definition}\label{def:sym-loss} A loss function $ L^m : \R^{Dm} \to \R $ is a \textbf{\emph{symmetric loss}}\footnote{When the units are $ 1 $-dimensional, symmetric losses are symmetric functions \cite{kung2009combinatorics, sagan2013symmetric}.} on $ m $ units if it is a $ C^1 $ function and if for any $\pi \in S_m$ and  any $ \ptheta = (\vartheta_1, \vartheta_2, \ldots, \vartheta_m) $ with $ \vartheta_i \in \R^D $, we have \vspace{-0.3cm}
  \begin{align*}
      L^m( \ptheta ) = L^m( \Perm_\pi \ptheta  ).
  \end{align*}
\end{definition}

The term \emph{unit} may refer to a Gaussian vector in the context of Gaussian mixture models, to a factor in the context of matrix factorization,
or to a neuron in the context of neural networks.
The symmetry subspaces are defined by the constraint that at least two units are identical:
\begin{definition} Let $ i_1, \ldots, i_k \in [m] $ be distinct indices. The \textbf{\emph{symmetry subspace}} $ \H_{i_1, \ldots, i_k} $ is defined as
\begin{align*}
    \H_{i_1, \ldots, i_k} := \{ (\vartheta_1,  \ldots, \vartheta_m) \in \R^{Dm}: \vartheta_{i_1} = \cdots = \vartheta_{i_k} \}.
\end{align*}
\end{definition}

As each constraint $ \vartheta_i = \vartheta_j $ suppresses $ D $ degrees of freedom, we have $ \dim(\H_{i_1, \ldots, i_k}) = D (m - k + 1) $. The largest symmetry subspaces are $ \H_{i, j} $'s: any other symmetry subspace is the intersection of such subspaces.

Let $ \prho: \Rpz \to \R^{Dm} $ denote the gradient flow under a symmetric loss
  \begin{align}
        \dot{\prho}(t) = - \nabla L^m (\prho(t))
  \end{align}
for $ t \geq 0 $ and for a given initialization $ \prho(0) $.
In Figure \ref{fig:no-gradient-outside}, we observe that the gradient on the symmetry subspace is tangent to it. In general, the gradient components of a symmetry subspace pointing to neighbor regions cancel out due to permutation symmetry.

\begin{lemma}\label{invariant-manifolds} Let $ L^m : \R^{Dm} \to \R $ be a symmetric loss on $ m $ units thus a $C^1$ function and let $ \prho: \Rpz \to \R^{Dm} $ be its gradient flow.
If $ \prho (0) \in \H_{i_1, \ldots, i_k } $, the gradient flow stays inside the symmetry subspace, i.e. $ \prho (t) \in \H_{i_1, \ldots, i_k } $ for all $ t > 0 $.
If $ \prho (0) \notin \mathcal{H}_{i,j} $ for all $i \neq j \in [m]$, that is outside of all symmetry subspaces, the gradient flow does not visit any symmetry subspace in finite time.
\end{lemma}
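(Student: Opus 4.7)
The plan is to exploit the equivariance of the gradient under permutations: since $L^m = L^m \circ \Perm_\pi$ and $\Perm_\pi$ is an orthogonal (permutation) matrix, the chain rule yields
\[
\Perm_\pi \nabla L^m(\ptheta) = \nabla L^m(\Perm_\pi \ptheta) \qquad \text{for every } \pi \in S_m.
\]
I would then observe that for any $\ptheta \in \H_{i_1, \ldots, i_k}$ and any permutation $\pi$ fixing $[m] \setminus \{i_1, \ldots, i_k\}$, one has $\Perm_\pi \ptheta = \ptheta$, whence $\Perm_\pi \nabla L^m(\ptheta) = \nabla L^m(\ptheta)$. This forces the blocks $[\nabla L^m(\ptheta)]_{i_1}, \ldots, [\nabla L^m(\ptheta)]_{i_k}$ to coincide, so that $-\nabla L^m$ is tangent to $\H_{i_1, \ldots, i_k}$ at every point of the subspace.

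For the first part, I would argue by symmetrizing the flow. Set $\prho_\pi(t) := \Perm_\pi \prho(t)$; differentiating and using the equivariance gives $\dot{\prho}_\pi(t) = \Perm_\pi \dot{\prho}(t) = -\Perm_\pi \nabla L^m(\prho(t)) = -\nabla L^m(\prho_\pi(t))$, so $\prho_\pi$ solves the same gradient ODE as $\prho$. If $\prho(0) \in \H_{i_1,\ldots,i_k}$ and $\pi$ permutes only $\{i_1,\ldots,i_k\}$, then $\prho_\pi(0) = \prho(0)$; by uniqueness of solutions to the gradient ODE we obtain $\prho_\pi(t) = \prho(t)$ for all $t \geq 0$. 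As this holds for every such $\pi$, the trajectory $\prho(t)$ is fixed by every permutation of $\{i_1,\ldots,i_k\}$, i.e.\ $\prho(t) \in \H_{i_1,\ldots,i_k}$.

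The second part follows by reversing time. Suppose for contradiction that $\prho(0) \notin \H_{i,j}$ for all $i \neq j$ while $\prho(t_0) \in \H_{i,j}$ for some $t_0 > 0$ and some pair $(i,j)$. Define $\tilde{\prho}(s) := \prho(t_0 - s)$ for $s \in [0,t_0]$; this solves the backward ODE $\dot{\tilde{\prho}}(s) = +\nabla L^m(\tilde{\prho}(s))$ with $\tilde{\prho}(0) = \prho(t_0) \in \H_{i,j}$. Since $+\nabla L^m$ inherits the same equivariance, the tangency-plus-symmetrization argument above applies verbatim, so $\tilde{\prho}(s) \in \H_{i,j}$ for all $s \in [0,t_0]$. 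In particular $\prho(0) = \tilde{\prho}(t_0) \in \H_{i,j}$, contradicting the hypothesis. Since every symmetry subspace is contained in some $\H_{i,j}$ with $i \neq j$, this covers all symmetry subspaces.

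The main obstacle I anticipate is a regularity one: $C^1$ of $L^m$ alone does not quite ensure Picard--Lindel\"of uniqueness of the gradient ODE, which the symmetrization step invokes. I would either strengthen the hypothesis to locally Lipschitz $\nabla L^m$ (which holds for the ANN losses of interest later in the paper), or bypass uniqueness by a direct Gr\"onwall-type estimate on $\phi(t) := \sum_{j=2}^{k}\|\vartheta_{i_j}(t) - \vartheta_{i_1}(t)\|^2$, using that $\phi(0)=0$ and that the relevant gradient-block differences vanish on the symmetry subspace and are controlled nearby.
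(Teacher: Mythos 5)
Your proof follows essentially the same route as the paper's appendix proof of this lemma: equivariance of the gradient under permutations, tangency of $\nabla L^m$ along the symmetry subspace (your symmetrization of the flow is a compact repackaging of the paper's block-equality observation plus uniqueness), and a time-reversal argument for the second claim. Your observation about regularity is well-founded and in fact sharper than the paper's own treatment: the paper invokes Picard--Lindel\"of citing only that $\nabla L^m$ is continuous, but Picard--Lindel\"of requires a local Lipschitz condition on the vector field; mere continuity yields Peano existence, not uniqueness. Your proposed fixes --- assuming $\nabla L^m$ locally Lipschitz (which the ANN losses studied later in the paper do satisfy when $c$ and $\sigma$ are $C^2$), or a Gr\"onwall estimate on $\phi(t) = \sum_{j\geq2}\|\vartheta_{i_j}(t)-\vartheta_{i_1}(t)\|^2$ --- are both sound ways to close a gap that the paper's own argument is silently subject to.
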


\begin{remark}
Lemma~\ref{invariant-manifolds} does not exclude the following scenario:
if there is a critical point on the symmetry subspace that is attractive in some directions orthogonal to the symmetry subspace, the gradient flow can reach it in infinite time (i.e. at convergence).
\end{remark}

\section{Foundations: Invariances in 2-Layer ANNs}\label{sec:two-layers}

In this section, we discuss the implications of the permutation symmetry for the ANN landscapes and identify further invariances in network function parameterization. This approach will allow us to describe the precise geometry of the global minima manifold (Subsection~\ref{sec:minima-manifold}) and the symmetry-induced critical points (Subsection~\ref{sec:critical-manifold}) in overparameterized ANNs.

Let $ f^{(2)} : \R^{\din} \to \R^{\dout} $ be a two-layer ANN of width $ m $
\begin{align*}
    \ftwo = \sum_{i=1}^m a_i \sigma( w_i \cdot x)
\end{align*}
where $ \ptheta = (w_1, \ldots, w_m, a_1, \ldots, a_m ) $ is an \emph{$ m $-neuron point} in the parameter space $ \R^{Dm}$ with $ w_i \in \R^{\din} $ and $ a_i \in \R^{\dout} $ so that $ D= \din + \dout $ and $ \sigma : \R \to \R $ is a $ C^1 $ activation function with $ \sigma(x) \neq 0 $ for all $ x \in \R $.\footnote{We exclude homogenous activation functions, such as ReLU and linear function (for linear networks), where the scaling invariance should also be considered.}
Sometimes, we will write $ \ptheta^m := \ptheta $ to emphasize the number of neurons.

The training dataset of size $ N $ is denoted by $ \Trn = \{ (x_k, y_k) \}_{k=1}^N $ where $ x_k \in \R^{\din}, y_k \in \R^{\dout} $. The training loss $ L^m : \R^{Dm} \to \R $ is
\begin{align}\label{eq:loss}
      L^m(\ptheta) = \frac{1}{N} \sum_{(x, y) \in \Trn} c(\ftwo, y)
\end{align}
where $ c : \R^{\dout} \times \R^{\dout} \to [0, +\infty)$ is a single-sample loss that is $ C^1 $ in its first component and $ c(\hat{y}, y) = 0 $ if and only if $ \hat{y} =  y $, such as the least-squares loss or the logistic loss.

\begin{figure}[t!]
  \begin{center}
    \includegraphics[width=0.5\textwidth]{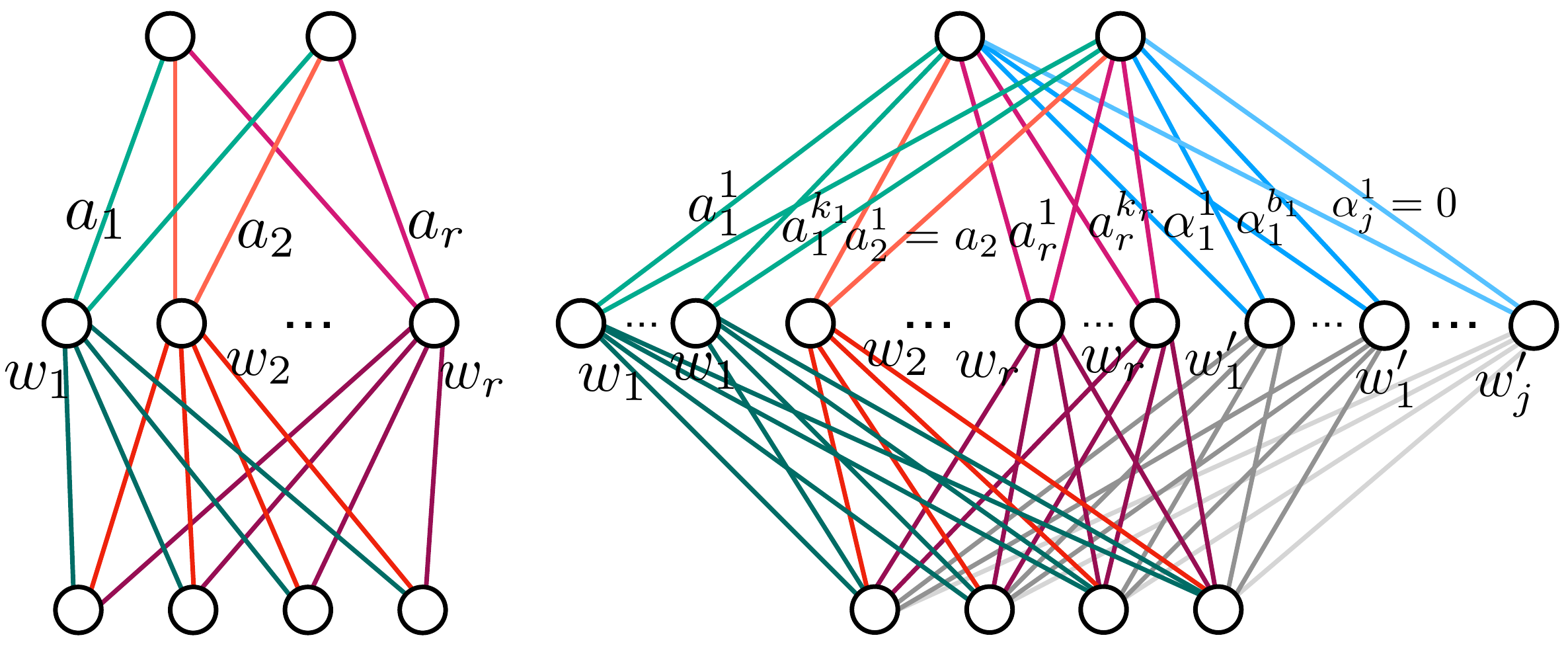}
  \end{center}
  \vspace{-0.2cm}
  \caption{\label{fig:expansion-schem} {\bf Left}: Parameters $\ptheta^r $ of  an irreducible point in a network of $ r $ neurons with $ w_i \neq w_j $ for all $ i \neq j $ and $ a_i \neq 0 $ for all $ i $. \textbf{Right:} example of a reducible point in $ \subs $ in an expanded network of $m>r$ neurons. The incoming weight vector of the first neuron is replicated $k_1$ times, the second one only once, etc.}
\end{figure}

Since $ \ftwo $ is invariant under the permutation of neurons $ \vartheta_i := [w_i, a_i] \in \R^D $, the concatenation of the incoming and outgoing weight vectors,
and both $ \sigma $ and $ c $ are $ C^1 $, $ L^m $ is a symmetric loss (Def.~\ref{def:sym-loss}).
Therefore the symmetry subspaces $ \vartheta_i = \vartheta_j $ are invariant under the gradient flow (Lemma~\ref{invariant-manifolds}).

ANN functions exhibit further invariances:

\begin{definition}\label{def:irreducible} We call an $ m $-neuron point $ \ptheta^m $ \textbf{\emph{irreducible}} if it has $ m $ distinct incoming weight vectors $ w_i $, and no zero outgoing weight vector, i.e. $ a_i \neq 0 $ for all $i\in[m]$.
Otherwise we say that $ \ptheta^m $ is \textbf{\emph{reducible}}.
\end{definition}

Any reducible point $ \ptheta^m $ is equivalent to a point $ \ptheta^{m-1} $ with $ (m-1) $-neurons in that they produce the same function $ f^{(2)}(x | \ptheta^m) = f^{(2)}(x | \ptheta^{m-1}) $ where $ \ptheta^{m-1} $ is
\begin{enumerate}
\item $ (w_2, w_3, \ldots, w_m, a_1 + a_2, a_3, \ldots, a_m) $ if $ w_1 = w_2 $,
\item $ (w_2, w_3, \ldots, w_m, a_2, a_3, \ldots, a_m) $ if $ a_1 = 0 $.
\end{enumerate}

Note that because of permutation symmetry, the above reductions hold whenever two incoming weight vectors are equal, i.e. $ w_i = w_j $, or any one of the outgoing vectors is zero $ a_i = 0 $.
Moreover, if $ \ptheta^{m-1}$ is also reducible, we can continue dropping neurons as above until we find an irreducible point $ \ptheta^{r} $.
Equivalently (going in the opposite direction), an irreducible $ r $-neuron point $$ \ptheta^r = (w_1, \ldots, w_r, a_1, \ldots, a_r) $$ yields an affine subspace of equal loss points in a network with width $ m \geq r $ (see Figure~\ref{fig:expansion-schem}):

\begin{definition}\label{def:affine-subs-Gamma} For $r\geq 1,\, j\geq0$ with $r+j\leq m$,  let  $ s = (k_1, ..., k_r, b_1, ..., b_j ) $ be an $ (r + j) $-tuple of integers such that
$ \summ(s) := k_1 + ... + k_r + b_1 + ... + b_j = m $
with $ k_i \geq 1 $ and $ b_i \geq 0 $. The \textbf{\emph{affine subspace}} $ \subs $ of an irreducible point $\ptheta^r$ is
\begin{align}\label{eq:affine-subs0}
    \{ &(\underbrace{w_1, ..., w_1}_{k_1}, ..., \underbrace{w_r, ..., w_r}_{k_r}, \underbrace{w_1', ..., w_1'}_{b_1},
    ..., \underbrace{w_j', ..., w_j'}_{b_j}, \notag \\
    &a_1^1, ..., a_1^{k_1}, ...,
   a_r^1, ..., a_r^{k_r}, \alpha_1^1, ..., \alpha_1^{b_1},
   ..., \alpha_j^1, ..., \alpha_j^{b_j}): \notag \\
   &\text{where} \ \sum_{i=1}^{k_t} a_t^{i} = a_t \ \text{for} \ t \in [r] \ \text{and} \
   \sum_{i=1}^{b_t} \alpha_t^{i} = 0 \ \text{for} \ t \in [j] \}.
\end{align}
\end{definition}

Note that all $ \ptheta^m \in \subs $ implement the same function:
\begin{align*}
    f^{(2)}(x | \ptheta^m) &= \sum_{t=1}^r \sum_{i=1}^{k_t} a_t^i \sigma(w_t \cdot x) + \sum_{t=1}^j \sum_{i=1}^{b_t} \alpha_t^i \sigma(w'_t \cdot x) \\
    &=  f^{(2)}(x | \ptheta^r).
\end{align*}

\begin{figure*}[t!]
    \centering
    \hspace{0.5cm}
    \subfloat{
        \includegraphics[width=0.22\textwidth]{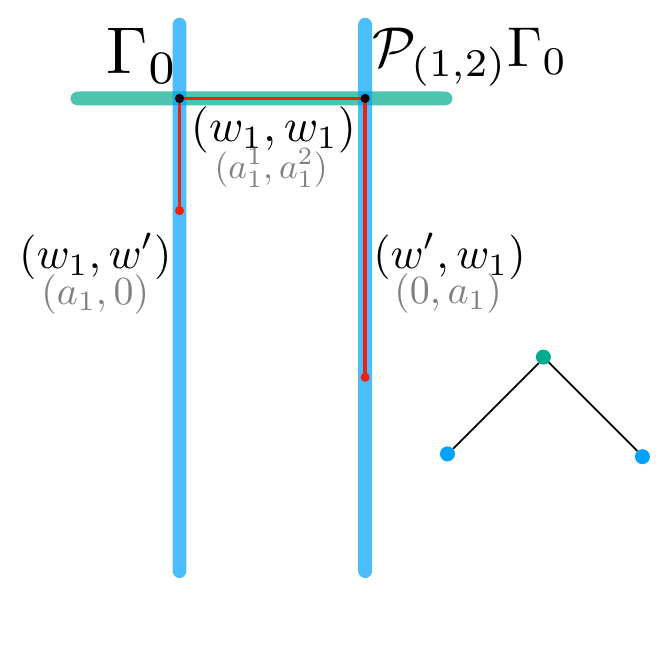}
        }
        \hspace{0.4cm}
    \subfloat{
        \includegraphics[width=0.38\textwidth]{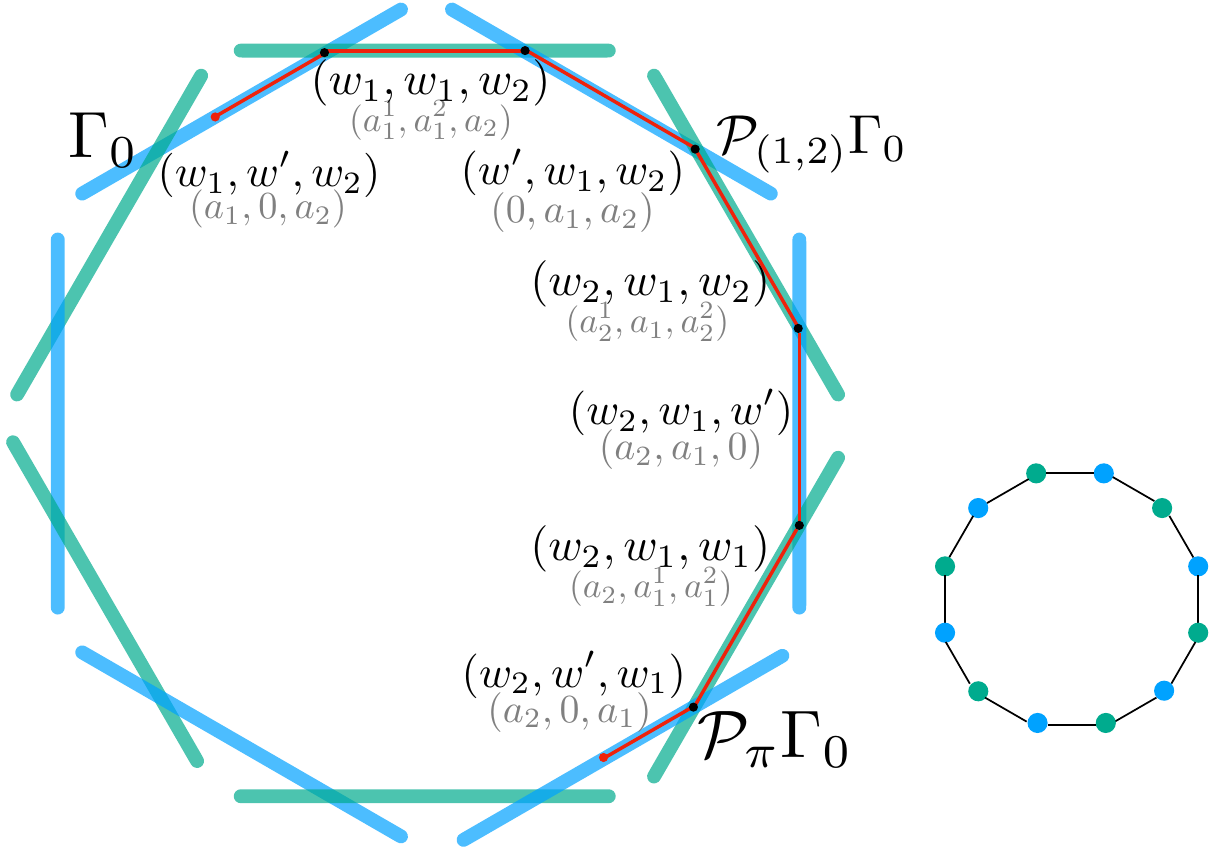}
        }
        \hspace{0.5cm}
    \subfloat{
        \includegraphics[width=0.26\textwidth]{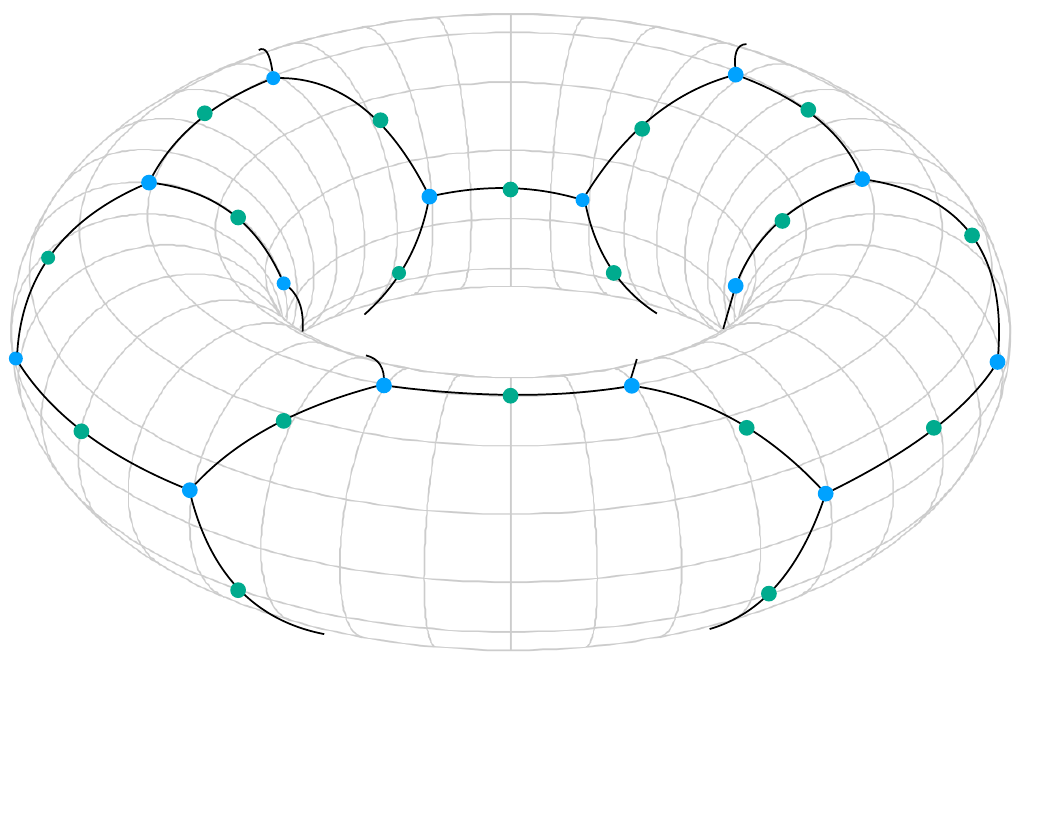}
          }
        \linebreak
      \vspace{-0.2cm}
      {\small \hspace{0.3cm} \textbf{(a)} $ \Theta_{1 \to 2}(\ptheta^{1}) $  \hspace{3.7cm} \textbf{(b)} $ \Theta_{2 \to 3}(\ptheta^{2}) $ \hspace{3.8cm} \textbf{(c)} $ \Theta_{3 \to 4}(\ptheta^{3}) $ }
    \caption{ \textit{The geometry of the expansion manifold $ \Theta_{r \to m} $ with $m=r+1$ and the connectivity graph of the affine subspaces.}  The arrangement of the subspaces is demonstrated geometrically only in (a)-(b), but their connectivity graph is shown in all three cases.
    Blue subspaces have one vanishing output weight, green subspaces have two identical incoming weight vectors.
    \textbf{(a)} Case of a network with two hidden neurons with parameters $(w_1, w', a_1, 0)$ that is reducible to a network with a single hidden neuron. The base subspace $ \Gamma_0 $ is connected to a neighbor subspace $ \Perm_{(1,2)} \Gamma_0 $ via three line segments: we first shift $w'$ towards $w_1$ while keeping the other parameters fixed and then move $a_1^1$ from $a_1$ to $0$ while keeping $a_1^1+ a_1^2=a_1$.
    The connectivity graph (bottom right) shows each subspace as an appropriately colored dot.
    \textbf{(b)} Case of a network with three hidden neurons with parameters $(w_1, w', w_2, a_1, 0, a_2)$ that is reducible to a network with two hidden neurons. $ \Gamma_0 $ is connected to any other subspace $ \Perm_\pi \Gamma_0 $ through transitions from one neighbor to the next.
    Note that there are $ T(2, 3) = 12 $ subspaces.
    \textbf{(c)} The connectivity graph of subspaces for the expansion $ 3 \to 4 $, there are $ T(3, 4) = 60 $ subspaces ($ 24 $ blue and $ 36 $ green), where each blue subspace is connected to three green subspaces and each green subspace is connected to two blue subspaces. \label{fig:connected-manifold}}
\end{figure*}

Neurons with incoming weight vectors $ w' $ and outgoing weight vectors adding up to zero are called in the following {\bf `zero-type' neurons}.
Moreover, the network function remains invariant under any permutation of neurons in Definition~\ref{def:affine-subs-Gamma}. Each permutation defines another affine subspace
$$
 \Perm_\pi \subs := \{ \Perm_\pi \ptheta^m : \ptheta^m \in \subs \ \text{and} \ \pi \in S_m \}
$$
where $ \Perm_\pi $ permutes the neurons $ \vartheta_i = [w_i, a_i] $ of $ \ptheta^m $. We call the union of these affine subspaces the expansion manifold of $ \ptheta^r $:
\begin{definition}\label{def:exp-man} For $ r \leq m $, the \textbf{\emph{expansion manifold}}  $ \expman \subset \R^{Dm} $
of an irreducible $ r $-neuron point $ \ptheta^r $ is defined by
\begin{align*}
   \expman := \bigcup_{\substack{s = (k_1, \ldots, k_r, b_1, \ldots, b_j) \\ \pi \in S_m}} \Perm_\pi \subs,
\end{align*}
where $ s $ is a tuple with $k_i \geq 1 $, $ b_i \geq 0$ such that $ \summ(s) = m $.
\end{definition}

Since $ \expman $ is an equal-loss manifold, the gradient flow can cross it at most for once.
Therefore $\expman$ is not an invariant manifold like the symmetry subspaces.
Next, we describe the precise geometry of the expansion manifolds
\begin{theorem}\label{thm:geometry-of-exp-manifold} For $ m \geq r $, the expansion manifold $ \expman $ of an irreducible 
point $\ptheta^r $ consists of exactly\footnote{$ \binom{n_1 + \cdots + n_r}{n_1, ..., n_r} $ denotes the coefficient $\frac{ (n_1 + \cdots + n_r)! }{n_1! ... n_r!}$.}
 \begin{align*}
 T(r, m) := \sum_{j = 0}^{m-r}  \sum_{\substack{\summ(s) = m \\ k_i \geq 1, b_i \geq 1}} \binom{m}{k_1, ..., k_r, b_1, ..., b_j} \frac{1}{c_1! ... c_{m-r}!}
 \end{align*}
distinct affine subspaces (none is including another one) of dimension at least $\min(\din,\dout)(m-r)$, where $ c_i $ is the number of occurences of $ i $ among $ (b_1, ..., b_j) $.

For $ m > r $, $ \expman $ is connected: any pair of distinct points $ \ptheta, \ptheta' \in \expman $ is connected via a union of line segments $ \pgamma : [0,1] \to \expman $ such that $ \pgamma(0) = \ptheta $ and $ \pgamma(1) = \ptheta' $.
\end{theorem}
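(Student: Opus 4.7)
The theorem has two parts: the counting and dimension of the affine pieces (Part~I), and the connectivity for $m > r$ (Part~II).

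For \textbf{Part~I}, the plan is to set up a bijection between the distinct affine subspaces comprising $\expman$ and certain labeled set partitions of $[m]$. Each subspace corresponds to an ordered $r$-tuple of nonempty \emph{labeled} blocks $(K_1, \ldots, K_r)$ (identified with the incoming weights $w_1, \ldots, w_r$ of $\ptheta^r$), together with an \emph{unordered} multiset of nonempty blocks corresponding to the zero-type neurons, whose $w'_i$ labels are interchangeable and hence do not distinguish subspaces. Counting these labeled partitions by multinomial coefficients, with the $1/(c_1! c_2! \cdots c_{m-r}!)$ factor to kill the redundancy of permuting unlabeled blocks of equal size, yields exactly $T(r,m)$. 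Distinctness of the subspaces (and hence non-containment) follows from the observation that from a generic point in a subspace one can recover the underlying partition by grouping neurons with equal incoming weight and identifying which groups carry $w_1, \ldots, w_r$ by inspecting the values; two subspaces attached to different partitions must therefore differ. Finally, I compute the dimension directly: the $j$ free vectors $w'_1, \ldots, w'_j$ contribute $j\,\din$ parameters, while the affine constraints $\sum_i a_t^i = a_t$ and $\sum_i \alpha_t^i = 0$ leave $(m-r-j)\,\dout$ free outgoing parameters, hence $\dim \subs = j\,\din + (m-r-j)\,\dout \ge (m-r)\min(\din, \dout)$.

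For \textbf{Part~II}, the plan is to establish connectivity by exhibiting two \emph{elementary moves}, each realized as a single line segment whose image lies in $\expman$: \emph{(i) Merging}: starting from a point with a zero-type block at incoming weight $w'_i$, continuously slide $w'_i$ onto some $w_t$ (or onto another $w'_j$); the interior of the segment stays in the original subspace, while the endpoint also lies in a neighboring subspace in which the two blocks have merged. \emph{(ii) Splitting}: the reverse move, starting from a block whose outgoing weight vectors can be partitioned into two subsets summing respectively to $a_t$ and to $0$, and translating the common incoming weight of the detached neurons away from $w_t$. I then pick a reference subspace, for instance $\Gamma_{(1,\ldots,1,\,m-r)}(\ptheta^r)$ with one neuron per $w_t$ and a single zero-type block of size $m-r$, and show that every $\Perm_\pi \subs$ is reachable from it by a finite sequence of these moves (merging all zero-type blocks into a single one, then splitting off $k_t-1$ neurons from each oversized labeled block into the zero-type pool, and permuting). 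Since each elementary move is a line segment in $\expman$ and each affine subspace is itself path-connected, concatenating segments yields a piecewise-linear path between any two points of $\expman$.

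The \textbf{main obstacle} is checking that each elementary-move segment genuinely lies inside $\expman$ throughout and that its endpoint lands in the intended target subspace. During merging, one must verify that the straight-line transit of $w'_i$ in $\R^{\din}$ does not accidentally coincide with any other $w_s$ or $w'_j$ along the way; this is easily arranged by perturbing the segment to a generic path. The splitting move requires solving the elementary linear-algebraic problem of partitioning a set of outgoing vectors summing to $a_t$ into two subsets with prescribed sums $a_t$ and $0$, which is feasible as soon as the block is large enough. Once these two facts are in hand, the combinatorial bookkeeping of the ``connectivity graph'' on subspaces (in the spirit of Figure~\ref{fig:connected-manifold}) becomes routine.
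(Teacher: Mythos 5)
Your Part~I tracks the paper's own argument essentially verbatim: identify each subspace with a labeled partition of the incoming-weight multiset (an ordered assignment of slots to $w_1,\dots,w_r$ plus an unordered family of zero-type blocks), count by the multinomial coefficient and divide by $c_1!\cdots c_{m-r}!$ to kill the interchangeability of zero-type labels, and compute $\dim\subs=(m-r-j)\dout+j\din\ge\min(\din,\dout)(m-r)$. For Part~II the building blocks also coincide (line segments sliding a zero-type incoming weight, plus free motion inside each convex affine piece), but the scaffolding differs: the paper argues by induction on $m$ --- a three-segment path realizing each transposition at the base case $m=r+1$, then an inductive step that freezes one zero-type coordinate $w_0$, maps the remaining data onto $\Theta_{r\to m}(\ptheta^r)$, and invokes the hypothesis --- whereas you anchor at the reference piece $\Gamma_{(1,\dots,1,m-r)}$ and reach every $\Perm_\pi\subs$ by a merge--rearrange--split sequence. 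Both routes are valid; the induction is a little more economical since the hypothesis absorbs most of the bookkeeping, while your direct reachability argument matches the connectivity-graph picture of Figure~\ref{fig:connected-manifold} more closely.

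Two corrections to Part~II. First, the perturbation you invoke to keep $w'_i$ from passing through some $w_s$ or $w'_j$ along the way is unnecessary, and would in fact leave the class of \emph{line} segments the statement requires. Definition~\ref{def:affine-subs-Gamma} imposes no constraint $w'_t\ne w_s$: the $w'_t$ are free variables, so a coincidence along the slide still gives a point of $\subs\subset\expman$, and the straight segment never exits the manifold. Second, the split is not feasible ``as soon as the block is large enough'': for a generic output configuration no subset of the $a^i$'s sums exactly to $a_t$ (try $k=2$ with $a^1=a^2=a_t/2$). It works because each $\subs$ is convex, so one first moves inside it to a configuration where the to-be-detached outputs sum to zero and only then slides their common incoming weight; that preliminary intra-subspace motion is itself a line segment and should be stated explicitly as part of the construction, as the paper does with its middle segment $\pupsilon^{(1)}_2$.
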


\emph{Proof (Sketch).} The number of affine subspaces $ T $ is equal to the distinct permutations of the incoming weight vectors
$ (w_1, \ldots, w_r, w_1', \ldots, w_j') $
for all possible tuples $ s $ where $ w_i $'s are distinct and $ w_i' $'s are dummy variables representing \emph{zero-type} neurons (the neurons that do not contribute to the network function since their outgoing weight vectors sum to zero).
The normalization factor $ 1 / c_1! c_2! \cdots c_{m-r}! $  cancels  the repetitions coming from the zero-type neurons $ (w_1', \ldots, w_j') $.
For example for the standard case $ m = r $, there is no room for zero-type neurons.
As a result we have
 $$ T(r, r) = \sum_{\substack{k_1 + ... + k_r = r \\ k_i \geq 1}} \binom{r}{k_1, ..., k_r} =  \binom{r}{1, ..., 1} = r! $$
distinct subspaces of dimension $ \min(\din, \dout)(m - r) = 0 $.

For the general case $ m > r $,
the proof for connectivity follows from the following observations.
We start from a base subspace $ \Gamma_0 = \subs $, where there is a zero-type neuron with outgoing weight vector exactly zero\footnote{If all zero-type neurons are part of a group with more than one neuron, we can choose the first neuron in a group and set its outgoing weight vector to zero while respecting the condition in Eq.~\ref{eq:affine-subs0}.} at position $ i^* $.
The neighbor subspaces $ \Perm_{(i^*, i)} \Gamma_0 $, where $ (i^*, i) \in S_m $ is a transposition that permutes two neurons only, are connected to the base subspace via three line segments  (Figure~\ref{fig:connected-manifold}-a).
Since any permutation is a composition of transpositions, permuted subspaces $ \Perm_\pi \Gamma_0 $ can be reached via a union of line segments by going from one neighbor to the next  (Figure~\ref{fig:connected-manifold}-b). $\blacksquare$

\section{Overparameterized ANN Landscapes}

In this section, we study the geometry of the global minima manifold and the critical subspaces, i.e. affine subspaces containing only critical points, in two-layer overparameterized neural networks.
In particular, we show how the affine subspaces that form the global minima manifold are connected to one another (Subsection~\ref{sec:minima-manifold}).
We then find a hierarchy of saddles induced by permutation symmetries, which we call symmetry-induced critical points (Subsection~\ref{sec:critical-manifold}). Finally, we compare the number of affine subspaces that form the global minima manifold with the number of those that contain symmetry-induced critical points (Subsection~\ref{sec:scaling}).
Generalizations to multi-layer networks are discussed in Section~\ref{sec:multiple-layers}.

We assume that there is a minimal width $ r^* $ such that $ \ptheta_* $ achieves zero loss, i.e. $ L^{r^*}(\ptheta_*) = 0 $, that the point $ \ptheta_* $ is unique up to permutation, and that any network with width $ r^* - 1 $ has loss $ >0 $ at every point.
We call the wider networks with width $ m > r^* $ {\bf overparameterized} and the narrower networks with width $ r < r^* $ {\bf underparameterized}.
Note that $ \ptheta_* $ is irreducible by minimality of $ r^* $.

\subsection{The global minima manifold}\label{sec:minima-manifold}

Applying Theorem~\ref{thm:geometry-of-exp-manifold} to the expansion manifold of a global minimum $ \ptheta_* $ of the minimal-width network, we obtain a connected manifold of global minima in an overparameterized network of width $ m $:

\begin{corollary}\label{cor:exp-manifold} In an overparameterized network with width $  m > r^* $, the expansion manifold of global minima $ \expmang $ is connected.
\end{corollary}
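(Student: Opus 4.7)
The plan is to reduce the corollary to a direct application of Theorem~\ref{thm:geometry-of-exp-manifold}, with the only real content being the verification that $\ptheta_*$ is irreducible and hence that the theorem applies.

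First, I would verify that $\ptheta_*$ is an irreducible $r^*$-neuron point. Suppose for contradiction that $\ptheta_*$ is reducible, i.e.\ either $w_i = w_j$ for some $i \neq j$, or $a_i = 0$ for some $i$. Then, as noted after Definition~\ref{def:irreducible}, we can construct an $(r^*-1)$-neuron point $\ptheta^{r^*-1}$ such that $f^{(2)}(x \mid \ptheta_*) = f^{(2)}(x \mid \ptheta^{r^*-1})$ for all $x$. Consequently $L^{r^*-1}(\ptheta^{r^*-1}) = L^{r^*}(\ptheta_*) = 0$, contradicting the assumption that every point of a width-$(r^*-1)$ network has strictly positive loss. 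Hence $\ptheta_*$ is irreducible.

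Next, because $m > r^*$ and $\ptheta_*$ is irreducible, Theorem~\ref{thm:geometry-of-exp-manifold} applies verbatim to $\expmang = \Theta_{r^* \to m}(\ptheta_*)$: the expansion manifold is a union of $T(r^*, m)$ affine subspaces, and any two points in it are joined by a finite union of line segments contained in the manifold, so $\expmang$ is (path-)connected.

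Finally, I would observe that this manifold indeed consists of global minima: by the computation following Definition~\ref{def:affine-subs-Gamma}, every $\ptheta^m \in \Theta_{r^* \to m}(\ptheta_*)$ implements the same function as $\ptheta_*$, so $L^m(\ptheta^m) = 0$ and each such $\ptheta^m$ is a global minimum of $L^m$. Since there is no further computation to do, the only ``obstacle'' worth flagging is the sanity check that minimality of $r^*$ genuinely forces irreducibility — which is precisely where the hypotheses of Theorem~\ref{thm:geometry-of-exp-manifold} hook onto the hypotheses of this corollary.
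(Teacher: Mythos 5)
Your proof is correct and follows the same route the paper takes: note that minimality of $r^*$ forces $\ptheta_*$ to be irreducible (the paper states this explicitly just before the corollary), then invoke Theorem~\ref{thm:geometry-of-exp-manifold} with $m > r^*$ to get connectivity of $\expmang$. The extra checks you include — the contradiction argument for irreducibility and the verification that all points in $\expmang$ achieve zero loss — are correct and merely make explicit what the paper leaves implicit.
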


We have found a connected manifold $ \expmang $ of global minima. Furthermore, since $ \expmang $ is an expansion manifold, its geometry is precisely as described in Theorem~\ref{thm:geometry-of-exp-manifold}, and illustrated in Figure~\ref{fig:connected-manifold}. The next question is whether $ \expmang $ contains all the zero-loss points.

In the remaining part of this subsection, we give a positive answer to this question in a specific setting. We consider a modified loss function:
\begin{align*}
    L_\mu^m(\ptheta) = \int_{\R^{\din}} c(f^{(2)}(x | \ptheta), f^*(x)) \mu (dx),
\end{align*}
where $ \mu $ is an input data distribution with support $ \R^{\din} $, and $ f^{*} : \R^{\din} \to \R^{\dout}$ is a true data-generating function. The assumption on the activation $\sigma$ in Theorem~\ref{thm:all-global-minima} below is only required for this theorem but not in Subsections \ref{sec:critical-manifold} or \ref{sec:scaling}. We find that there is no global minimum point outside of the expansion manifold $ \expmang $ for the modified loss $ L_\mu^m $ and for a certain class of activation functions (see Figure~\ref{fig:new-activ} for an example):

\begin{theorem}\label{thm:all-global-minima} Suppose that the activation function $ \sigma $ is $C^\infty$, that $\sigma(0)\neq0 $, and that $\sigma^{(n)}(0)\neq0$ for infinitely many even and odd values of $n$ (where $\sigma^{(n)}$ denotes the $n$-th derivative of $ \sigma $). For $ m > r^* $,
let $ \ptheta $ be an $ m $-neuron point, and $ \ptheta_* $ be a unique $ r^* $-neuron global minimum up to permutation, i.e. $ L_\mu^{r^*}(\ptheta_*) = 0 $.
If $ L_\mu^m(\ptheta) = 0 $, then $ \ptheta \in \expmang $. (See Appendix-B.3 for the proof.)
\end{theorem}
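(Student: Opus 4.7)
The plan is to reduce the statement to a linear-independence property for the translates $\sigma(z \cdot x)$, and to then read off the combinatorial structure of $\expmang$ from the resulting matching. First, because $c(\hat y, y) \geq 0$ vanishes only at $\hat y = y$ and is continuous, and because $\mu$ has full support on $\R^{\din}$, the assumption $L_\mu^m(\ptheta) = 0$ together with continuity of $f^{(2)}(\cdot \mid \ptheta)$ and $f^*$ forces the pointwise identity
\[
\sum_{i=1}^m a_i\, \sigma(w_i \cdot x) \;=\; \sum_{j=1}^{r^*} a_j^*\, \sigma(w_j^* \cdot x), \qquad \forall\, x \in \R^{\din}.
\]
I then aggregate the $m$ neurons of $\ptheta$ by identical incoming weight vector: let $u_1, \ldots, u_p$ be the distinct values among $w_1, \ldots, w_m$, and set $A_\ell := \sum_{i:\, w_i = u_\ell} a_i$. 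The identity rewrites as $\sum_{\ell=1}^p A_\ell\, \sigma(u_\ell \cdot x) = \sum_{j=1}^{r^*} a_j^*\, \sigma(w_j^* \cdot x)$, and the target is to show that each $w_j^*$ appears among the $u_\ell$'s with matching sum $A_\ell = a_j^*$, while every other $u_\ell$ has $A_\ell = 0$ --- precisely the form demanded by Definition~\ref{def:affine-subs-Gamma}.

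The central ingredient is a linear-independence lemma: for any finite family of pairwise distinct vectors $z_1, \ldots, z_q \in \R^{\din}$, the functions $x \mapsto \sigma(z_k \cdot x)$ are linearly independent on $\R^{\din}$. To prove it I would assume $\sum_k b_k\, \sigma(z_k \cdot x) \equiv 0$ and pick $v \in \R^{\din}$ outside the finite union of hyperplanes $\{v : (z_k - z_{k'}) \cdot v = 0\}_{k \neq k'}$, so that the real scalars $\zeta_k := z_k \cdot v$ are pairwise distinct. Substituting $x = t v$ and differentiating $n$ times at $t = 0$ gives $\sigma^{(n)}(0)\, \sum_k b_k\, \zeta_k^n = 0$ for every $n \geq 0$. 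By hypothesis, $\sigma^{(n)}(0) \neq 0$ for infinitely many $n$ of each parity, so $\sum_k b_k\, \zeta_k^n = 0$ holds along an infinite sequence of even $n$ and along an infinite sequence of odd $n$. Setting $M := \max_k |\zeta_k|$, dividing by $M^n$, and sending $n \to \infty$ along the even sequence yields $\sum_{\zeta_k = M} b_k + \sum_{\zeta_k = -M} b_k = 0$; the same limit along the odd sequence yields $\sum_{\zeta_k = M} b_k - \sum_{\zeta_k = -M} b_k = 0$. Distinctness of the $\zeta_k$'s makes each sum have at most one term, so $b_k = 0$ for every index with $|\zeta_k| = M$; a downward induction on the surviving values of $|\zeta_k|$ finishes the lemma.

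With linear independence in hand, I rewrite the aggregated identity as a single vanishing sum $\sum_{v \in \{u_\ell\}_\ell \cup \{w_j^*\}_j} C_v\, \sigma(v \cdot x) = 0$ indexed over the set of all distinct vectors that appear. The lemma forces every $C_v = 0$. Since $a_j^* \neq 0$ for all $j$ (as $\ptheta_*$ is irreducible), no $w_j^*$ can appear alone; hence each $w_j^*$ equals some $u_\ell$ with $A_\ell = a_j^*$, and every remaining $u_\ell$ has $A_\ell = 0$. Translating back to individual neurons of $\ptheta$, they partition into $r^*$ groups of sizes $k_j \geq 1$ with common incoming weight $w_j^*$ and outgoing weights summing to $a_j^*$, plus some number of zero-type groups, each sharing a new common incoming weight $w'_t$ with outgoing weights summing to zero. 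After a suitable permutation $\pi$ this is exactly a point of $\Gamma_s(\ptheta_*)$ for the tuple $s$ read off from the partition, so $\ptheta \in \Perm_\pi \Gamma_s(\ptheta_*) \subset \expmang$.

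The main obstacle is the linear-independence lemma. Because $\sigma$ is only $C^\infty$ rather than analytic, a convergent-series argument is unavailable and the conclusion must be extracted purely from the discrete sequence of Taylor coefficients $\sigma^{(n)}(0)$. The double-parity hypothesis is exactly what is needed: even $n$'s detect only $|\zeta_k|$, so odd $n$'s are required to separate $\zeta_k$ from $-\zeta_k$; without them an even activation such as $\cos(\cdot)$ would satisfy $\sigma(z \cdot x) = \sigma((-z) \cdot x)$ and the lemma, hence the whole theorem, would fail.
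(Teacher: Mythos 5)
Your proof is correct and follows the paper's overall strategy: reduce the identity of network functions to a linear-independence statement for the translates $\sigma(z\cdot x)$, collapse to one dimension by choosing a direction $v$ that makes the slopes $\zeta_k=z_k\cdot v$ pairwise distinct (the paper uses the specific vector $t_\epsilon=(1,\epsilon,\dots,\epsilon^M)^{\mathrm T}$ for small $\epsilon$; your ``complement of finitely many hyperplanes'' choice is cleaner and sidesteps the dimension bookkeeping of $t_\epsilon$), and extract $\sum_k b_k\zeta_k^n=0$ from the $n$-th Taylor coefficient at $0$ along the two infinite parity subsequences. Where you genuinely diverge is the proof of the one-dimensional independence lemma (the paper's Lemma~B.3): the paper distinguishes four cases according to whether the two largest slopes satisfy $\zeta_1=-\zeta_2$ and how the top two coefficients compare in magnitude and sign, and handles each by a triangle-inequality estimate with a carefully chosen $n$. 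You instead normalize by $M^n$ with $M=\max_k|\zeta_k|$ and pass to the limit along the even and odd subsequences separately: the even limit yields that the coefficient at slope $+M$ plus the coefficient at slope $-M$ is zero, the odd limit yields that their difference is zero, so both vanish at once with no case split, and a downward induction over the remaining magnitude levels finishes. Two small points should be made explicit to seal the argument. A generic $v$ can still leave one $\zeta_k=0$ (inevitably if some $z_k=0$, which the grouping step does not preclude), and every $n\geq1$ derivative is blind to that term; it is disposed of at the bottom of the induction by $\sigma(0)\neq 0$, which is exactly why that hypothesis is in the statement. And the downward induction is well-founded because dropping the coefficients already shown to vanish does not disturb the identity $\sum_k b_k\zeta_k^n=0$. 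Your final translation of $C_v=0$ into the expansion-manifold structure, using irreducibility ($a_j^*\neq 0$) to force each $w_j^*$ to be matched by a nonempty group of student neurons, coincides with the paper's conclusion.
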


\begin{remark}
The function
$\sigma_{\alpha,\gamma}(x)
   = \sigma_{\text{soft}}(x) + \alpha \sigma_{\text{sig}}(\gamma x) $
   with $\alpha,\gamma>0$ (Figure~\ref{fig:new-activ})
   satisfies the conditions of Theorem~\ref{thm:all-global-minima}, but the standard softplus $\sigma_{\text{soft}}(x) = \ln[1+\exp(x)] $ or sigmoidal $\sigma_{\text{sig}}(x) = 1/ [1+\exp(-x)] $ functions do not.
 For these, the analysis must include additional invariances.
\end{remark}
\begin{remark}If a global minimum is found by gradient descent in overparameterized networks, then the final set of parameters can be classified into groups of replicated weight vectors according to Definition~\ref{def:affine-subs-Gamma} (Figure~\ref{fig:new-activ}). The classification can be exploited for pruning the network.
\end{remark}

\begin{figure}[t!]
    \newcommand{\setboundingbox}{
        \pgfresetboundingbox
    \path
      (current axis.south west) -- ++(-0.4in,-0.4in)
      rectangle (current axis.north east) -- ++(0.1in,0.1in);
  }

\pgfplotsset{width=4.25cm, height = 4.25cm}
\centering
\begin{tabular}{cc}
    \begin{tikzpicture}
        \begin{axis}[
            no marks,
            xlabel = $x$,
            ylabel = $\sigma_{1,4}(x)$,
            legend pos = north west,
            samples = 200,
            ]
            \addplot[very thick] {1/(1 + exp(-4*x)) + ln(exp(x) + 1)};
        \end{axis}
        \setboundingbox
    \end{tikzpicture}
    &
    \begin{tikzpicture}
        \begin{axis}[
            ybar stacked,
            xtick = {0, 1},
            xticklabels = {copies, 0-type},
            ylabel = number of neurons,
            xmin = -.5, xmax = 1.5,
            ]
            \addplot[draw = none, fill = gray] coordinates {(0, 363) (1, 0)};
            \addplot[draw = none, fill = gray]  coordinates {(0, 0) (1, 34)};
            \addplot[draw = none, fill = gray!20] coordinates {(0, 0) (1, 54)};
            \addplot[draw = none, fill = gray] coordinates {(0, 0) (1, 36)};
            \addplot[draw = none, fill = gray!20] coordinates {(0, 0) (1, 16)};
            \addplot[draw = none, fill = gray] coordinates {(0, 0) (1, 5)};
            \node at (.6, 0) {1};
            \draw (.7, 0) -- (.86, 17);
            \node at (.6, 50) {2};
            \draw (.7, 50) -- (.86, 61);
            \node at (.6, 100) {3};
            \draw (.7, 100) -- (.86, 106);
            \node at (.6, 150) {4};
            \draw (.7, 150) -- (.86, 132);
            \node at (.6, 200) {5};
            \draw (.7, 200) -- (.86, 142.5);
        \end{axis}
        \setboundingbox
    \end{tikzpicture} 
\end{tabular}
    \vspace{-0.35cm}
    \caption{\label{fig:new-activ}
  {\bf Left:} The function
  $\sigma_{\alpha,\gamma}(x)
  = \sigma_{\text{soft}}(x) + \alpha \sigma_{\text{sig}}(\gamma x) $
  satisfies the technical condition of Theorem~\ref{thm:all-global-minima}.
  With this activation function, data is generated by a teacher network of width 4. All 50 student networks with width 10 find a global minimum by reaching loss values below $ 10^{-16} $.
  {\bf Right:}
  The 500 = 50$\times$10 hidden neurons of all the 50 student networks are classified as copies of teacher neurons or zero-type neurons with vanishing sum of output weights. The zero-type neurons are further classified according to group size: there are 34 neurons with vanishing output weight (group size 1), 54 neurons that have a partner neuron with the same input weights and the sum of output weights equal to 0 (group size 2) etc. All zero-type neurons and replications of weight vectors can be pruned.
  }
\end{figure}
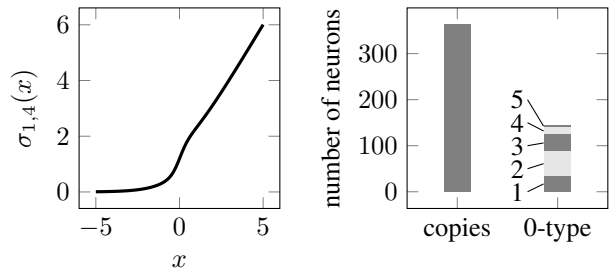

\begin{remark} \citet{kuditipudi2019explaining} construct an example of a finite-size dataset (in contrast with our infinite dataset framework) for two-layer overparameterized ReLU networks where they find \emph{discrete} global minima points.
\end{remark}

\subsection{Symmetry-induced critical points}\label{sec:critical-manifold}

In this subsection, we consider an overparameterized network with a fixed width $ m > r^*$ and study critical points in an expansion manifold $ \expmanfc $ where we assume that $ \ptheta^r_* $ is an irreducible critical point of an underparameterized network with width $ r < r^* $.
Observe that $ \ptheta^r_* $ is not a zero-loss point since $ r^* $ is the minimal width to achieve zero loss.
We consider only those points without zero-type neurons in $ \expmanfc $, we show that these have zero gradient, and therefore are critical points of $ L^m $.

\begin{definition} For $ r \leq m $, let $ s = (k_1, \ldots, k_r ) $ be an $ r $-tuple with $ k_i \geq 1 $ and $ \summ(s) = m $. The \textbf{\emph{symmetry-induced critical points}} are those in the set
\begin{align*}
   \expmanc &= \bigcup_{\substack{s = (k_1, \ldots, k_r) \\ \pi \in S_m}} \Perm_\pi \subsc
\end{align*}
where the critical (affine) subspace $ \subsc \subset \R^{Dm} $ of an irreducible critical point $ \ptheta^r_* = (w_1^*, ..., w_r^*, a_1^*, ..., a_r^*)$ is
\begin{align}\label{eq:affine-subs}
    &\{ (\underbrace{w_1^*, ..., w_1^*}_{k_1}, ..., \underbrace{w_r^*, ..., w_r^*}_{k_r}, \beta^{1}_{1} a_1^*, ..., \beta^{k_1}_{1} a_1^*, \notag \\
    & ..., \beta^{1}_{r} a_r^*,
    ..., \beta^{k_r}_{r} a_r^*):
   \sum_{i=1}^{k_t} \beta_t^{i} = 1 \ \text{for} \ t \in [r] \}.
\end{align}
\end{definition}

All points in $ \expmanc $ are critical points hence the name symmetry-induced `critical points':

\begin{proposition}\label{thm:crit-basins} For an irreducible critical point $ \ptheta^r_* $ of $ L^r $, $ \expmanc $ is a union of
  $$
   G(r, m) := \sum_{\substack{k_1 + \cdots + k_r = m \\ k_i \geq 1}} \binom{m}{k_1, ..., k_r}
  $$
distinct non-intersecting affine subspaces of dimension $ m\!-\!r $. All points in $ \expmanc $ are critical points of $ L^m $.
\end{proposition}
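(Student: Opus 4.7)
The plan is to fix an arbitrary point $\ptheta \in \subsc$, observe that $\ptheta$ realizes the same input--output function as the underparameterized critical point $\ptheta_*^r$, and then verify that the gradient of $L^m$ at $\ptheta$ factors cleanly through the gradient of $L^r$ at $\ptheta_*^r$, which vanishes by hypothesis. The extension to every permuted copy $\Perm_\pi \subsc$ is then immediate from the permutation-invariance of $L^m$. The counting and dimension statements will be a short combinatorial bookkeeping exercise, using irreducibility of $\ptheta_*^r$ to rule out intersections between distinct pieces.

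For the function-equivalence step, the constraint $\sum_{i=1}^{k_t} \beta_t^i = 1$ in \eqref{eq:affine-subs} gives
\begin{align*}
f^{(2)}(x \mid \ptheta) \;=\; \sum_{t=1}^r \Big(\sum_{i=1}^{k_t}\beta_t^i\Big)\, a_t^*\, \sigma(w_t^* \cdot x) \;=\; f^{(2)}(x \mid \ptheta_*^r).
\end{align*}
Combined with the chain rule, this yields, for each group $t \in [r]$ and copy $i \in [k_t]$,
\begin{align*}
\partial_{a_t^i} L^m(\ptheta) &\;=\; \partial_{a_t} L^r(\ptheta_*^r) \;=\; 0,\\
\partial_{w_t^i} L^m(\ptheta) &\;=\; \beta_t^i\,\partial_{w_t} L^r(\ptheta_*^r) \;=\; 0,
\end{align*}
the scalar $\beta_t^i$ in the second identity coming out because the output weight at position $(t,i)$ is exactly the scalar multiple $\beta_t^i a_t^*$. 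Since $L^m$ commutes with $\Perm_\pi$, the same vanishing holds on every $\Perm_\pi\subsc$.

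For the combinatorial part, I would encode each affine piece of the union by the surjective group-label map $\phi : [m] \to [r]$ that sends position $j$ to the index $\phi(j)$ of the irreducible weight vector $w_{\phi(j)}^*$ it carries. Setting $k_t = |\phi^{-1}(t)|$, the number of such surjections is precisely
\begin{align*}
\sum_{\substack{k_1+\cdots+k_r=m \\ k_i\geq 1}} \binom{m}{k_1,\ldots,k_r} \;=\; G(r,m).
\end{align*}
Two distinct surjections $\phi\neq\phi'$ yield non-intersecting subspaces: any common point would force $w_{\phi(j)}^* = w_{\phi'(j)}^*$ at every $j$, hence $\phi=\phi'$ by irreducibility of $\ptheta_*^r$ (the $w_t^*$ are pairwise distinct). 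Each subspace has dimension $m-r$, since the $m\din$ incoming coordinates are frozen and the outgoing weights form the affine subspace cut out by the $r$ scalar equations $\sum_i \beta_t^i = 1$ in the $m$ scalar unknowns $\{\beta_t^i\}$.

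The only non-routine step is the factorization $\partial_{w_t^i} L^m(\ptheta) = \beta_t^i\,\partial_{w_t} L^r(\ptheta_*^r)$, and it relies on two rigid features of the parametrization \eqref{eq:affine-subs}: the incoming weight at $(t,i)$ equals \emph{exactly} $w_t^*$ (so $\sigma'$ evaluates at the same pre-activation as in the smaller network), and the outgoing weight at $(t,i)$ is \emph{collinear} with $a_t^*$ (so the $w_t^i$-gradient becomes a scalar multiple of $\partial_{w_t} L^r(\ptheta_*^r)$). This is precisely where the restriction to scalar $\beta_t^i$ rescalings of $a_t^*$, rather than independent $\R^{\dout}$-vectors summing to $a_t^*$, is used; once it is in place, no assumption on $\sigma$ beyond $C^1$ is needed.
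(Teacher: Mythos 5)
Your proposal is correct and follows essentially the same approach as the paper's proof: function-equivalence $f^{(2)}(x\mid\ptheta)=f^{(2)}(x\mid\ptheta_*^r)$ plus a per-neuron factorization of gradients (which the paper writes out concretely via the matrix $U(w)$ and vector $V(w)$, while you phrase it via the chain rule), and the same direct multinomial count with irreducibility of the $w_t^*$ ruling out intersections. The surjection formulation of the counting is a minor notational variant of the paper's argument, and the dimension count matches.
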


Proposition~\ref{thm:crit-basins} shows that a critical point of a smaller network $ \ptheta^r_* $ expands into $ G(r, m) $ critical subspaces in the overparameterized network with width $ m $.
If $ \ptheta^r_* $ is a strict saddle, $ \expmanc $ contains only strict saddles, since the escape direction is preserved for affine transformations $ \gbar{\Gamma}_{s} $.

\begin{proposition}\label{thm:spectra} For $C^2$ functions $ c $ and $ \sigma $, for all $ \ptheta_*^m \in \expmanc $, the spectrum of the Hessian $ \nabla^2 L^m (\ptheta^m_*) $ has $ (m - r) $ zero eigenvalues.
Moreover, if $ \ptheta_*^r $ is a strict saddle, then all points in $ \expmanc $ are also strict saddles, i.e., their Hessian has at least one negative eigenvalue.
\end{proposition}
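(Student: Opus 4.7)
The plan is to prove the two claims separately, both times leveraging the structure established in Proposition~\ref{thm:crit-basins}: the entire affine subspace $\subsc$ consists of critical points of $L^m$ and has dimension $m-r$.

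For the zero eigenvalues, I would exploit that $\nabla L^m$ vanishes identically on the affine set $\subsc$. Any direction $u$ parallel to $\subsc$ therefore satisfies $\nabla L^m(\ptheta_*^m + tu) \equiv 0$ for small $t$; differentiating at $t=0$ gives $\nabla^2 L^m(\ptheta_*^m)\, u = 0$, so $u$ lies in the kernel of the Hessian. A basis of the $(m-r)$-dimensional parallel subspace produces $(m-r)$ linearly independent kernel vectors, hence $(m-r)$ zero eigenvalues.

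For the strict-saddle claim, my idea is to transfer the negative curvature of $\nabla^2 L^r(\ptheta_*^r)$ to $\nabla^2 L^m(\ptheta_*^m)$ through a \emph{replication} lift. Given $\ptheta_*^m \in \expmanc$, by permutation-invariance of $L^m$ I may assume $\ptheta_*^m \in \subsc$ with fixed coefficients $\beta_t^i$ satisfying $\sum_{i=1}^{k_t} \beta_t^i = 1$. Define the linear map $\psi : \R^{Dr} \to \R^{Dm}$ by
\begin{align*}
\psi(w_1,\ldots,w_r,a_1,\ldots,a_r) := (&\underbrace{w_1,\ldots,w_1}_{k_1},\ldots,\underbrace{w_r,\ldots,w_r}_{k_r}, \\
&\beta_1^1 a_1,\ldots,\beta_1^{k_1} a_1,\ldots,\beta_r^1 a_r,\ldots,\beta_r^{k_r} a_r).
\end{align*}
Then $\psi(\ptheta_*^r) = \ptheta_*^m$, and since the $k_t$ split outgoing weights sum back to $a_t$, both sides parameterize the same network function, giving $L^m \circ \psi \equiv L^r$. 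Moreover, $\psi$ is injective: the condition $\sum_i \beta_t^i = 1$ forces at least one nonzero $\beta_t^i$ per $t$, letting $a_t$ be recovered from the corresponding coordinate.

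The conclusion then follows from the second-order chain rule at a critical point. Since $\nabla L^m(\ptheta_*^m) = 0$, the first-order term drops out, leaving
\begin{align*}
v^\top \nabla^2 L^r(\ptheta_*^r)\, v \;=\; (\psi\, v)^\top \nabla^2 L^m(\ptheta_*^m)\, (\psi\, v)
\end{align*}
for every $v \in \R^{Dr}$. Choosing $v$ to be a negative eigenvector of $\nabla^2 L^r(\ptheta_*^r)$ (which exists by the strict-saddle hypothesis) makes the left side strictly negative; injectivity of $\psi$ ensures $\psi\, v \neq 0$, so $\psi\, v$ is a genuine negative-curvature direction of $\nabla^2 L^m(\ptheta_*^m)$. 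The main subtlety is in the first step, where I must deduce \emph{kernel} membership $H u = 0$ rather than the weaker quadratic-form statement $u^\top H u = 0$; this relies crucially on the fact that points near $\ptheta_*^m$ along $\subsc$ are \emph{all} critical, not merely isoloss.
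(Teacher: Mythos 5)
Your proof is correct and follows essentially the same strategy as the paper's: kernel vectors from the critical affine slice, and negative curvature transported from $\ptheta_*^r$ via a linear replication lift ($\psi$ in your notation, $U$ in the paper's) together with the chain rule. Worth noting: your argument for the first claim is actually tighter than the paper's own wording, which appeals only to $\subsc$ being an \emph{equal-loss} affine subspace — that alone gives merely $u^\top \nabla^2 L^m(\ptheta_*^m)\, u = 0$ for tangent $u$, not $\nabla^2 L^m(\ptheta_*^m)\, u = 0$; you correctly invoke Proposition~\ref{thm:crit-basins} so that $\nabla L^m$ vanishes identically along the slice and differentiating that yields genuine kernel membership. (A minor simplification in the second part: since $\psi$ is linear, $\nabla^2 L^r(\ptheta) = \psi^\top \nabla^2 L^m(\psi\ptheta)\,\psi$ holds at every $\ptheta$, so criticality of $\ptheta_*^m$ is not actually needed to kill the first-order chain-rule term.)
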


If $ \ptheta^r_* $ is a local minimum, \citet{fukumizu2019semi} show that the subspaces for which only one neuron is replicated ($ k_i > 1$, $ k_j = 1 $ for all $ j \neq i $) may contain both local minima and strict saddles depending on the spectrum of a matrix of derivatives [see their Theorem 11]. We expect a similar result to hold true for all subspaces in $ \expmanc $, including arbitrary replications.

\begin{remark} We explore a hierarchy between symmetry-induced critical points in $ \cup_{r < r^*} \expmanc $ in a network of width $ m $: \textbf{\emph{first-level}} saddles refer to symmetry-induced critical points that are equivalent to a minimum of a network of width  $ r^* - 1 $; more generally, \textbf{\emph{$ k $-th level}} saddles refer to those equivalent to a minimum of a network of width $ r^* - k $.
Adding neurons enables the network to reach a lower loss minimum thus higher-level symmetry-induced saddles usually attain higher losses.
We notice a similarity with Gaussian Process \cite{bray2007statistics} and spherical spin glass \cite{auffinger2013random} landscapes, where the higher-\emph{order}\footnote{The order of a saddle point is the number of negative eigenvalues of its Hessian.} saddles attain higher losses.
\end{remark}

Finally, we note that the dimensionality of the global minima subspaces $ \Perm_\pi \subsg $  and the critical subspaces $ \Perm_\pi \subsc $ differ, in particular in the way they depend on $ r $. What is common is that they are all `tiny' compared to the ambient dimensionality of the parameter space.
In the following subsection, we will thus focus on the comparison of the number of critical subspaces and that of global minima subspaces.

\subsection{Width-dependent comparison of the critical subspaces and the global minima subspaces}\label{sec:scaling}

\begin{figure}[t!]
    \centering
    \subfloat{
        \includegraphics[width=0.4\textwidth]{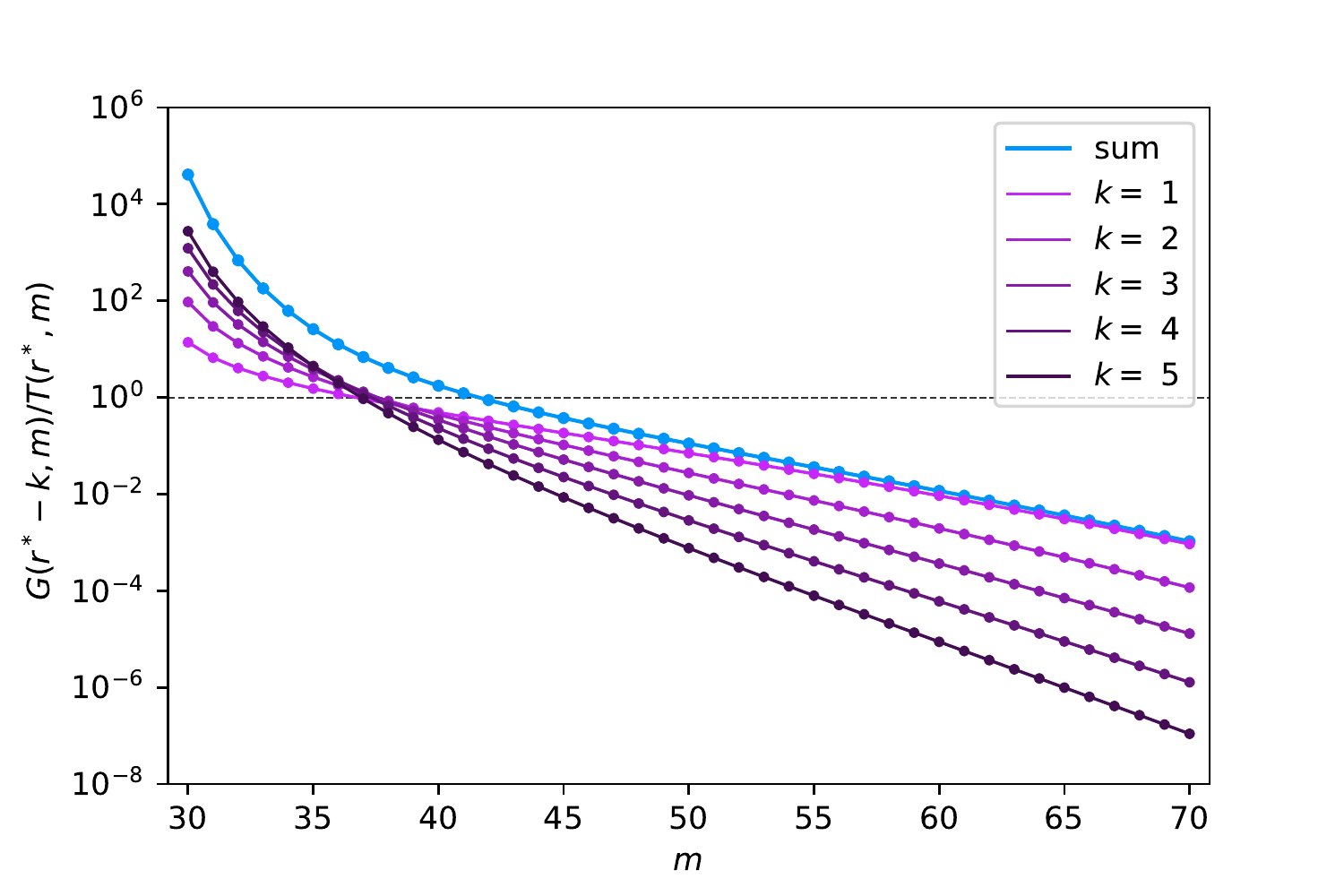}
          }
      \vspace{-0.2cm}
    \caption{\label{fig:scaling-of-critics}
    \textit{The ratio $ \mathrm{R}_k(r^*, m) $ of the multiplier for $k$-th level saddle $ G(r^* - k, m) $ to the number of global minima subspaces $ T(r^*, m) $ as the width $ m $ of the overparameterized network increases, plotted for a fixed width $r^*=30$ of the minimal network.}
    The ratio of all critical subspaces to the global minima subspaces $ \sum_{k=1}^{r^* - 1} a_k G(r^* - k, m) / T(r^*, m) $ is shown in blue assuming $ a_k = 1 $ for all $ k $.
    Note that for $ m \gg r^*$ the blue curve approaches the curve for $k=1$ indicating that only subspaces corresponding to first-level saddles are potentially relevant, yet the global minima subspaces clearly dominate.}
\end{figure}

In the loss landscape of an overparameterized network of width $ m $, we have the connected global minima manifold $ \expmang $ as well as many subspaces of symmetry-induced critical points in $ \expmanc $, where $ \ptheta^r_* $ is an irreducible critical point in a smaller network with some width $ r = r^* - k < r^* $.
In this subsection,  we count these subspaces and find
\vspace{-0.25cm}
\begin{itemize}
    \item $ T(r^*, m) $ global minima subspaces  (Corollary~\ref{cor:exp-manifold})
    \item $ G(r^* - k, m) a_k $ critical subspaces
    for all $ k = 1, \ldots, r^* - 1 $ (Proposition~\ref{thm:crit-basins})
\end{itemize}
\vspace{-0.25cm}
where $ a_k $ is the number of distinct\footnote{We say two irreducible critical points $ \ptheta_*^a $ and $ \ptheta_*^b $ are distinct if $ \ptheta_*^a \neq \Perm_\pi \ptheta_*^b $ for all applicable permutations $ \pi $.} irreducible critical points in a network with width $ r^* - k $ and where $ G(r^* - k, m) $ is the multiplier.

To compare the number of non-global critical subspaces with the number of global minima subspaces, we give closed-form formulas for $ G $ and $ T $. This is proven in Appendix-B.5 using Newton's series for finite differences \cite{milne2000calculus} and a counting argument:

\begin{proposition}\label{prop:formulas} For $ r \leq m $, we have
  \begin{align*}
        &G(r, m) = \sum_{i=1}^r \binom{r}{i}(-1)^{r - i}i^m \\
        &T(r, m) = G(r, m) + \sum_{u=1}^{m-r} \binom{m}{u} G(r, m-u) g(u)
  \end{align*}
where $ g(u) = \sum_{j=1}^{u} \frac{1}{j!} G(j, u) $.
\end{proposition}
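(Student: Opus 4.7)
The plan is to prove each formula by a direct combinatorial identification, followed by a standard counting identity. For $G(r,m)$: the multinomial coefficient $\binom{m}{k_1,\ldots,k_r}$ with $\sum_i k_i = m$ and each $k_i \geq 1$ counts the functions $f:[m]\to[r]$ taking value $i$ exactly $k_i$ times, so $G(r,m)$ is precisely the number of surjections from $[m]$ onto $[r]$. The closed form then follows by inclusion--exclusion on the set of labels that fail to be hit: if $A_\ell$ denotes the set of functions $[m]\to[r]$ avoiding label $\ell$, then $|A_{\ell_1}\cap\cdots\cap A_{\ell_k}| = (r-k)^m$, whence the surjection count equals $r^m - |\bigcup_\ell A_\ell| = \sum_{i=0}^r (-1)^{r-i}\binom{r}{i}i^m$, the $i=0$ term vanishing since $m\geq 1$. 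This is the $r$-th forward difference $\Delta^r x^m|_{x=0}$, exactly the Newton-series statement referenced in the excerpt.

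For $T(r,m)$: I would reorganize the defining sum by the total number $u := b_1 + \cdots + b_j$ of positions occupied by zero-type neurons, which ranges over $\{0,1,\ldots,m-r\}$. Using the multinomial factorization
\begin{align*}
\binom{m}{k_1,\ldots,k_r,b_1,\ldots,b_j} = \binom{m}{u}\binom{m-u}{k_1,\ldots,k_r}\binom{u}{b_1,\ldots,b_j},
\end{align*}
the factor $\binom{m}{u}$ pulls out (recording which $u$ of the $m$ positions are zero-type), and the sum of $\binom{m-u}{k_1,\ldots,k_r}$ over positive-part ordered compositions of $m-u$ into $r$ parts is $G(r,m-u)$ by the first formula. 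Everything else depends only on $u$ and collects into
\begin{align*}
\sum_{j\geq 1}\ \sum_{\substack{b_1+\cdots+b_j=u\\ b_i\geq 1}}\binom{u}{b_1,\ldots,b_j}\frac{1}{c_1!\cdots c_u!} \;=\; g(u),
\end{align*}
where $(b_1,\ldots,b_j)$ is to be understood as an unordered multiset of block sizes. Both sides count set partitions of $[u]$ into nonempty blocks, i.e.\ the Bell number $B(u)$: on the left, $\binom{u}{b_1,\ldots,b_j}/(c_1!\cdots c_u!)$ enumerates the set partitions with the prescribed block-size multiset; on the right, grouping set partitions by the number $j$ of blocks and using the identity $S(u,j) = G(j,u)/j!$ (dividing the labeled-block surjection count by the $j!$ relabelings of the blocks). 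Separating the $u = 0$ contribution, which is exactly $G(r,m)$, yields the stated recursion.

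The main obstacle is the combinatorial bookkeeping around the factor $1/(c_1!\cdots c_{m-r}!)$ and the labeled/unlabeled distinction for the zero-type groups. One has to confirm that $\binom{u}{b_1,\ldots,b_j}/(c_1!\cdots c_u!)$ is indeed the number of set partitions of $[u]$ with the specified multiset of block sizes (equivalently $u!/(\prod_t b_t!\cdot\prod_i c_i!)$), since this is where the ``dummy'' nature of the incoming weight vectors of zero-type groups enters. Once this identification is in place, the closed form for $G$ via inclusion--exclusion and the multinomial factorization above assemble the recursion for $T$ mechanically.
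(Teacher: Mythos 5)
Your proposal is correct, and the $G$-part and $T$-part differ from the paper in interestingly different ways.

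For $G(r,m)$ you take a genuinely different route. The paper derives the closed form by establishing the recursion $r^m = \sum_{\ell=0}^{r}\binom{r}{\ell}G(\ell,m)$, invoking Newton's forward-difference expansion of $p(x)=x^m$ at $x=r$, and then arguing by induction that the expression $\sum_{i}\binom{\ell}{i}(-1)^{\ell-i}i^m$ is the unique solution. You instead identify $G(r,m)$ directly as the number of surjections $[m]\twoheadrightarrow[r]$ and apply inclusion--exclusion on the set of missed labels, which proves the formula in one clean stroke. Both arrive at the observation that $G(r,m)=\Delta^{r}[x^m](0)$; your argument is shorter and more self-contained, while the paper's recursive identity \eqref{eq:recursion-G} is reused in the asymptotic analysis of Lemma~\ref{lem:limiting-behavior0}, so keeping it around is not wasted effort.

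For $T(r,m)$ your decomposition matches the paper's almost exactly: reorganize by $u=b_1+\cdots+b_j$, factor $\binom{m}{k_1,\ldots,k_r,b_1,\ldots,b_j}=\binom{m}{u}\binom{m-u}{k_1,\ldots,k_r}\binom{u}{b_1,\ldots,b_j}$, recognize the $k$-part as $G(r,m-u)$, and show the $b$-part collapses to $g(u)$. The only difference is in how you dispose of the normalization factor $1/(c_1!\cdots c_{m-r}!)$. The paper proves this as Lemma~\ref{lem:recounting}, a pure double-count relating ordered compositions $(b_1,\ldots,b_j)$ to multiplicity vectors $(c_1,\ldots,c_n)$, which yields $\tfrac{1}{j!}G(j,u)=\sum_{c}\tfrac{u!}{1!^{c_1}\cdots u!^{c_u}}\tfrac{1}{c_1!\cdots c_u!}$. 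You reach the same identity by recognizing $\tfrac{1}{j!}G(j,u)$ as the Stirling number $S(u,j)$ and $g(u)=\sum_j S(u,j)$ as the Bell number $B(u)$, then observing that $\binom{u}{b_1,\ldots,b_j}/(c_1!\cdots c_u!)$ is the count of set partitions of $[u]$ with prescribed block-size multiset. This is mathematically equivalent but a nicer framing, since it makes visible that $g(u)$ is the Bell number and hence that $T(r,m)$ is a weighted sum of surjection counts times Bell numbers. You correctly flag the one place care is needed --- the ordered-versus-unordered bookkeeping for the dummy zero-type blocks --- and your set-partition identification resolves it exactly as the paper's Lemma~\ref{lem:recounting} does, just phrased differently.
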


Using Proposition~\ref{prop:formulas}, we find the following asymptotic behaviors for $ G $ and $ T $:
\begin{lemma}\label{lem:limiting-behavior0} For any $ k \geq 0 $ fixed, we have,
  \begin{align*}
        G(m - k, m) \sim T(m - k, m) \sim \frac{m^k}{ 2^k k!} m!, \ \text{as} \ m \to \infty.
  \end{align*}
For any fixed $ r \geq 0 $, we have $ G(r, m) \sim r^m $ as $ m \to \infty $.
\end{lemma}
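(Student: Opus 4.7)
My plan is to separate the two regimes, handling each via the closed forms of Proposition~\ref{prop:formulas}. For the fixed-$r$ limit, the formula $G(r,m) = \sum_{i=1}^{r} \binom{r}{i}(-1)^{r-i} i^m$ is a finite sum in which the term $i=r$ contributes $r^m$ while every other term is bounded in absolute value by a constant depending only on $r$ times $(r-1)^m$. Dividing by $r^m$ and sending $m \to \infty$ immediately gives $G(r,m) \sim r^m$.

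For the diagonal asymptotic I would exploit the combinatorial meaning of $G$: the formula is inclusion--exclusion for surjections $[m] \twoheadrightarrow [r]$, so $G(r,m) = r!\, S(m,r)$ where $S$ is the Stirling number of the second kind. For $r = m-k$ I would parameterize partitions of $[m]$ into $m-k$ nonempty blocks by the profile $(n_s)_{s\geq 1}$ counting blocks of each size. The constraints $\sum n_s = m-k$ and $\sum s\, n_s = m$ combine to $\sum_{s \geq 2}(s-1)n_s = k$, so the number of non-singleton blocks $j = \sum_{s\geq 2} n_s$ satisfies $j \leq k$ with equality iff every non-singleton block is a pair. The unique maximal profile $n_1 = m-2k,\, n_2 = k$ yields $m!/[(m-2k)!\, 2^k\, k!] \sim m^{2k}/(2^k k!)$ partitions, while each of the finitely many remaining profiles contributes only $O(m^{k+j})$ with $j \leq k-1$. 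Aggregating, $S(m, m-k) \sim m^{2k}/(2^k k!)$, and hence $G(m-k, m) = (m-k)!\, S(m, m-k) \sim m^k\, m!/(2^k k!)$ using $m!/(m-k)! \sim m^k$.

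For $T(m-k, m)$ I would use the recursive identity from Proposition~\ref{prop:formulas}, which for $r = m-k$ truncates to a finite correction
\begin{align*}
T(m-k, m) - G(m-k, m) = \sum_{u=1}^{k} \binom{m}{u}\, G(m-k, m-u)\, g(u).
\end{align*}
Applying the previous step to $G(m-k, m-u) = G((m-u)-(k-u),\, m-u)$ and using the identity $\binom{m}{u}(m-u)! = m!/u!$, the $u$-th summand is asymptotic to $\frac{g(u)\, m^{k-u}\, m!}{u!\, 2^{k-u}(k-u)!}$. Comparing with $G(m-k, m) \sim m^k\, m!/(2^k k!)$, this is an $O(m^{-u})$ fraction of the leading term; since $k$ is fixed, the entire correction sum is $o(G(m-k, m))$, and therefore $T(m-k, m) \sim G(m-k, m)$.

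The step I expect to be most delicate is the Stirling estimate: one must verify uniformly that among the finitely many profiles with $j < k$ non-singleton blocks, each contributes strictly lower order in $m$, so that their aggregate is genuinely $O(m^{2k-1})$ and cannot accidentally rival the pair-dominated contribution. This is bookkeeping rather than a conceptual obstacle, but it is the only place in the argument where one has to be careful about how many profiles are being summed and how their coefficients depend on $k$.
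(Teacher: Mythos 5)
Your argument is correct and follows essentially the same route as the paper: the diagonal asymptotic isolates the same dominant configuration (exactly $k$ blocks of size two, rest singletons), with your Stirling-number framing $G(r,m)=r!\,S(m,r)$ and size-profile bookkeeping being a cosmetic repackaging of the paper's direct grouping of multinomial terms by the number of $k_i\geq 2$, and the $T(m-k,m)\sim G(m-k,m)$ step uses the same recursive identity and order-of-magnitude comparison. The only genuine streamline is the fixed-$r$ case, where you read $G(r,m)\sim r^m$ directly off the closed form by isolating the $i=r$ term and bounding the remainder by $O((r-1)^m)$, rather than running the paper's induction on the identity $r^m=\sum_{\ell}\binom{r}{\ell}G(\ell,m)$; both give the same conclusion, and your version is slightly more direct.
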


We are now ready to compare the number of global minima subspaces $ T $ with the number of critical subspaces $ G $ under the assumption that the minimal width $ r^* $ is large.
This is realistic since for a real-world dataset the network should be sufficiently wide to achieve zero loss.
Applying Lemma~\ref{lem:limiting-behavior0}, we find that the symmetry-induced critical points dominate the global minima in mildly overparameterized, and vice versa in vastly overparameterized networks (see Figure~\ref{fig:scaling-of-critics}). A mathematical analysis yields:
\newline
\vspace{-0.5cm}

\textbf{Mildly Overparameterized.} Let $ m = r^* + h $ for fixed $ h $. We have in the limit $ r^* \to \infty $ and for fixed $ k $ a ratio:
\begin{align}\label{eq:mild}
    \mathrm{R}_k(r^*, m) := \frac{G(r^* - k, m)}{T(r^*, m)} \sim \frac{1}{2^k (h+k) \cdots (h + 1)} (r^*)^k.
\end{align}
Thus, for a small amount of overparameterization, the multiplier of the $k$-th level saddles $ G(r^* - k, m) $ scales as $ (r^*)^k T(r^*, m) $, indicating a proliferation of saddles at a rate much larger than that of the global minima.
Related to this proliferation, we empirically encounter training failures (i.e. training halts before reaching a global minimum) for typical initializations in this regime (see Figure~\ref{fig:converge-or-not}). Moreover, we empirically find traces of approaching a saddle in gradient trajectories in narrow two-layer ANNs trained on MNIST (see Appendix).
\newline
\vspace{-0.5cm}

\textbf{Vastly Overparameterized.} For $ m $ very large, i.e. $ m \gg r^* $, we have
\begin{align}\label{eq:excessive}
    \sum_{k=1}^{r^* - 1} \mathrm{R}_k(r^*, m) a_k = \frac{\sum_{k=1}^{r^* - 1}G(r^*-k, m)  a_k}{T(r^*, m)} \leq \left(\frac{r^* - 1}{r^*}  \right)^m
\end{align}
if $ a_k $'s satisfy $ a_k \leq \binom{r^* - 1}{k-1} $.
Because the RHS of Eq.~\eqref{eq:excessive} decreases down to $ 0 $ as $ m \to \infty $ (at a geometric rate), the global minima dominate \textit{all} symmetry-induced critical points.
We note that there could be other critical points in addition to those generated by the symmetries.
The calculations above are presented in the Appendix.

\section{Multi-Layer ANNs}\label{sec:multiple-layers}

In this section, we introduce the expansion manifold for multi-layer networks that enables obtaining connectivity and counting results on the global minima manifold for multi-layer networks (i.e., generalizing Theorem~\ref{thm:geometry-of-exp-manifold} and Corollary~\ref{cor:exp-manifold}).
Finally, we compare the number of affine subspaces of the global minima and symmetry-induced critical points.
An ANN with $ L $ layers $ f^{(L)} : \R^{\din} \to \R^{\dout} $ with widths $ \br = (r_1, r_2, \ldots, r_{L-1}) $ is
\begin{align}
      \fL = W^{(L)} \sigma(W^{(L-1)} \cdots \sigma(W^{(1)} x) ))
\end{align}
where $ W^{(\ell)} \in \R^{ r_{\ell} \times r_{\ell-1} } $ for $ \ell = 1, \ldots, L $ with $ r_0 = \din $ and $ r_L = \dout $, the non-linearity $ \sigma $ is applied element-wise, and $ \ptheta = (W^{(L)}, \ldots, W^{(1)}) \in \R^{ d(\br)} $ is the vector of parameters of dimension $ d(\br) = \sum_{\ell=1}^{L} r_{\ell-1} r_\ell $.
Observing that any pair of weight matrices $(W^{(\ell)},W^{(\ell+1)})$ for $\ell=1,\dots,L-1$ forms a two-layer network
within the multi-layer network, we say that a multi-layer network is irreducible if all 
pairs $ (W^{(\ell)}, W^{(\ell+1)}) $ are irreducible.
\newline
\vspace{-0.5cm}

\textbf{The global minima manifold.} We define the expansion manifold of an irreducible network with widths $\br$ into larger widths $\bm$ by taking the sequential expansion manifolds of all pairs $(W^{(\ell)},W^{(\ell+1)})$. More precisely, we define the multi-layer expansion manifold as follows
 \begin{align}\label{eq:exp-man-multi}
      \Theta_{\br \to \bm}(\ptheta^{\br}) := \{ \pphi_{1} \in \R^{d(\bm)}: \pphi_{L-1} \in \Theta^{(L-1)}_{\br \to \bm} (\ptheta^{\br}), \notag \\
      \ldots,\pphi_{1} \in \Theta^{(1)}_{\br \to \bm} (\pphi_{2}) \}
  \end{align}
where $ \Theta^{(\ell)}_{\br \to \bm}(\pphi) $ substitutes the pair $(W^{(\ell)},W^{(\ell+1)})$ with those of a point in the usual expansion manifold (Def.~\ref{def:exp-man}).
Since each expansion leaves the output of the network unchanged, all points in this expansion have the same loss. Note that the order in which we take these expansions affects the final manifold; expanding from the last layer to the first one gives the largest final manifold. The same final manifold can be obtained via a `forward pass' if one considers  expansion up to an equivalence of the incoming weight vectors.

Assume that a minimal $ L $-layer network achieves a unique (up to permutation) global minimum point $ \ptheta_* $ with widths $ \br^* = (r^*_1, r^*_2, \ldots, r^*_{L-1}) $. In an overparameterized network of widths $\bm = (m_1, \ldots, m_{L-1})$ with $m_\ell > r^*_\ell $ for all $ \ell \in [L-1] $ (i.e. at least one extra neuron at every hidden layer), we find a connected manifold of global minimum, which is simply the multi-layer expansion manifold $ \Theta_{\br^*\to \bm}(\ptheta_*) $ of the minimum point $\ptheta_*$. The zero-loss expansion manifold $ \expmanmultig $ consists of the following number of distinct affine subspaces
\vspace{-0.3cm}
\begin{align*}
    \prod_{\ell=1}^{L-1} T(r^*_\ell , m_\ell ).
\end{align*}
\textbf{Symmetry-induced critical points.} Similarly, we can consider the symmetry-induced critical points for multi-layer networks by applying sequential expansions $\overline{\Theta}^{(\ell)}_{\br \to \bm}$ to all hidden layers. We note that applying this expansion to a pair $(W_*^{(\ell)},W_*^{(\ell+1)})$ of a critical point $ \ptheta_*^{\br} $ generates a manifold of critical points as in the two-layer case, hence these expansions preserve criticality. The number of affine subspaces in the set of symmetry-induced critical points is
\begin{align*}
    \prod_{\ell=1}^{L-1} G(r_\ell , m_\ell).
\end{align*}

{\bf Application}. Similar to Fig~\ref{fig:converge-or-not}-(d,e), we consider the case where a minimal $ L $-layer network with $ r^* $ neurons at each hidden layer reaches a global minimum point $ \ptheta_* $. Let us consider an overparameterization with $ m = r^* + h $ neurons at each hidden layer. The ratio of the number of critical subspaces of $k$-th level saddles to the global minima subspaces is
\begin{align*}
    \mathrm{R}_k(r^*, m)^{L-1} = \left(\frac{G(r^*-k, m)}{T(r^*, m)}\right)^{L-1},
\end{align*}
which is exponential in depth. Therefore in the mildly overparameterized regime, i.e. when $ h $ is small, we see that the ratio of the number of saddles to that of global minima  grows exponentially with depth. In other words, we observe that the dominance of the number of saddles is even more pronounced in the multi-layer case. For the vastly overparameterized regime, i.e. when $ h $ is large, we observe the opposite effect: the dominance of the number of global minima is stronger in the multi-layer case. Finally, we observe a width-depth trade-off in reaching a dominance of the global minima: one can either increase the width of a two-layer network so that the ratios $ \mathrm{R}_k(r^*, m) $ go down to $ 0 $; or increase the depth in a network where each layer is just large enough to guarantee  $ \mathrm{R}_k(r^*, m) < 1 $  which eventually decreases the total ratio down to $ 0 $.

\section{Conclusion \& Discussion}
In this paper, we explicitly characterize the geometry formed by the critical points in overparameterized neural networks.
For the global minima, we showed that under mild conditions they live in a manifold consisting of a number of connected affine subspaces.
We characterize a certain type of critical points, the so-called symmetry-induced critical points and we showed that they form an explicit number of affine subspaces.
From the theoretical point of view, it remains an open question whether there are other critical points in the overparameterized networks in addition to the symmetry-induced ones.
We also leave it to future work to study whether all symmetry-induced critical points are strict saddles or not.

Our main result quantifies the scaling of the numbers of global minima subspaces and the subspaces containing symmetry-induced critical points as the width grows.
In mildly overparameterized networks, the number of critical subspaces is much greater than that of the global minima subspaces, so that in practice, the gradient trajectories may get influenced by these saddles or even get transiently stuck in their neighborhood for a fraction of typical initializations. However, in vastly overparameterized networks, the number of global minima subspaces dominates that of the critical subspaces so that symmetry-induced saddles play only a marginal role.
From a practical point of view, our theoretical results pave the way to applications in optimization of non-convex neural networks loss landscapes via a combination of overparameterization and pruning.

\section*{Acknowledgements}

The authors thank the authors of \citet{lengyel2020genni} for a discussion about neural network invariances at the very beginning of this project.
The authors thank Valentin Schmutz for a discussion, Bernd Illing and Levent Sagun for their detailed feedback on the manuscript.
This work is partly supported by Swiss National Science Foundation (no. $200020\_184615$) and ERC SG CONSTAMIS. C. Hongler acknowledges support from the Blavatnik Family Foundation, the Latsis Foundation, and the NCCR Swissmap.

\bibliography{references}
\bibliographystyle{icml2021}

\icmltitlerunning{Appendix: Geometry of the Loss Landscape in Overparameterized Neural Networks}

\onecolumn
\appendix

\icmltitle{Appendix for \\ Geometry of the Loss Landscape in Overparameterized Neural Networks: Symmetries and Invariances}

We organize the Appendix as follows:
\begin{itemize}
  \item In Section~\ref{sec:exp-details}, we discuss the experimental details presented in the main text (and in the Appendix).
   \begin{itemize}
      \item In Subsection~\ref{sec:MNIST-exp}, we present numerical experiments on two-layer ANNs with various widths trained to implement the MNIST task. We investigate whether the gradient trajectories approach a saddle or not.
      \item In Subsection~\ref{sec:number-numerics}, we present a detailed numerical analysis of the number of critical subspaces $ G $, the number of global minima subspaces $ T $, and their ratio $ G / T $.
      \item In Subsection~\ref{sec:sym-loss-exs}, we present a catalog of toy symmetric loss landscapes.
  \end{itemize}
  \item In Section~\ref{sec:proof}, we present proofs of the theorems and propositions stated in the main text.
  \begin{itemize}
      \item In Subsection~\ref{sec:sym-losses2}, we present further properties of symmetric loss landscapes. In particular, we prove Lemma 2.1.
      \item In Subsection~\ref{sec:exp-man2}, we present the expansion manifold in two-layer ANNs. In particular, we prove the Theorem 3.1 in the main.
      \item In Subsection~\ref{sec:no-new-minima2}, we present a case where there is no new global minimum outside of the expansion manifold for some smooth activation functions. In particular we prove Theorem 4.2 in the main and discuss the implications for standard activation functions such as sigmoid and tanh.
      \item In Subsection~\ref{sec:sym-induced-crit2}, we present the symmetry-induced critical points. In particular, we prove the Proposition 4.3 and Proposition 4.4 in the main.
      \item In Subsection~\ref{sec:comb-analysis2}, we present the combinatorial analysis which is used to derive the closed-form formulas and the limiting behavior of the numbers $ T $ and $ G $. In particular, we prove the Proposition 4.5, Lemma 4.6, and we present the calculations in the Subsection 4.3 in the main.
      \item In Subsection~\ref{sec:multi-layer2}, we present some generalizations of the two-layer ANN results for the multi-layer ANNs.
  \end{itemize}
\end{itemize}

\section{Further Experimental Results} \label{sec:exp-details}

The code is available at \url{https://github.com/jbrea/SymmetrySaddles.jl}.
We first present an extension of the Figure 1 in the main below.

\begin{figure}[h!]
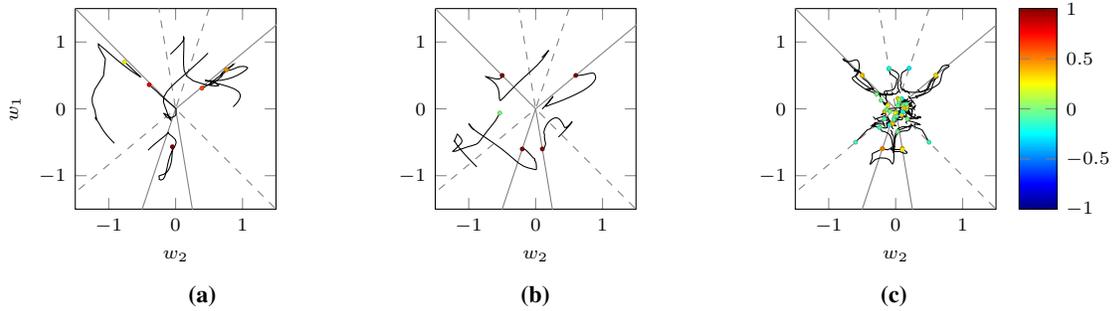

    \tikzset{font = \scriptsize, mark size = .7}
\pgfplotsset{width = 4.25cm, height = 4.25cm}
\centering
\begin{tabular}{llll}
    \input{figures/example1.tikz} &
    \hspace{1cm}\input{figures/example2.tikz} &
    \hspace{1cm}\input{figures/example3.tikz} \\
    \hspace{2.4cm} {\small\bf{(a)}} & \hspace{2.65cm} {\small\bf{(b)}} & \hspace{2.65cm} {\small\bf{(c)}}
\end{tabular}

    \vspace{-0.35cm}
    \caption{\textit{A student-teacher regression setting with 2D input and a two-layer teacher network with $r^*=4$ sigmoid neurons with incoming weight vectors shown as solid black lines and output weights set to $1$.} black lines: trajectories of the incoming weight vectors with dots marking their position at  convergence, color: output weights at convergence.
    For mild overparameterization, the algorithm may get stuck at a local minimum (a), or may find a global minimum (b); whereas for vast overparameterization it always converges to a global minimum (c).
    \textbf{(a\&b)} Mildly overparameterized networks with width $5$ do not reliably find the global minimum. \textbf{(c)} Vastly overparameterized networks with width $45$ find a global minimum by setting some of the output weights to zero or matching
    the incoming weight vectors with that of the teacher's up to a $\pm$ factor. \label{fig:converge-or-not-appendix}
    } 
\end{figure}

\textbf{Experimental details for the Figure 1 and 5 in the main.}
The input of the training data consists of 1681 two-dimensional points on a
regular grid
$\{(x_1, x_2)| 4x_1 = -20,\ldots, 20, 4x_2 = -20, \ldots, 20\}$  and
target values $y = \sum_{i=1}^4 \sigma(\sum_{j=1}^2w_{ij}x_j)$ with $w_{11} =
0.6, w_{12} = 0.5, w_{21} = -0.5, w_{22} = 0.5, w_{31} = -0.2, w_{32} = -0.6, w_{41} = 0.1, w_{42} = -0.6$.
Student networks were initialised with the Glorot uniform initialisation \cite{Glorot10}, trained with Adam \cite{Kingma14}, and gradients always computed on the full dataset, until reaching a loss below
$10^{-7}$. To reach efficiently the local minimum closest to the point found
with Adam, we continued optimizing the parameters of the student networks with
the sequential quadratic programming algorithm SLSQP of the NLopt package
\cite{Johnson} for a maximal duration of 1000 seconds. The final loss values of
all students that converged to a good solution was below $10^{-15}$ for every
random seed considered.
To obtain a non-trivial teacher network with 3 hidden layers (Fig.~1d-e), we fitted a network with widths 4-4-4 to the function $f(x_1, x_2) = \sin(2x_1) + x_1 + \cos(3x_2) - 0.4(x_2-1)^2$ evaluated on the same two-dimensional grid as above.
The teacher network does not reach zero loss on this data set.
To obtain target values for the student networks we evaluated this teacher network on the two-dimensional grid; hence there exist zero loss configurations for the student networks.

\vspace{-0.75cm}
\subsection{MNIST Experiments for Two-Layer ANNs with Various Widths}\label{sec:MNIST-exp}

\textbf{Experimental details for the Figure~\ref{fig:MNIST-exps} in the appendix.}
The training set consisted of the standard MNIST test set, i.e. 10'000 grayscale
images of 28x28 pixels with corresponding labels. The networks had a single
hidden layer of width $N$ with the softplus non-linearity $g(x) = log(exp(x) + 1)$.
The networks were initialised with the Glorot uniform initialisation
\cite{Glorot10} and trained on the cross-entropy loss with Adam and gradients
always computed on the full dataset. We measured the squared norm of the
gradients and the squared norm of the parameter updates.

\begin{figure}[h!]
    \centering
     \subfloat{
        \includegraphics[width=0.3\textwidth]{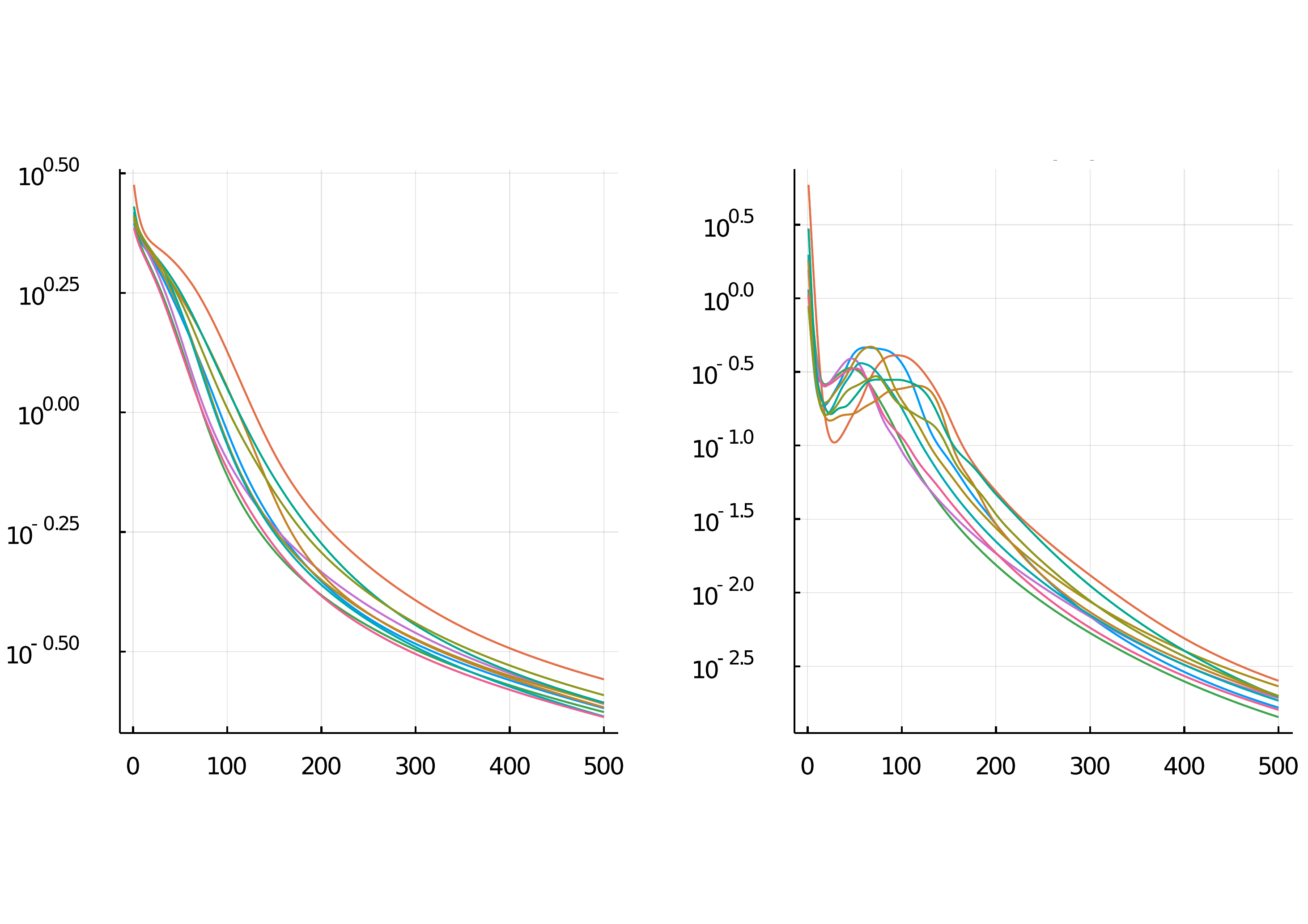}
        } \hspace{1cm}
    \subfloat{
        \includegraphics[width=0.3\textwidth]{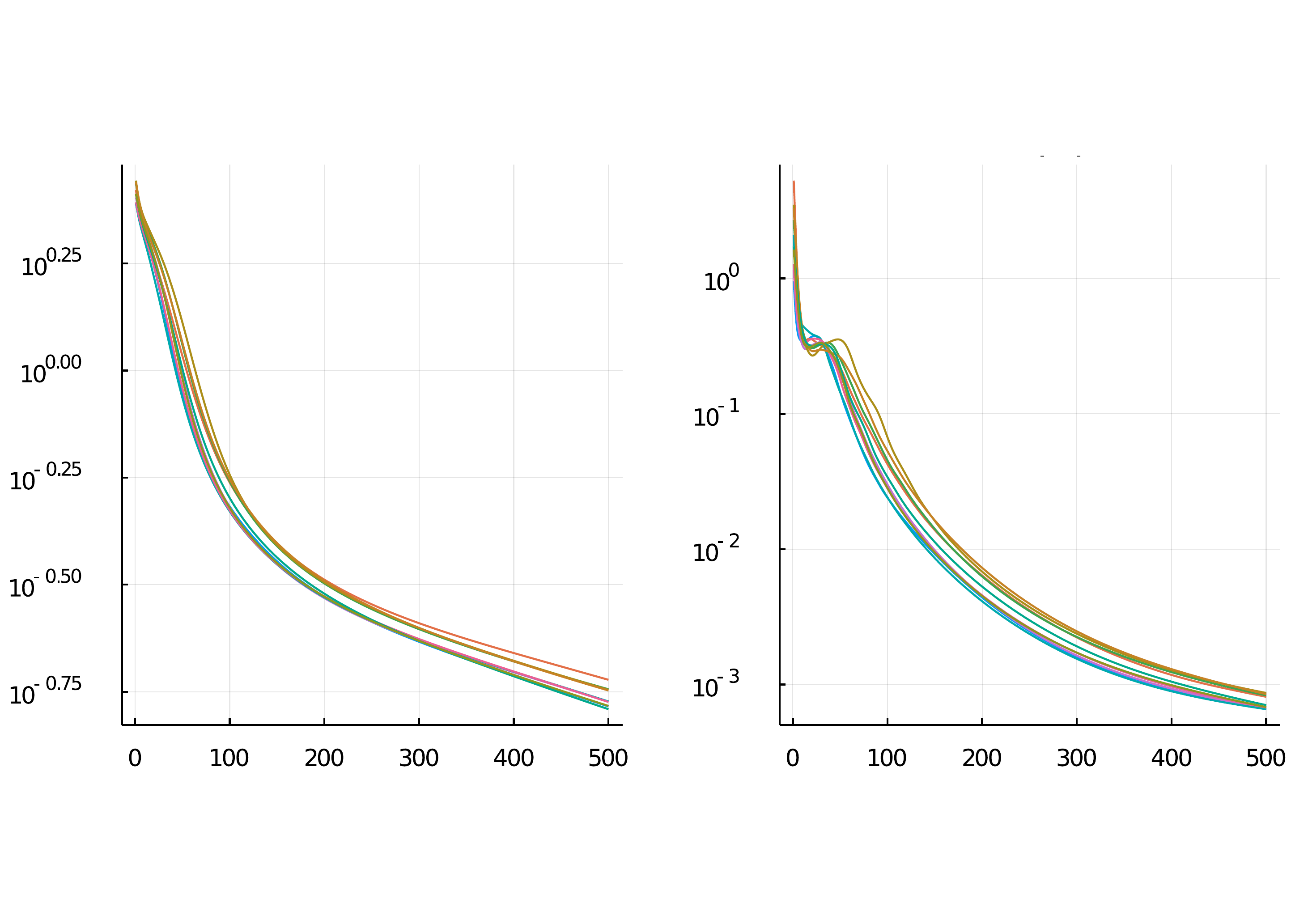}
        } \\
        {\small \hspace{3cm} \textbf{(a)} $ m = 10 $ \hspace{5.5cm}  \textbf{(b)} $ m = 20 $ \hspace{2cm} } \\
    \subfloat{
        \includegraphics[width=0.3\textwidth]{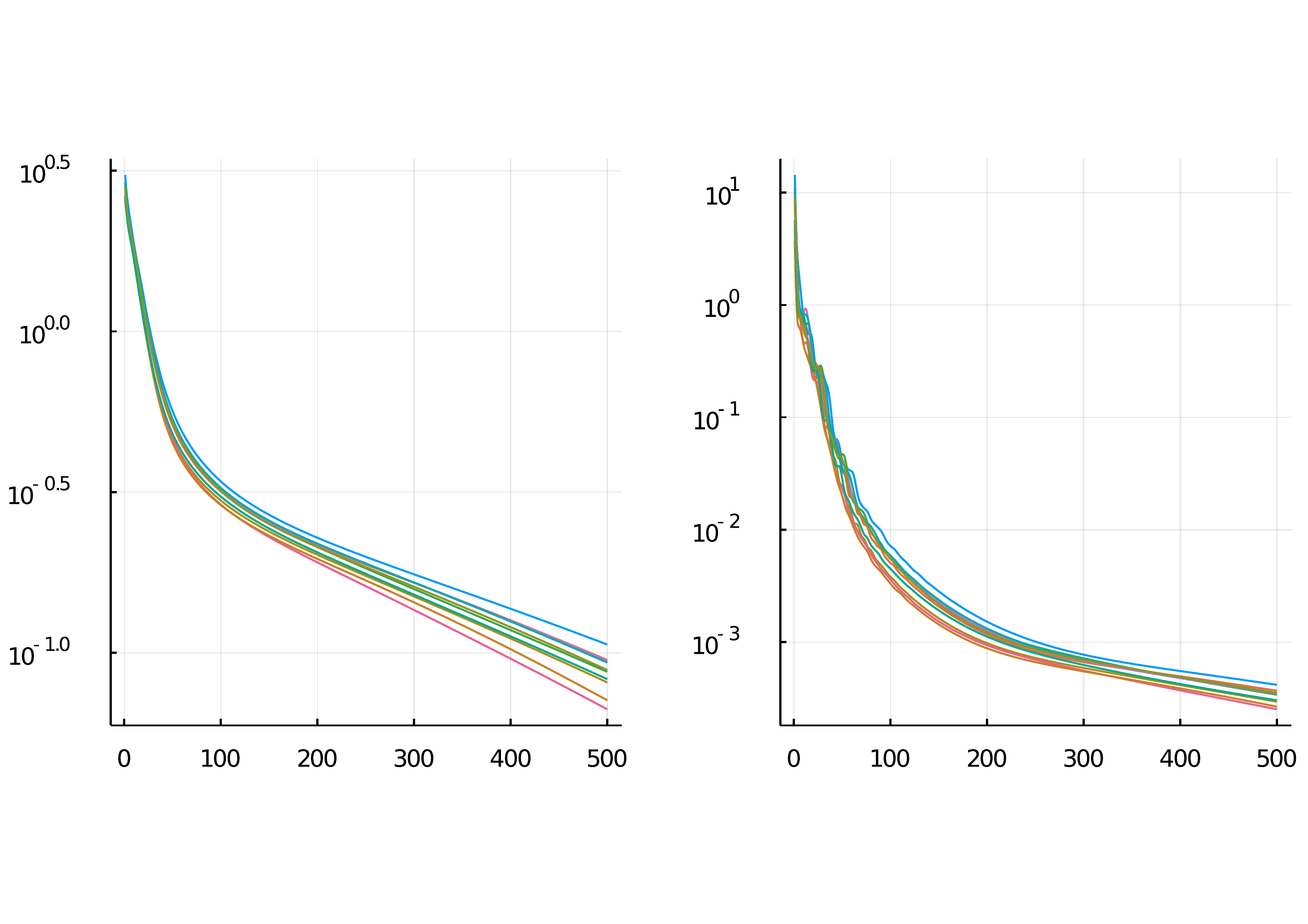}
        } \hspace{1cm}
    \subfloat{
        \includegraphics[width=0.3\textwidth]{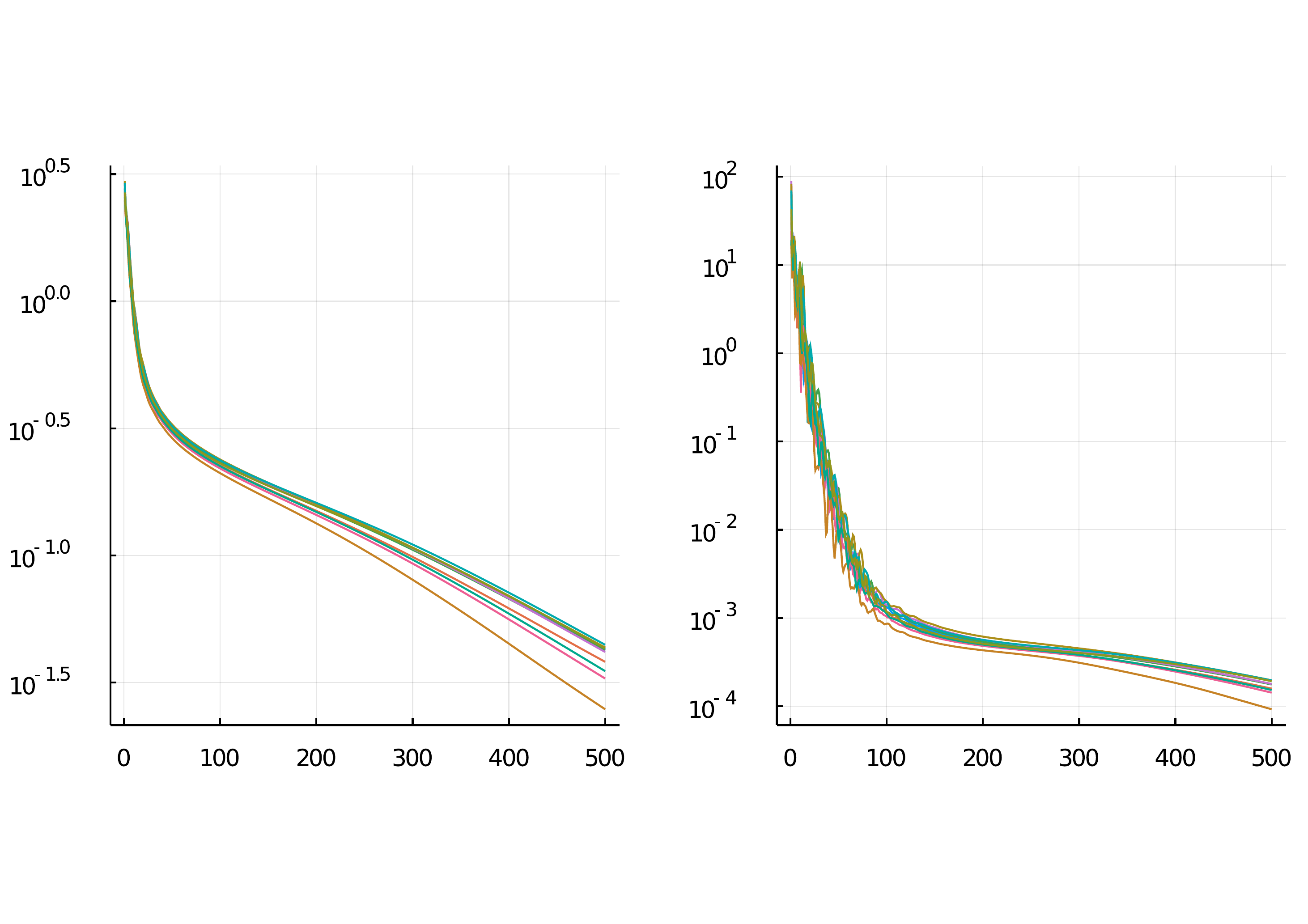}
        } \\
        {\small \hspace{3cm} \textbf{(c)} $ m = 100 $ \hspace{5.5cm}  \textbf{(d)} $ m = 1000 $ \hspace{2cm} }
    \caption{\label{fig:MNIST-exps} \textit{Network width $ m $ impacts whether gradient trajectories approach a saddle or not.} For all a-b-c-d, the loss curves are demonstrated on the left and the norm of the gradient is demonstrated on the right. We observe that the norm of the gradient decreases and then increases in narrow networks \textbf{(a-b)}, indicating an approach to a saddle and then escaping it. We do not observe a sharp non-monotonicity in the norm of the gradient for wider networks \textbf{(c-d)}. Instead we observe short decrease and increase periods in the norm of the gradient (see the zigzag) \textbf{(d)}, which indicates that the gradient trajectories move from one saddle to the next in this regime, yet without getting very close these saddles.
    }
\end{figure}

For the MNIST experiments, we observe that the gradient trajectories visit a saddle in a narrow network and the duration of the visit to the saddles becomes shorter as we increase the width (i.e. in (a), we see a longer plateau in the loss curve compare to (b)). In the excessive overparameterization regime, we observe another behavior change, i.e. we observe a zigzag behavior on the norm of the gradient, possibly indicating many short visits to the saddles.

\subsection{A Detailed Analysis of the Number of Critical Subspaces and the Number of Global Minima Subspaces}\label{sec:number-numerics}

In this section, first we present a detailed numerical analysis of the numbers $ T $ and $ G $ (see Figure~\ref{fig:numbers2}) and then we present additional figures for various minimal widths (see Figure~\ref{fig:numbers3}), expanding the Figure 6 in the main.

\begin{figure}[h!]
  \begin{center}
    \includegraphics[width=0.3\textwidth]{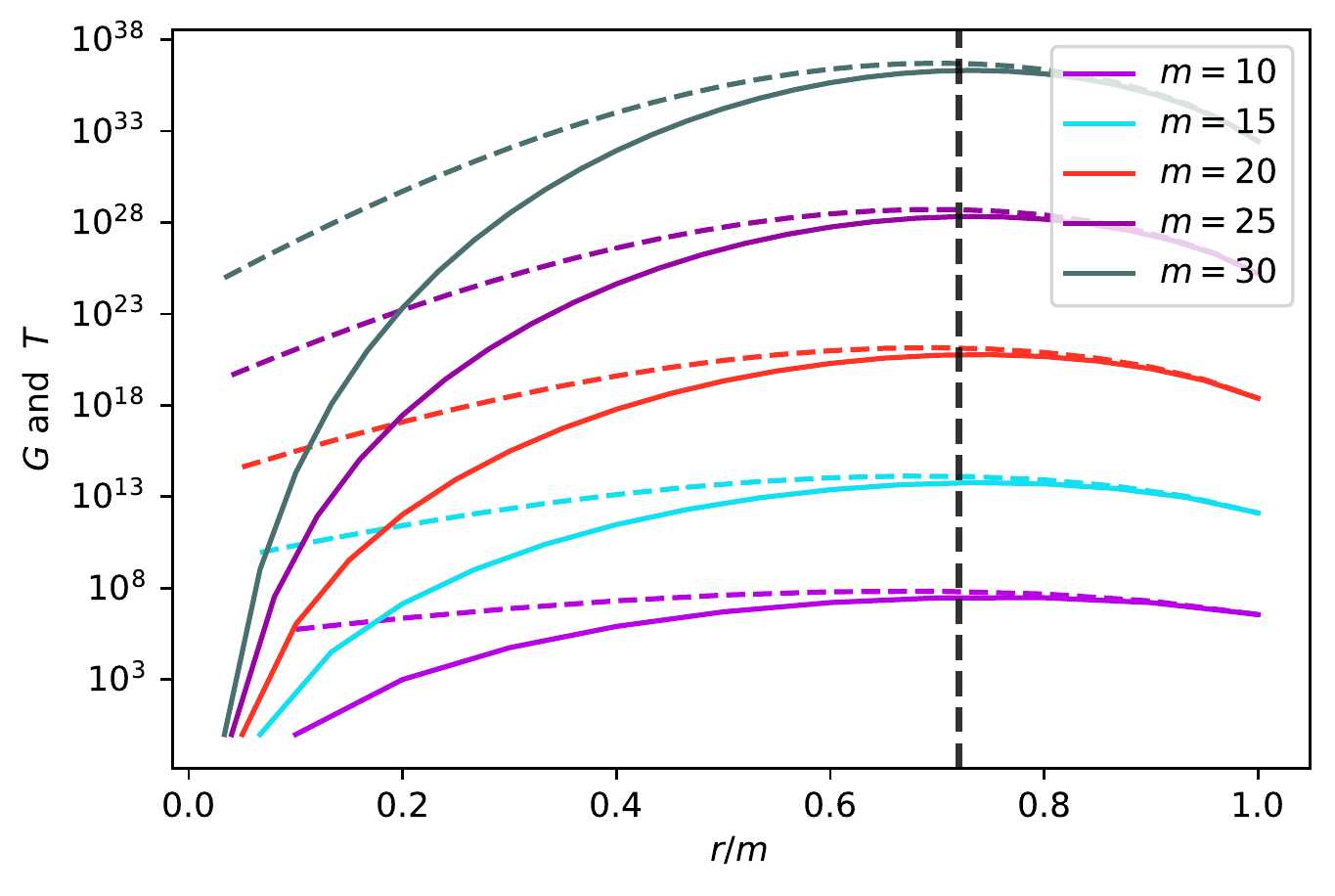} \hspace{0.8cm}
  \end{center}
  \vspace{-0.5cm}
  \caption{\label{fig:numbers2}
  \textit{Comparison of the number of critical subspaces $ G $ (solid) with the number of global minima subspaces $ T $ (dashed) for $ m = 10, 15, 20, 25, 30 $ where $ x $-axis denotes $ r / m $.}
  Each solid line indicates $ G(r, m) $, and the dashed lines indicate $ T(r, m) $ for $ r = 1, \ldots, m $. We observe that the maximum $ G(r, m) $ is achieved at $ r \approx 0.72 m $.}
\end{figure}

In Figure~\ref{fig:numbers2}, we observe an interesting linear relationship between $ r $ and $ m $, i.e. for fixed $ m $, $ G(r, m) $ is maximized for $ r \approx 0.7 m $. A refined analysis of these numbers can be useful for studying how much overparameterization is needed to converge to a global minimum efficiently.

\begin{figure}[h!]
    \centering
    \subfloat{
      \includegraphics[width=0.3\textwidth]{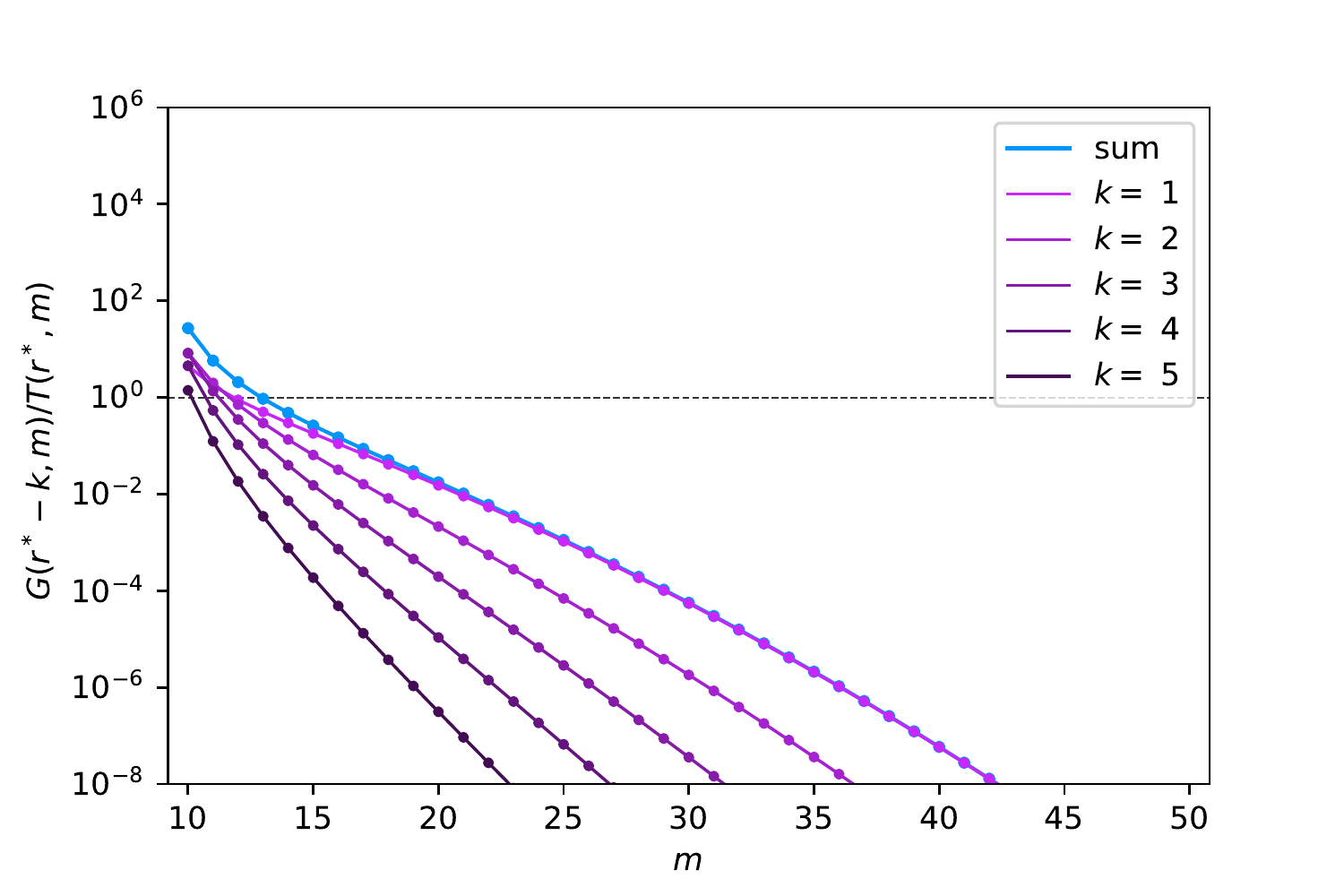}
    } \hspace{0.1cm}
    \subfloat{
       \includegraphics[width=0.3\textwidth]{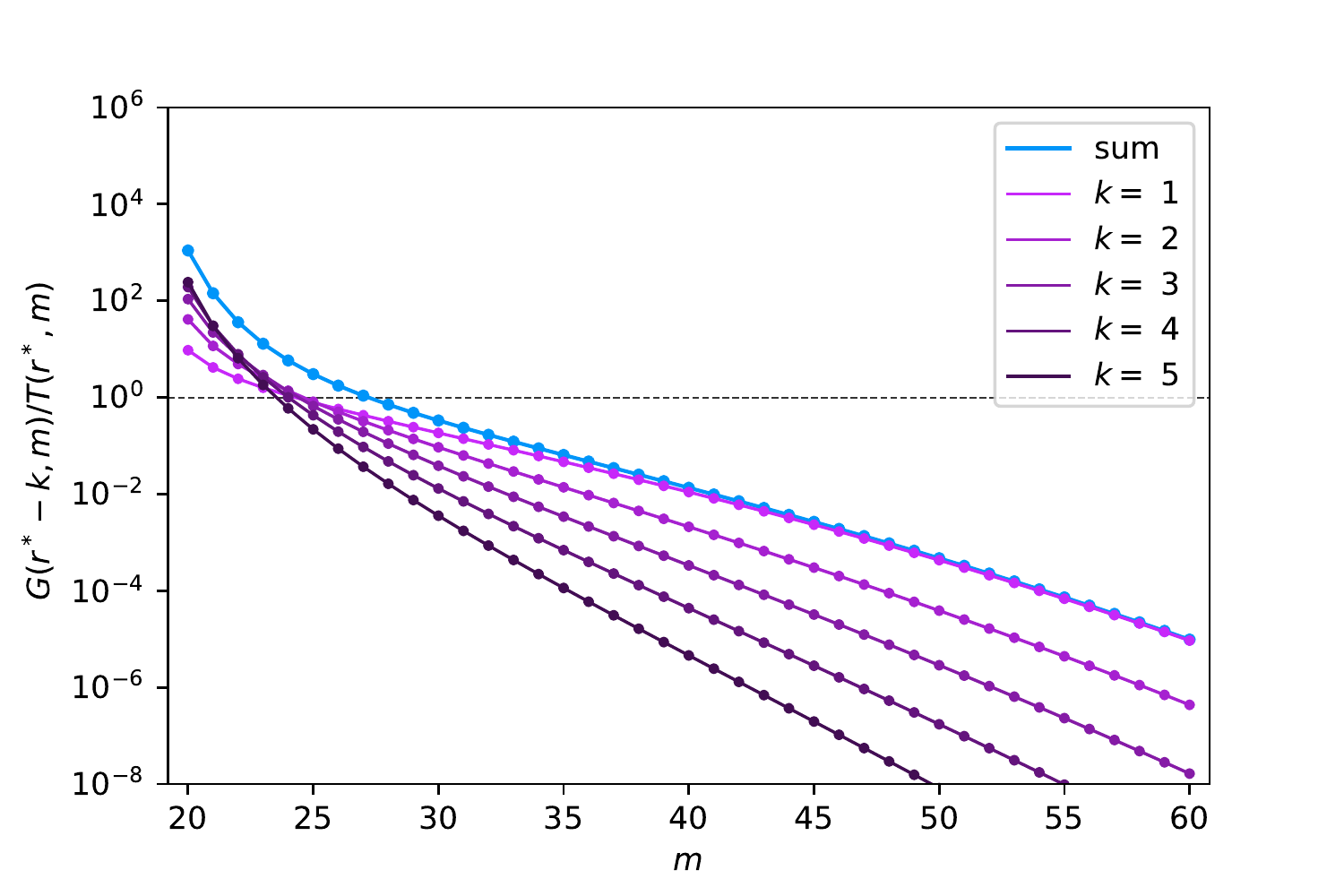}
    } \hspace{0.1cm} 
    \subfloat{
       \includegraphics[width=0.3\textwidth]{code/TG-teacher30.pdf}
    } \\
    {\small \hspace{2.1cm} \textbf{(a)} $ r^* = 10 $ \hspace{3.6cm} \textbf{(b)} $ r^* = 20 $ \hspace{3.5cm} \textbf{(c)}  $ r^* = 30 $ \hfill }
    \caption{\label{fig:numbers3}
    \textit{The ratio of the number of critical subspaces $ G(r^* - k, m) $ to global minima subspaces $ T(r^*, m) $ as the width $ m $ of the overparameterized network increases.} Plotted for various minimal widths \textbf{(a)} $ r^* = 10 $, \textbf{(b)} $ r^* = 20 $, and \textbf{(c)} $ r^* = 30 $ (as shown in the main).
    The ratio of all critical subspaces to the global minima subspaces $ \sum_{k=1}^{r^* - 1} G(r^* - k, m) / T(r^*, m) $ is shown in blue. 
    }
\end{figure}

In Figure~\ref{fig:numbers3}, we observe that the rate of decay to zero is faster is smaller minimal widths (see for example $ r^* = 10 $). This is consistent with out mathematical analysis, since $ \frac{r^* - 1}{r^*}$ increases as $ r^* $ increases, yielding a slower decay to zero (see the blue curves). We note that the exact implementation of the numbers becomes unstable for $ r^* > 35 $ in our numerical experiments. Therefore for wider minimal widths, an approximation of the numbers $ G $ and $ T $ is needed.

\subsection{Symmetric Loss Landscape Examples}\label{sec:sym-loss-exs}

We present some example symmetric losses $ \R^2 \to \R $ in Figure~\ref{fig:example-losses}, expanding Figure 2 in the main. We observe that in between two partner global minima (red points), there may be more than one saddles emerging on the symmetry subspaces.
\vspace{1cm}

\begin{figure}[h!]
    \centering
    \hspace{-1.5cm} \subfloat{
        \includegraphics[width=0.2\textwidth]{code/2dcase2-log-vtype1.pdf}
        }
    \subfloat{
        \includegraphics[width=0.2\textwidth]{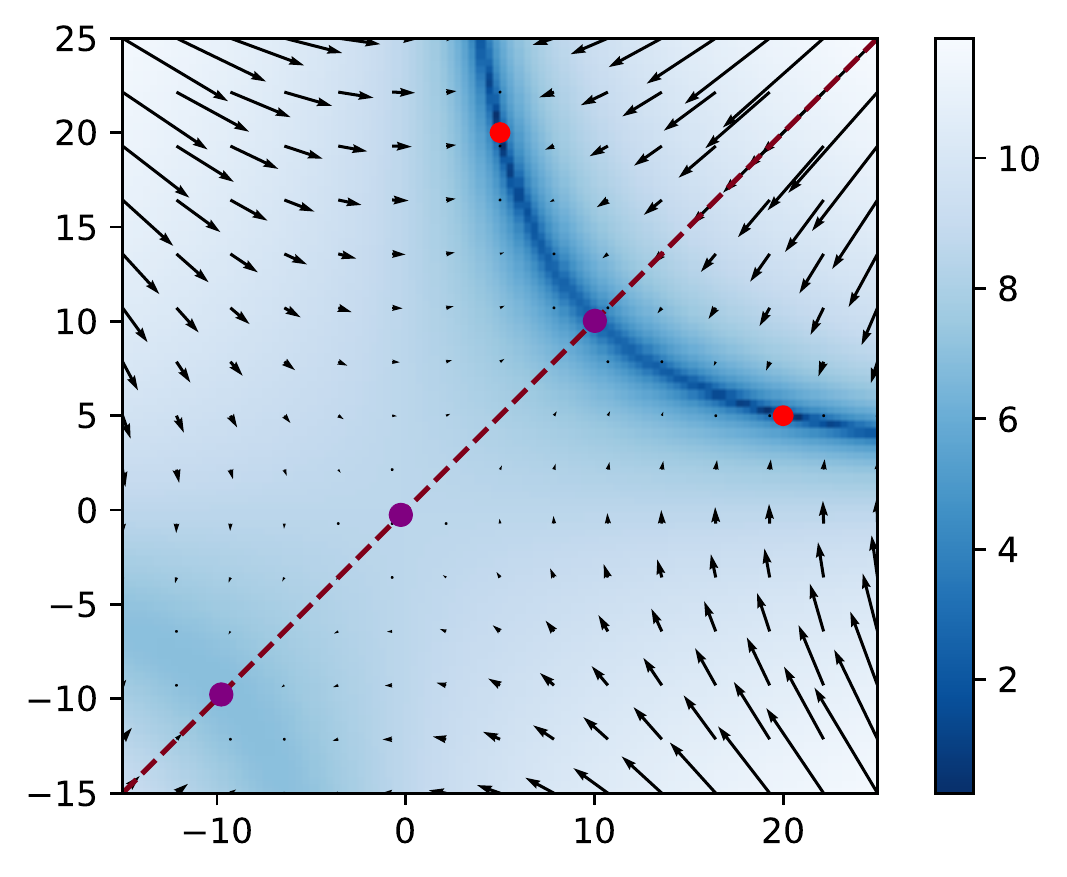}
        }
    \subfloat{
        \includegraphics[width=0.2\textwidth]{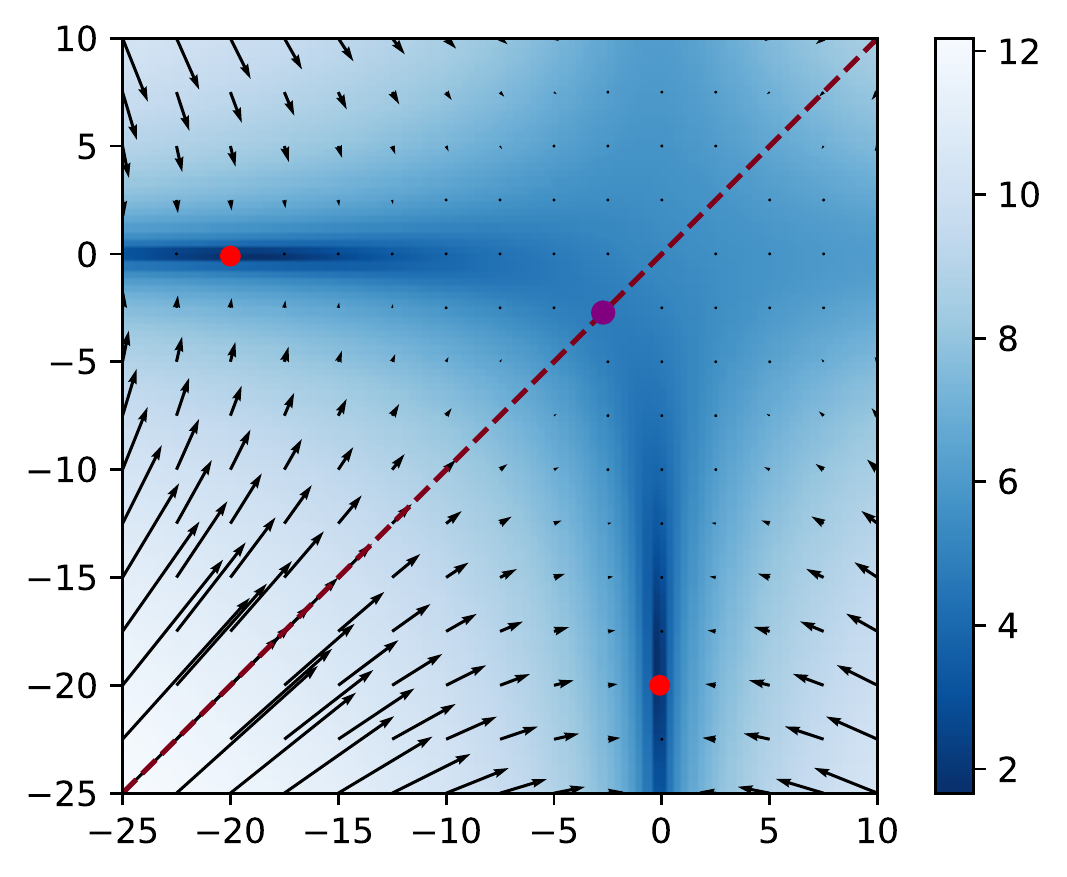}
        }
    \subfloat{
        \includegraphics[width=0.2\textwidth]{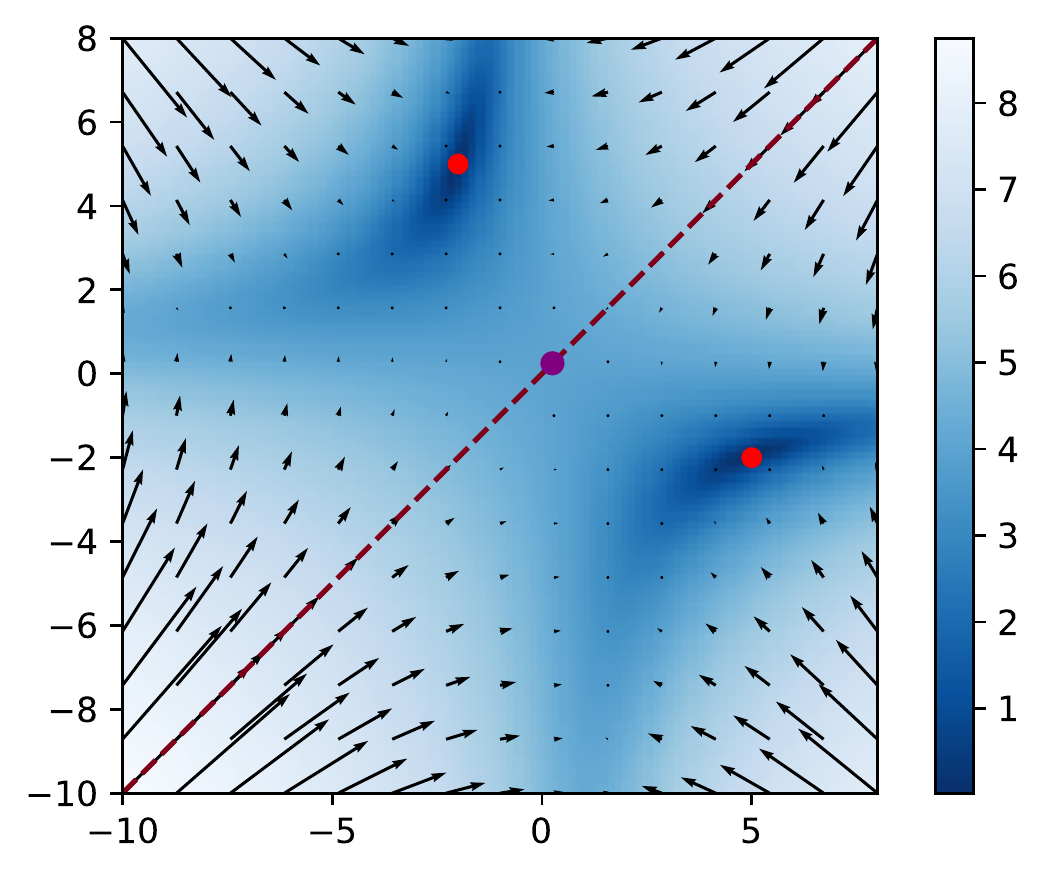}
        } \hspace{-1.5cm} \vfill
    \hspace{-1.5cm} \subfloat{
        \includegraphics[width=0.22\textwidth]{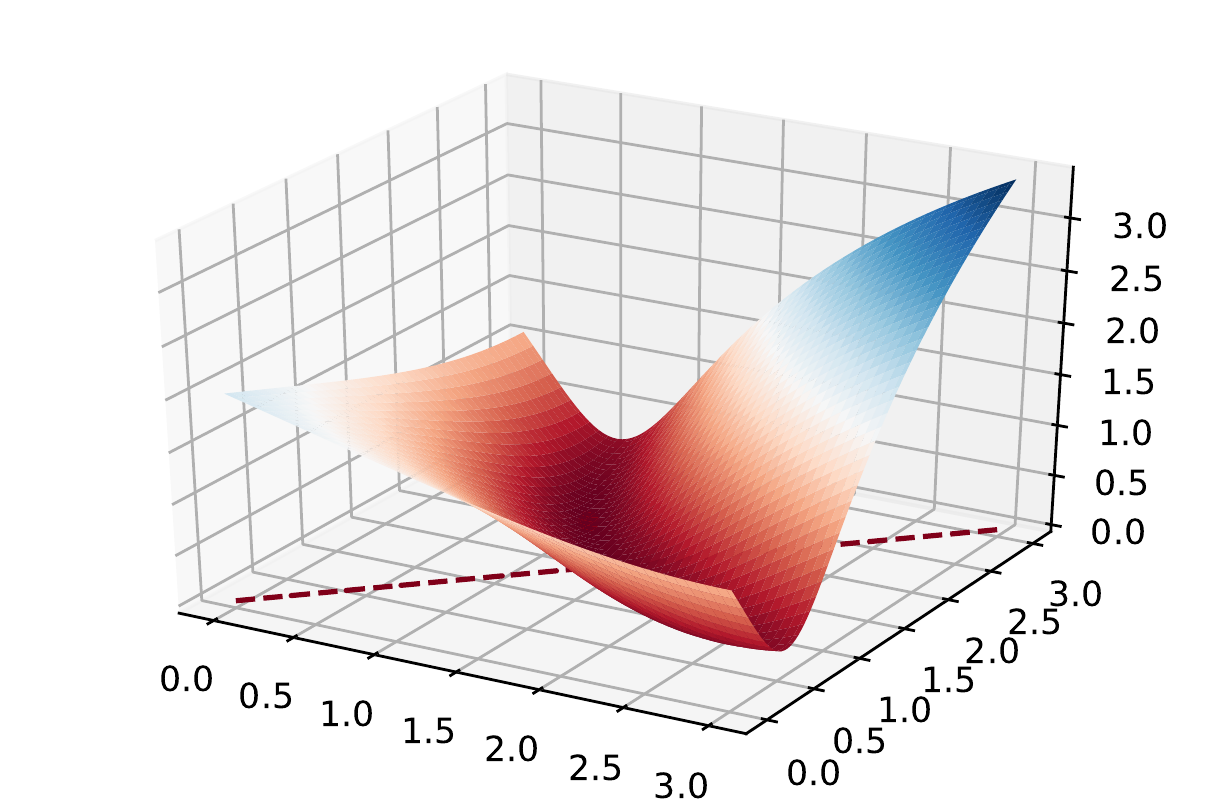}
      } \hspace{-0.5cm}
    \subfloat{
          \includegraphics[width=0.22\textwidth]{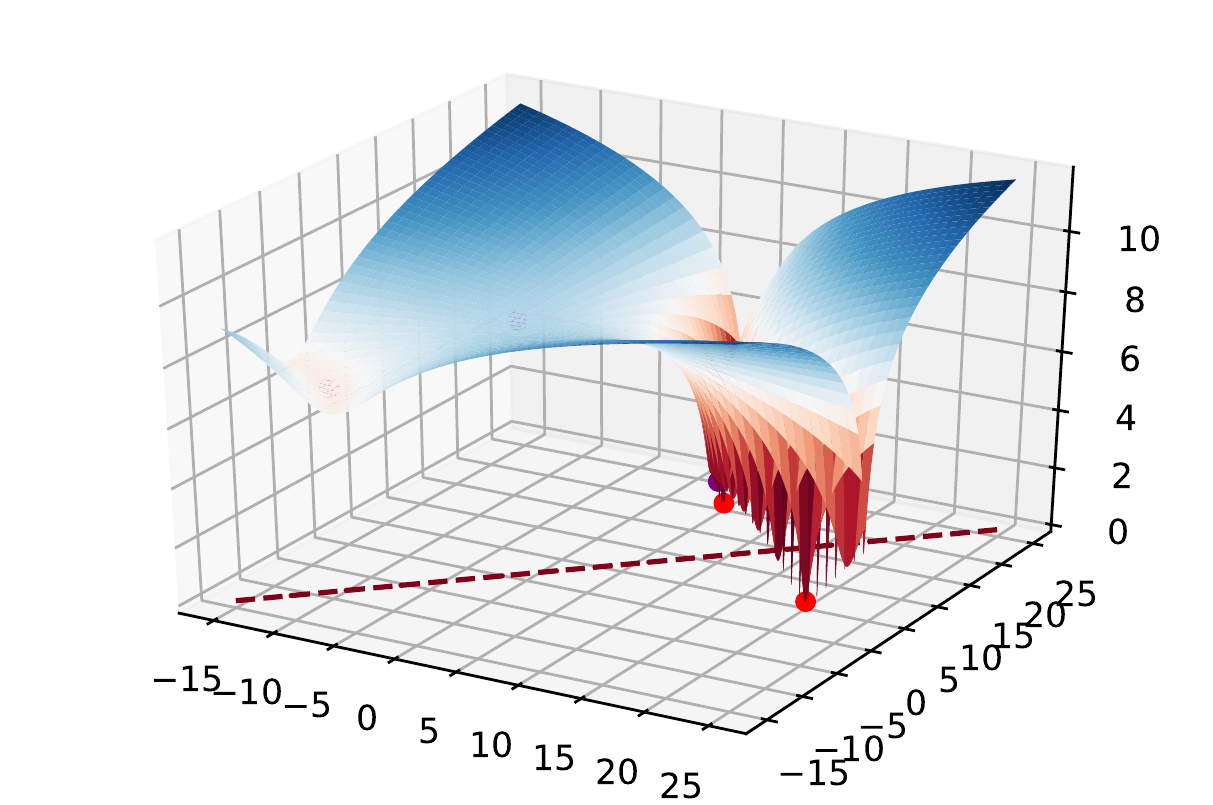}
        } \hspace{-0.5cm}
    \subfloat{
          \includegraphics[width=0.22\textwidth]{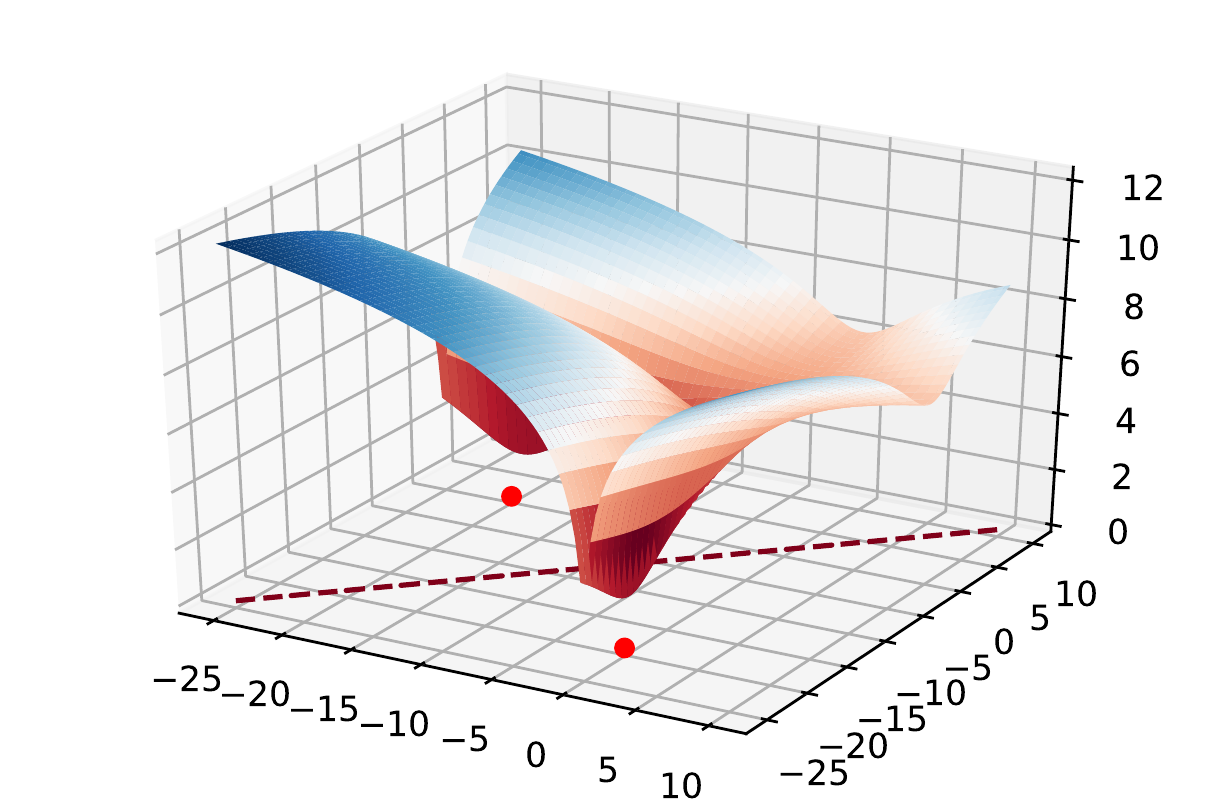}
        } \hspace{-0.5cm}
    \subfloat{
          \includegraphics[width=0.22\textwidth]{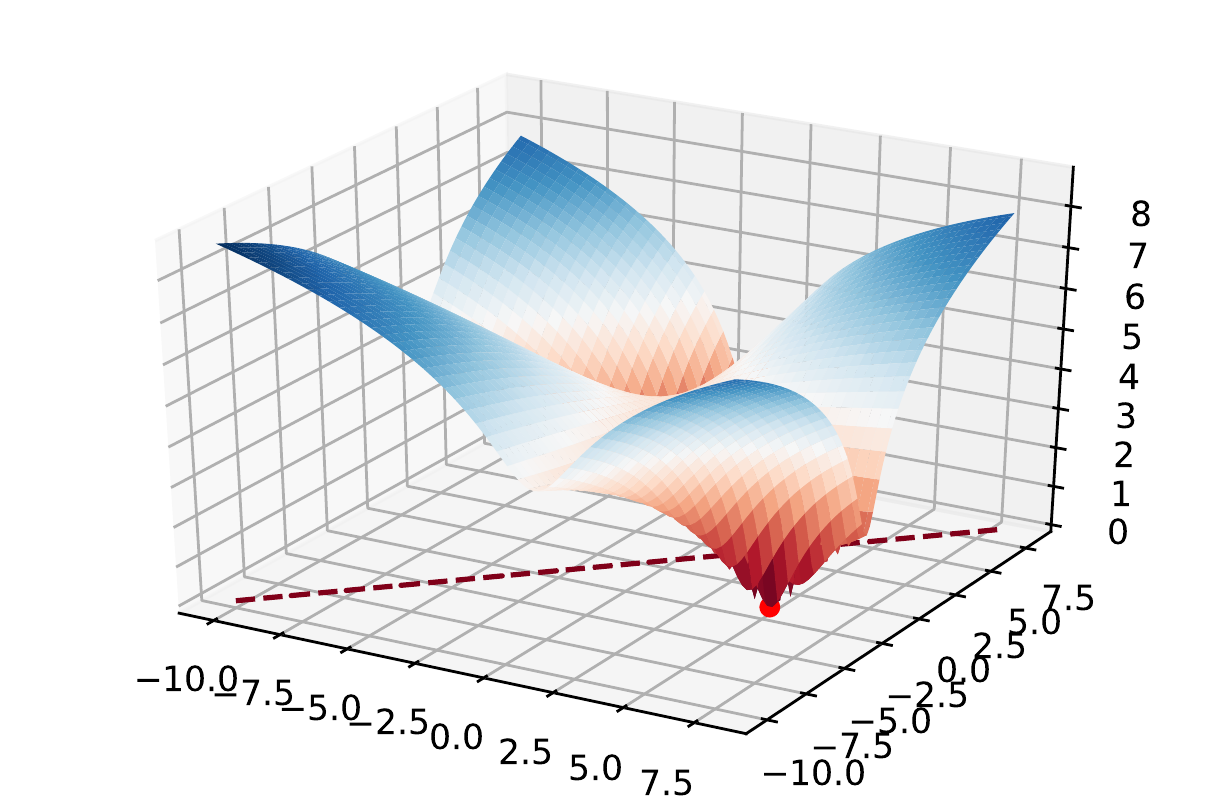}
        } \hspace{-1.5cm}
    \caption{\label{fig:example-losses} The gradient flow and the landscape and of a permutation-symmetric loss $L(w_1, w_2)= \log(\frac{1}{2} ((w_1 + w_2 - a)^2 + (w_1 w_2 - b)^2 ) + 1) $. Red dots: global minima, purple dots: non-global stationary points. Dashed lines represent the symmetry hyperplanes. \textbf{1st.} $a = 3$ and $b = 2$, the global minima at $(2, 1)$ and $(1, 2)$.
    \textbf{2nd.} $a = 25$ and $b = 100$, the global minima at $(20, 5)$ and $(5, 20)$.
     \textbf{3rd.} $a = -20.1$ and $b = 2$, the global minima at $(-20, -0.1)$ and $(-0.1, -20)$. \textbf{4th.} $a = 3$ and $b = -10$, the global minima at $(5, -2)$ and $(-2, 5)$.
    }
\end{figure}

\section{Proofs and Further Discussions}\label{sec:proof}

\subsection{Further Properties of Symmetric Losses}\label{sec:sym-losses2}

The most well known property of symmetric losses is the $ m! $ multiplicity of the critical points: for a critical point $ \ptheta^* = (\vartheta^*_1, \vartheta^*_2, \ldots, \vartheta^*_m) $ with distinct units $ \vartheta^*_i \neq \vartheta^*_j $ for all $ i \neq j $, there are $ m! $ equivalent critical points induced by permutations $ \pi \in S_m $.
Similarly, every point $ \ptheta $ with distinct units has $ m! - 1 $ partner points with equal loss.
For a symmetric loss function, a fundamental region
\begin{align*}
  \mathcal{R}_{0} := \{ (\vartheta_1, \ldots, \vartheta_m) \in \R^{ D m}
  : \vartheta_{1} \geq  \ldots \geq  \vartheta_{m} \}
\end{align*}
has $ m! - 1 $ partner regions where the landscape of the loss is the same up to permutations. Note that above and elsewhere we use the lexicographic order: for two units $ \vartheta, \vartheta' \in \R^D$, we write $ \vartheta > \vartheta' $ if there exists $j\in [D]$ such that $ \vartheta_i = \vartheta'_i \  \text{for all} \ i \in [j-1] \ \text{and} \ \vartheta_j > \vartheta'_j $; and $ \vartheta = \vartheta' $, if $ \vartheta_i = \vartheta'_i $ for all $ i \in [D] $.

\begin{definition} For a permutation $ \pi \in S_m $, a \textbf{\emph{replicant region}} $ \mathcal{R}_{\pi} $ is defined by
\begin{align}\label{eqn:def-rep-reg}
    \mathcal{R}_{\pi} := \{ (\vartheta_1, \ldots, \vartheta_m) \in \R^{ D m}
    : \vartheta_{\pi(1)} \geq  \ldots \geq  \vartheta_{\pi(m)} \}.
\end{align}
We denote by $ \mathring{\mathcal{R}}_{\pi} $  the interior of the replicant region.
\end{definition}

Any two partner points $ \ptheta_\pi \in \mathcal{R}_\pi $ and $ \ptheta_{\pi'} \in \mathcal{R}_{\pi'} $ have the same loss $ L^m( \ptheta_\pi) = L^m( \ptheta_{\pi'} ) $ and
they are linked with a permutation matrix $ \Perm_{\pi' \circ \pi^{-1}} $ : $ \Perm_{\pi' \circ \pi^{-1}} \ptheta_\pi = \ptheta_{\pi'} $.

Note that the lexicographic order is a total order thus it allows to compare any two $ D $-dimensional units. Therefore every point $ \ptheta \in \R^{Dm} $ falls in at least one replicant region, i.e.
\begin{align*}
    \R^{Dm} = \cup_{\pi \in S_m} \mathcal{R}_\pi.
\end{align*}

The intersection of all these regions $\mathcal R_\pi$ corresponds to the $ D $-dimensional linear subspace $ \vartheta_1 = \vartheta_2 = \cdots = \vartheta_m $; more generally intersections of replicant regions define symmetry subspaces.

As each constraint $ \vartheta_i = \vartheta_j $ suppresses $ D $ degrees of freedom, we have $ \dim(\H_{i_1, \ldots, i_k}) = D (m - k + 1) $.
Observe that the largest symmetry subspaces are $ \H_{i, j} $'s since any other symmetry subspace is included in one of these  $ \binom{m}{2}$ subspaces.

For $ D = 1 $, the largest symmetry subspaces have codimension $ 1 $. As a result, any path from $ \mathcal{R}_\pi $ to any another replicant region has to cross a symmetry subspace (see Figure~\ref{fig:3d-param-space}).
However, for $ D > 1 $, the symmetry subspaces have codimension at least $ D $; thus there exist paths connecting replicant regions without crossing symmetry subspaces.

\begin{wrapfigure}{r}{0.4\textwidth}
    \begin{center}
        \includegraphics[width=0.25\textwidth]{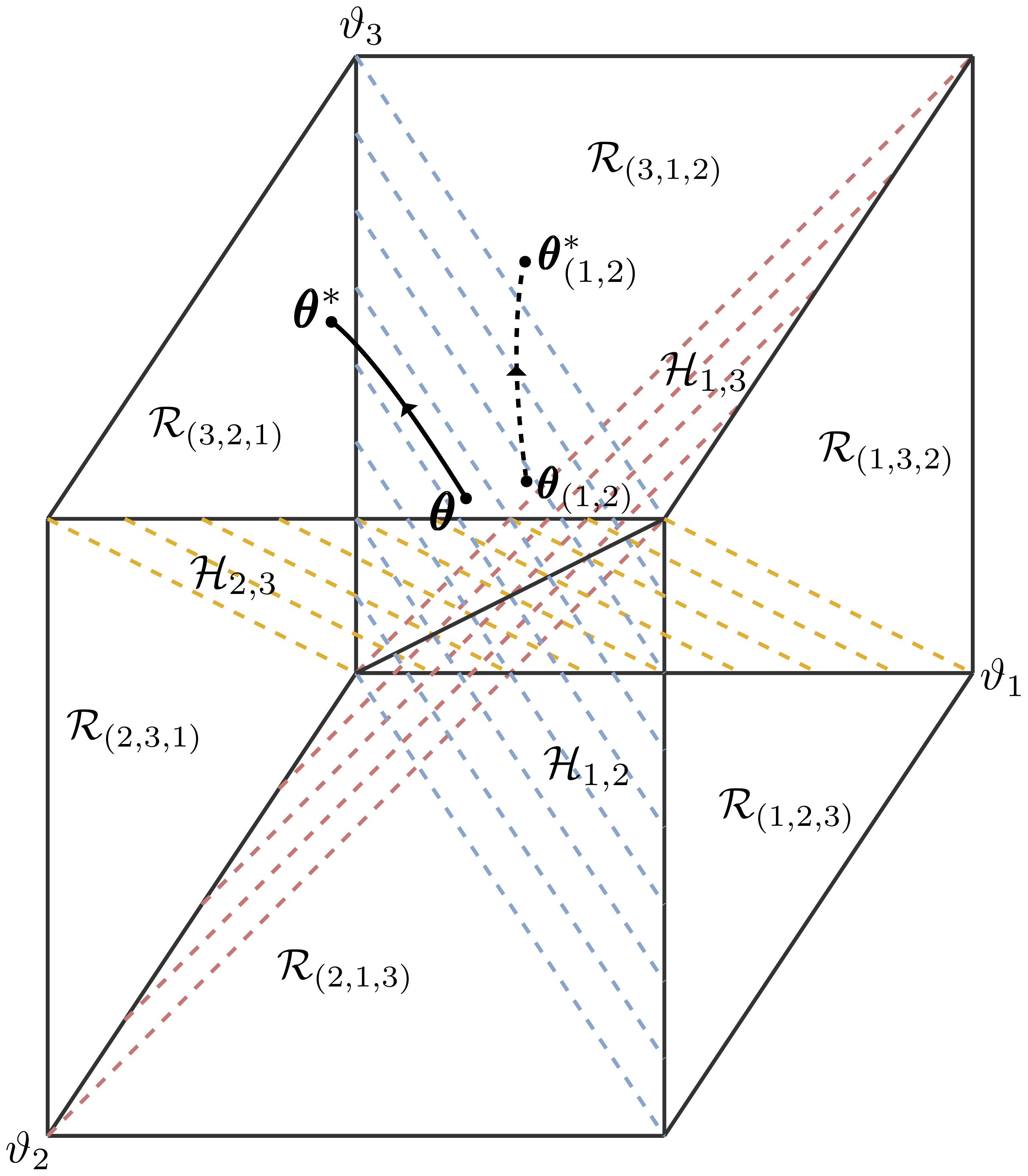}
      \end{center}
      \vspace{-0.3cm}
      \caption{{\small \textit{ Replicant regions $ \mathcal R_\pi $ and symmetry subspaces $ \mathcal H_{i,j} $ for the $ 3 $-dimensional parameter space $ \R^3 $.}} An example gradient flow trajectory starting at $ \ptheta \in \mathcal R_{(3,2,1)} $ and arriving at a minimum $ \ptheta^* $ (solid curve) and
      its partner trajectory starting at a partner point $ \ptheta_{(1,2)} \in R_{(3,1,2)} $ thus arriving at a partner minimum $ \ptheta_{(1,2)}^* $ (dashed curve) are shown. \label{fig:3d-param-space}}
\end{wrapfigure}

\begin{lemma}[Lemma 2.1 in the main]\label{invariant-manifolds-appendix} Let $ L^m : \R^{Dm} \to \R $ be a symmetric loss on $ m $ units thus a $C^1$ function and let $ \prho: \Rpz \to \R^{Dm} $ be its gradient flow.
If $ \prho (0) \in \H_{i_1, \ldots, i_k } $, the gradient flow stays inside the symmetry subspace, i.e. $ \prho (t) \in \H_{i_1, \ldots, i_k } $ for all $ t > 0 $.
If $ \prho (0) \notin \mathcal{H}_{i,j} $ for all $i \neq j \in [m]$, that is outside of all symmetry subspaces, the gradient flow does not visit any symmetry subspace in finite time.
\end{lemma}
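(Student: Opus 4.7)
The plan is to exploit the symmetry of $L^m$ together with uniqueness of solutions of the gradient flow ODE. The central observation to establish first is the equivariance of the gradient under permutations: since $\Perm_\pi$ is an orthogonal linear map on $\R^{Dm}$, differentiating $L^m(\Perm_\pi \ptheta) = L^m(\ptheta)$ in $\ptheta$ gives
\begin{align*}
\nabla L^m(\Perm_\pi \ptheta) = \Perm_\pi \nabla L^m(\ptheta) \qquad \text{for all } \pi \in S_m.
\end{align*}
As a consequence, whenever $\prho(\cdot)$ solves $\dot\prho = -\nabla L^m(\prho)$, so does the curve $\Perm_\pi \prho(\cdot)$, since $\tfrac{d}{dt}\Perm_\pi\prho = -\Perm_\pi\nabla L^m(\prho) = -\nabla L^m(\Perm_\pi\prho)$.

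For the first statement, I would fix an arbitrary $\pi \in S_m$ that permutes only the indices $\{i_1,\dots,i_k\}$ among themselves and acts as the identity on every other index; any such $\pi$ fixes $\mathcal{H}_{i_1,\dots,i_k}$ pointwise. The hypothesis $\prho(0) \in \mathcal{H}_{i_1,\dots,i_k}$ therefore gives $\Perm_\pi\prho(0) = \prho(0)$, so both $\Perm_\pi\prho(\cdot)$ and $\prho(\cdot)$ are gradient flow trajectories starting from the same point. Uniqueness of ODE solutions forces $\Perm_\pi\prho(t) = \prho(t)$ for all $t \geq 0$. Since this holds for every such $\pi$, the components $\vartheta_{i_1}(t),\dots,\vartheta_{i_k}(t)$ must remain equal, i.e.\ $\prho(t) \in \mathcal{H}_{i_1,\dots,i_k}$.

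For the second statement, I would argue by contradiction via time reversal. Suppose $\prho(t_0) \in \mathcal{H}_{i,j}$ for some finite $t_0 > 0$ and some $i \neq j$. The reverse-time curve $\tilde\prho(s) := \prho(t_0-s)$ for $s\in[0,t_0]$ solves the gradient ascent ODE $\dot{\tilde\prho} = +\nabla L^m(\tilde\prho)$ with $\tilde\prho(0) = \prho(t_0) \in \mathcal{H}_{i,j}$. Exactly the same equivariance-plus-uniqueness argument, applied to the ascent flow and to the transposition $\pi = (i,j)$, keeps $\tilde\prho$ inside $\mathcal{H}_{i,j}$ for all $s\in[0,t_0]$; in particular $\prho(0) = \tilde\prho(t_0) \in \mathcal{H}_{i,j}$, contradicting the hypothesis.

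The main technical obstacle is justifying uniqueness: mere $C^1$ regularity of $L^m$ gives continuity of $\nabla L^m$, which yields existence by Peano's theorem but not uniqueness via Picard--Lindel\"of. I would resolve this either by observing that the losses actually used later in the paper are $C^2$ (so $\nabla L^m$ is locally Lipschitz and both forward and reverse flows are uniquely determined by their initial data), or by strengthening the hypothesis to ``$C^1$ with locally Lipschitz gradient'' in the statement of the lemma. With that extra regularity the equivariance argument goes through unchanged in both directions.
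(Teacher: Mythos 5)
Your proof is correct and rests on the same central fact as the paper's, namely the equivariance identity $\nabla L^m(\Perm_\pi\ptheta)=\Perm_\pi\nabla L^m(\ptheta)$, but you organize the argument differently. For the first statement, the paper argues at the level of the \emph{vector field}: it shows $\nabla_{i_1}L^m(\ptheta)=\cdots=\nabla_{i_k}L^m(\ptheta)$ whenever $\ptheta\in\H_{i_1,\dots,i_k}$, so $-\nabla L^m$ is tangent to the subspace and the flow restricted to $\H_{i_1,\dots,i_k}$ is self-contained. You instead argue at the level of \emph{trajectories}: for any $\pi$ stabilizing the index set, $\Perm_\pi\prho$ and $\prho$ solve the same initial value problem, and ODE uniqueness forces $\Perm_\pi\prho\equiv\prho$, so $\prho$ stays in the common fixed-point set $\H_{i_1,\dots,i_k}$. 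For the second statement, the paper also constructs the reflected trajectory $\tT{}\pgamma=\Perm_{(i',j')}\pgamma$ and derives a contradiction from backward uniqueness on a small interval $[t_0-\epsilon,t_0]$; your formulation via the time-reversed ascent flow is equivalent and slightly cleaner since it reuses the first part directly over the whole interval $[0,t_0]$. Neither approach is more general than the other; they are two presentations of the same uniqueness argument.

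Your regularity concern is well placed, and in fact it points at a real imprecision in the paper's own proof, not just yours. The paper invokes Picard--Lindelöf after noting only that $\nabla L^m$ is continuous, but continuity of the gradient (which is all that $C^1$ of $L^m$ yields) gives existence by Peano, not uniqueness; Picard--Lindelöf requires local Lipschitz continuity of $\nabla L^m$. Since every concrete loss used later in the paper is at least $C^2$ (the activation $\sigma$ and single-sample loss $c$ are assumed $C^2$ in the results that need Hessians), the fix you propose of strengthening the hypothesis to a locally Lipschitz gradient or $C^2$ loss is exactly the right one. Your first part is a bit more exposed to this issue than the paper's: the paper's tangency observation at least gives a self-contained flow on the subspace even without uniqueness of the ambient ODE, whereas your argument needs uniqueness already in part one to conclude $\Perm_\pi\prho\equiv\prho$. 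In the second part, both proofs genuinely need uniqueness, and both need the extra regularity you flag.
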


\begin{proof} We will use the identity that comes from chain rule $ \nabla L^m ( \Perm_\pi \ptheta) = \Perm_\pi \nabla L^m ( \ptheta) $. We will show that if $ \ptheta=(\vartheta_1,\cdots, \vartheta_m)\in\mathcal{H}_{i_1,\ldots,i_k}$ where $\vartheta_{i_1}=\cdots=\vartheta_{i_k}$, its gradient satisfies $ \nabla_{i_1} L^m (\ptheta) = \ldots =  \nabla_{i_k} L^m (\ptheta) $ therefore the gradient flow remains on the symmetry subspace for all times.

We denote a transposition by $ (i, j) \in S_m $, which is a permutation that only swaps the units $ i $ and $ j $. Assume $ \ptheta \in \H_{i,j} $, that is
$\ptheta = \Perm_{(i,j)} \ptheta$, and thus
\begin{align*}
        \nabla L^m ( \ptheta ) = \nabla L^m ( \Perm_{(i,j)} \ptheta) = \Perm_{(i,j)} \nabla L^m ( \ptheta),
\end{align*}
and in particular $ \nabla_i L^m ( \ptheta ) = \nabla_j L^m ( \ptheta ) $.
This entails that for $ \ptheta \in \H_{i_1, \ldots, i_k} $, we have $ \nabla L^m ( \ptheta )  \in \H_{i_1, \ldots, i_k} $ as well, which completes the first part of the proof.

We now prove the second part of the claim by contradiction.
Suppose now that $\pgamma(0)\notin\mathcal{H}_{i,j}$ for any $i\neq j\in[k]$ and $t_0<\infty$ be the first time such that $\pgamma(t_0)\in\mathcal{H}_{i',j'}$ for some $i'\neq j'\in[k]$.
Let $\widetilde{\pgamma}(t)=\Perm_{(i',j')}\pgamma(t)$, that is the symmetric path with respect to $\mathcal{H}_{i',j'}$.
Then one sees that $\pgamma$ and $\widetilde{\pgamma}$ intersect for the first time at $t_0$ on $\mathcal{H}_{i',j'}$ and then $\pgamma(t)=\widetilde{\pgamma}(t)\in\mathcal{H}_{i',j'}$ for all $t>t_0$, as we showed in the first part of the proof.
Since $\nabla L^m$ is continuous, Picard-Lindelöf Theorem applies on a neighbourhood of $\pgamma(t_0)$, which ensures the unicity of the gradient flow on $[t_0-\epsilon,t_0]$ for some $\epsilon>0$. Thus, $\pgamma(t_0-\epsilon)=\widetilde{\pgamma}(t_0-\epsilon)$, which contradicts the fact that $t_0$ is the first time when $\pgamma$ intersects $\widetilde{\pgamma}$.

\end{proof}

We write the gradient of $ L^m $ in the block form
\begin{align*}
  \nabla L^m(\ptheta) = ( \nabla_1 L^m(\ptheta), \ldots, \nabla_m L^m(\ptheta) )
\end{align*}
where  for all $ j \in [m] $, $$ \nabla_j L^m ( \ptheta ) = (\partial_{D(j-1) + 1} L^m( \ptheta ), \ldots, \partial_{D(j -1) + D} L^m( \ptheta )) $$ is a $ D $-dimensional vector.

\begin{remark}\label{lem:flow-stays-in-replicant2} Let $ \prho(0) \in \mathcal{R}_\pi $ for some $ \pi \in S_m $. In the case of $ 1 $-dimensional units, $ D = 1 $, we have $ \prho(t) \in \mathcal{R}_\pi $ for all $ t \in \Rp $. Hence, in this case, the gradient flow can only be affected by the critical points of a single replicant region.
\end{remark}

\begin{proof}
Indeed, assume that $ \prho(0) = (\vartheta_1(0), \ldots, \vartheta_m(0)) \in \mathcal{R}_\pi $, i.e. $ \vartheta_{\pi_1}(0) \geq \cdots \geq  \vartheta_{\pi_m}(0) $ and
 $ {\prho(1) = (\vartheta_1(1), \ldots, \vartheta_m(1)) \in \mathcal{R}_{\pi'}} $ for another permutation $\pi'$, i.e. $ \vartheta_{\pi'_1}(0) \geq \cdots \geq  \vartheta_{\pi'_m}(0) $.
Since $ \pi \neq \pi' $, there exists a pair $ (i, j) $ such that $ \vartheta_i(0) \geq \vartheta_j(0) $ and $ \vartheta_j(1) \geq  \vartheta_i(1) $. Thus we have
\begin{align*}
     (\vartheta_i - \vartheta_j)(0) \geq 0 \geq
    (\vartheta_i - \vartheta_j)(1).
\end{align*}
Because the gradient flow $ \prho $ is continuous (since $ L^m $ is $ C^1 $) there exists a time $ t_0 $ such that $ (\vartheta_i - \vartheta_j)(t_0) = 0
$, i.e. $ \prho(t_0) \in \mathcal{H}_{i, j} $, which yields a contradiction.
\end{proof}

\begin{remark}\label{lem:flow-stays-in-replicant} In the case of $ 1 $-dimensional units, $ D = 1 $, if $\prho(0)\in\mathcal{R}_\pi$ for some $ \pi \in S_m $, we have $ \prho(t) \in \mathcal{R}_\pi $ for all $ t \in \Rp $. Hence, in this case, the gradient flow $ \prho $ can only be affected by the critical points of a single replicant region.
\end{remark}

\subsection{The Expansion Manifold in Two-Layer ANNs}\label{sec:exp-man2}

\begin{theorem}[Theorem 3.1 in the main]\label{thm:geometry-of-exp-manifold2} For $ m \geq r $, the expansion manifold $ \expman $ of an irreducible 
point $\ptheta^r $ consists of exactly\footnote{$ \binom{n_1 + ... + n_r}{n_1, ..., n_r} $ denotes the coefficient $\frac{ (n_1 + ... + n_r)! }{n_1! ... n_r!}$.}
 \begin{align*}
 T(r, m) := \sum_{j = 0}^{m-r}  \sum_{\substack{\summ(s) = m \\ k_i \geq 1, b_i \geq 1}} \binom{m}{k_1, ..., k_r, b_1, ..., b_j} \frac{1}{c_1! ... c_{m-r}!}
 \end{align*}
distinct affine subspaces (none is including another one) of dimension at least $\min(\din,\dout)(m-r)$, where $ c_i $ is the number of occurences of $ i $ among $ (b_1, ..., b_j) $.

For $ m > r $, $ \expman $ is connected: any pair of distinct points $ \ptheta, \ptheta' \in \expman $ is connected via a union of line segments $ \pgamma : [0,1] \to \expman $ such that $ \pgamma(0) = \ptheta $ and $ \pgamma(1) = \ptheta' $.
\end{theorem}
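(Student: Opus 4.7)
The theorem splits cleanly into (a) counting the subspaces and computing their dimension, and (b) the connectivity claim for $m>r$; I would attack these independently.

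\textbf{Counting and dimension.} Every affine subspace in $\expman$ arises from a tuple $s=(k_1,\dots,k_r,b_1,\dots,b_j)$ with $k_i\geq 1$, $b_i\geq 1$, $\summ(s)=m$, together with an assignment of incoming weights (the labelled $w_1,\dots,w_r$ from $\ptheta^r$ plus $j$ dummy values $w'_1,\dots,w'_j$) to the $m$ neuron positions. The number of such assignments is the multinomial $\binom{m}{k_1,\dots,k_r,b_1,\dots,b_j}$; dividing by $c_1!\cdots c_{m-r}!$ identifies assignments that differ only by a relabeling of dummy groups of equal size, and summing over $j$ and over admissible $s$ yields the closed form of $T(r,m)$. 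For the dimension, each replicated group contributes $\dout(k_t-1)$ free degrees (from $\sum_i a^i_t=a_t$), while each dummy group contributes $\din$ (free $w'_t$) plus $\dout(b_t-1)$ (from $\sum_i\alpha^i_t=0$); summing gives $\dim\subs=\dout(m-r-j)+j\,\din$, which is linear in $j\in\{0,\dots,m-r\}$ and therefore bounded below by $\min(\din,\dout)(m-r)$. No subspace contains another because the multiset of (incoming weight, multiplicity) pairs is an invariant readable off any generic point.

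\textbf{Connectivity for $m>r$.} I would fix a base subspace $\Gamma_0$ containing a movable zero-type neuron at some position $i^*$, i.e.\ a point $\ptheta_0\in\Gamma_0$ with $\alpha^1_1=0$, so that neuron $i^*$ contributes nothing to the output regardless of its incoming weight. The plan is to first show that, for every transposition $\tau=(i^*,i)$, the subspace $\Perm_\tau\Gamma_0$ is reachable from $\Gamma_0$ by a concatenation of three equal-loss line segments: (1) translate $i^*$'s incoming vector linearly from $w'_1$ to the incoming vector sitting at position $i$, which preserves the loss because $\alpha^1_1=0$; (2) slide outgoing-weight mass between the now co-located neurons at positions $i^*$ and $i$ along their shared sum constraint, transferring the zero from $i^*$ to $i$; (3) translate the freshly zero-type neuron at $i$'s incoming vector to its target dummy location. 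Since $S_m$ is generated by transpositions, chaining such triples connects $\Gamma_0$ to every $\Perm_\pi\Gamma_0$. For a subspace $\Gamma_{s'}$ not of the form $\Perm_\pi\Gamma_0$, I would bridge to $\Gamma_0$ by driving an outgoing weight to zero along an affine move inside $\Gamma_{s'}$, producing a zero-type neuron and hence a point lying in some $\Perm_\pi\Gamma_0$ already connected to the base.

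\textbf{Main obstacle.} The delicate step is checking that each intermediate configuration in the three-segment path genuinely lies in $\expman$: as the zero-type neuron migrates, the tuple-structure and labelling permutation of the subspace containing the current point change (e.g.\ a replicated group temporarily grows by one, or a dummy group is absorbed and re-born), and one must exhibit at each moment an explicit $(s'',\pi)$ realising the point's membership in $\Perm_\pi\Gamma_{s''}$. Once this bookkeeping is carried out for one transposition, handling separately the sub-cases where position $i$ was originally replicated versus originally dummy in $\ptheta_0$, the global connectivity follows from the combinatorial fact that $S_m$ is generated by transpositions.
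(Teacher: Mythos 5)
Your proposal matches the paper's argument in all essential respects: the counting and dimension computations are identical (multinomial over tuples $s$, normalization by $c_1!\cdots c_{m-r}!$ for relabelling equal-size dummy groups, and the formula $\dim\subs = \dout(m-r-j)+j\,\din$), and the connectivity argument is built on the same key mechanism (a three-segment path realizing a transposition $(i^*,i)$ that transports a zero-type neuron, plus the fact that such transpositions generate $S_m$). The one genuine organizational difference is that the paper handles general $m>r$ by induction on $m$ (proving the base case $m=r+1$ explicitly, then defining $\widetilde{\Theta}_{r\to m+1}$ and reducing its connectivity to that of $\Theta_{r\to m}$ via the freedom in $w_0$), whereas you propose a direct, non-inductive argument: fix a hub $\Gamma_0$ with all dummy singletons and zero outgoing weights, connect all $\Perm_\pi\Gamma_0$ by chaining transposition paths, then bridge every other $\Gamma_{s'}$ into the hub by zeroing outgoing weights inside $\Gamma_{s'}$. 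Both strategies are valid; yours avoids the induction but requires zeroing several outgoing weights simultaneously (one affine move suffices, but ``an outgoing weight'' should read ``all superfluous outgoing weights'') and explicit verification that the migrating zero-type neuron always has a valid $(s'',\pi)$ — exactly the bookkeeping you flag as the main obstacle. The paper's induction sidesteps some of this by only ever adding one extra neuron at a time.
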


\begin{proof}
The proof of this theorem is divided in two parts. In Proposition~\ref{prop:invariances-all-in-expman}, we count the number of affine subspaces in $ \expman $ and in Theorem~\ref{thm:connectivity-man}, we prove the connectivity of the $ r \to m $ expansion manifold for $ m > r $.
\end{proof}

\begin{proposition}\label{prop:invariances-all-in-expman}
For $ m \geq r $, $ \expman $ has exactly
 \begin{align*}
 T(r, m) := \sum_{j = 0}^{m-r}  \sum_{\substack{\summ(s) = m \\ k_i \geq 1, b_i \geq 1}} \binom{m}{k_1, \ldots, k_r, b_1, \ldots, b_j} \frac{1}{c_b}
 \end{align*}
distinct affine subspaces (none is including another one) of dimension at least $ \min(\din, \dout) (m - r)  $.
Here
$ c_b := c_1! c_2! \cdots c_{m-r}! $ is a normalization factor where $ c_i $ is the number of occurence of $ i $ among $ (b_1, \ldots, b_j) $.
\end{proposition}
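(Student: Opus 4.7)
\emph{Proof plan.} My approach is to set up a bijection between the affine subspaces appearing in $\expman$ and equivalence classes of ``labelled arrangements'' of the $m$ neurons, then extract the multinomial count and the dimension formula, and finally verify distinctness.

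First I would fix a tuple $s=(k_1,\ldots,k_r,b_1,\ldots,b_j)$ with $k_i\geq 1$, $b_i\geq 1$, and $\summ(s)=m$, and record the structure of a generic point in $\Perm_\pi\subs$: each of the $m$ neuron slots carries one of $r+j$ labels, where label $i\in[r]$ occurs $k_i$ times and forces $w=w_i$ (a fixed vector), while label $r+t$ with $t\in[j]$ occurs $b_t$ times and forces $w$ to equal a common free dummy $w'_t\in\R^{\din}$. The constraints on the outgoing weights are within‐group sum conditions ($=a_t$ for real groups, $=0$ for dummy groups). For fixed $s$, the number of distinct ways of placing these labels in the $m$ slots is the multinomial $\binom{m}{k_1,\ldots,k_r,b_1,\ldots,b_j}$. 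However, two labellings that differ only by a permutation of dummy‐group labels of the same size produce the same affine subspace, because the corresponding $w'_t$'s are free parameters and therefore interchangeable; the size of this redundancy group is exactly $c_1!\cdots c_{m-r}!$ with $c_i$ the number of $b_t$'s equal to $i$. Summing over $j$ and over compositions $s$ gives $T(r,m)$.

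Next I would compute the dimension of $\subs$: the free parameters are the $j$ vectors $w'_t\in\R^{\din}$, contributing $j\cdot\din$ dimensions, together with the outgoing weights, which contribute $(k_t-1)\dout$ per real group and $(b_t-1)\dout$ per dummy group, for a total of $(m-r-j)\dout$. Thus $\dim \subs = j\cdot\din+(m-r-j)\dout$, which is linear in $j\in\{0,\ldots,m-r\}$ and therefore minimised at one of the endpoints, giving $\dim \subs \geq \min(\din,\dout)(m-r)$.

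Distinctness is the delicate step. I would argue that picking $w'_1,\ldots,w'_j$ generically (distinct from each other and from $w_1,\ldots,w_r$), one recovers from any point of $\Perm_\pi\subs$ both the multiset of incoming vectors at each position and hence the equivalence class of the labelling. Two different classes (either different $s$, or the same $s$ with $\pi,\pi'$ in different cosets of the dummy‐relabelling group) therefore yield points with different position‐to‐vector assignments, so neither subspace is contained in the other. The main obstacle I expect is precisely this distinctness step: one must rule out the possibility that a non‐generic choice of dummies inside one affine subspace accidentally lands inside another, and that requires showing the generic points described above are Zariski–dense in each affine subspace, so a coincidence on a lower‐dimensional subset cannot yield full inclusion. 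Once this is in place, summing the counts and matching the dimension bound completes the proof.
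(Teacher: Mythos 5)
Your argument mirrors the paper's: you count arrangements of the incoming weight vectors via the multinomial coefficient, correct by $1/(c_1!\cdots c_{m-r}!)$ for interchangeable same-size dummy groups, and compute the dimension $j\,\din+(m-r-j)\,\dout\geq\min(\din,\dout)(m-r)$ exactly as in the paper. The only supplement is your explicit genericity/Zariski-density sketch for the ``none includes another'' clause, which the paper's proof asserts but does not spell out --- a reasonable refinement rather than a different route.
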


\begin{proof}
The dimension of the subspace $ \subs $ is
$$ \sum_{t=1}^r (k_t - 1) \dout + \sum_{t=1}^j (b_t - 1) \dout +  j \din = (m - r - j)\, \dout + j\, \din \geq \min(\din, \dout) (m - r). $$

It is enough to count the distinct configurations of the incoming weight vectors
\begin{align*}
    (\underbrace{w_1, \ldots, w_1}_{k_1}, \ldots, \underbrace{w_r, \ldots, w_r}_{k_r}, \underbrace{w_1', \ldots, w_1'}_{b_1}, \ldots, \underbrace{w_j', \ldots, w_j'}_{b_j})
\end{align*}
since the outgoing weight vectors configuration follows that of the incoming ones. For this particular tuple, the number of configurations is $ \frac{m!}{k_1! \cdots k_r! b_1! \cdots b_j!} \frac{1}{c_b} $ where the normalization factor $ c_b = c_1! \cdots c_{m-r}! $ comes from the following subconfigurations: if $b_1=b_2$, then we need to divide by $2$ since in that case one can swap $w_1'$ with $w_2'$.
More generally, if $b_{\ell_1}=b_{\ell_2}=\ldots=b_{\ell_{c_i}}=i$, we need to divide by the number of permutations between the groups of $i$ incoming weight vectors
\begin{align*}
    \underbrace{(\underbrace{w_{\ell_1}', \ldots, w_{\ell_1}'}_{i}, \ldots, \underbrace{w_{\ell_{c_i}}', \ldots, w_{\ell_{c_i}}'}_{i})}_{c_i}.
\end{align*}
There are $ c_i $ groups with the repetition of $ i $ zero-type incoming weight vectors (such that their summation is fixed at zero) thus we have to cancel out the recounting coming from these groups via a division by $ 1 / c_i! $.
Summing over all possible tuples $(k_1, \ldots, k_r, b_1, \ldots, b_j) $, we find the formula.
\end{proof}

\begin{theorem}\label{thm:connectivity-man}
For $ m > r $, $ \expman $ is connected: any pair of distinct points $ \ptheta, \ptheta' \in \expman $ is connected via a union of line segments $ \pgamma : [0,1] \to \expman $ such that $ \pgamma(0) = \ptheta $ and $ \pgamma(1) = \ptheta' $.
\end{theorem}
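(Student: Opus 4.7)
The plan is to reduce the connectivity of $\expman$ to the connectivity of a finite graph whose vertices are the individual affine subspaces listed in Proposition~\ref{prop:invariances-all-in-expman} and whose edges correspond to pairs of subspaces joined by an explicit polygonal path lying entirely in $\expman$. Since each $\Perm_\pi \subs$ is itself an affine (hence convex) set, any two points inside the same subspace are joined by a single line segment; so the task is to build enough inter-subspace paths to make the graph connected.

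First I would fix a distinguished \emph{base subspace} $\Gamma_0 := \Gamma_{s_0}(\ptheta^r)$ for a tuple $s_0$ containing a zero-type group (possible because $m > r$), and within $\Gamma_0$ a distinguished point $\ptheta_0$ at which one designated zero-type neuron has incoming weight $w'$ distinct from $w_1, \ldots, w_r$ and outgoing weight exactly $0$. When the zero-type group has size $b_t \geq 2$, I would use the convention of the footnote in the sketch: set the outgoing weight of the \emph{first} neuron in the group to $0$, absorbing the rest into the remaining $b_t - 1$ slots so that $\sum_i \alpha_t^i = 0$ still holds.

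The core technical step is a \emph{transposition lemma}: for any transposition $(i^*, i) \in S_m$ where position $i^*$ is occupied by the designated zero-amplitude zero-type neuron, the subspaces $\Gamma_0$ and $\Perm_{(i^*, i)} \Gamma_0$ are joined by a three-segment path inside $\expman$, as illustrated in Figure~\ref{fig:connected-manifold}(a): (i) with the amplitude at $i^*$ kept at $0$, slide its incoming weight from $w'$ to the incoming weight sitting at position $i$ (the function is unchanged since the $i^*$-contribution is zero throughout); (ii) with both positions now sharing the same incoming weight, slide the pair of outgoing weights along the affine line that keeps their sum fixed, exchanging the roles of the two positions; (iii) with the amplitude at $i$ now zero, slide its incoming weight to the target value, again at no cost to the function. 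Each segment is a straight line in parameter space and each intermediate point factorizes as an element of some $\Perm_{\pi'} \Gamma_{s'}(\ptheta^r) \subset \expman$.

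From here the general connectivity follows by composing elementary moves: (a) permutations decompose into transpositions, and any transposition can, after moving the zero-type slot into the right position by a preliminary three-segment path, be realized by the lemma; (b) changing the tuple $s$ amounts to merging or splitting zero-type groups, which is again achieved by line segments along the zero-sum constraint $\sum_i \alpha_t^i = 0$ and along incoming-weight shifts at zero amplitude. The main obstacle I expect is purely combinatorial bookkeeping: verifying that each intermediate line segment stays in $\expman$ when zero-type groups have size $b_t \geq 2$, because the amplitude constraints couple several coordinates. The convention of fixing one representative per group with zero outgoing weight should streamline this verification and let the argument be carried out uniformly for all $s$.
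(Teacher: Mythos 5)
Your proposal is built around the same central device as the paper's proof — the three-segment transposition path and the fact that the permutation subgroup you need is generated by transpositions involving the designated zero-type slot — but the global bookkeeping is organized differently. The paper proves connectivity of $\expman$ by induction on $m$: the base case $m = r+1$ is handled directly with the transposition lemma, and then for the inductive step one fixes a zero-type singleton with exactly-zero outgoing weight, observes that ``forgetting'' it gives a bijection onto $\Theta_{r \to m}(\ptheta^r)$, and pulls the connectivity of the smaller expansion manifold back through this bijection. You instead propose a finite-graph argument, where vertices are the affine subspaces $\Perm_\pi \subs$ and edges correspond to explicit polygonal paths. Both routes are valid; the induction has the advantage of avoiding a case-by-case enumeration of moves between arbitrary $s$-tuples, while your graph picture is arguably more transparent about what is actually being connected to what.

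One place where your write-up is imprecise is step (c). You claim that changing the tuple $s$ ``amounts to merging or splitting zero-type groups,'' but $s = (k_1, \ldots, k_r, b_1, \ldots, b_j)$ also records the replication counts $k_i$ of the teacher weights, and connecting, say, $\Gamma_{s_1}$ with $k_1 = 2$ to a permuted $\Gamma_0$ requires converting a teacher-copy neuron into a zero-type one. Your own parenthetical — line segments along the zero-sum constraint together with incoming-weight shifts at zero amplitude — in fact covers this case (set the extra copy's outgoing weight to zero inside $\Gamma_{s_1}$, then slide its incoming weight away from $w_1$), but the headline description of the move set understates what is needed. Concretely, the cleanest way to phrase step (c) is probably the observation that after setting all ``extra'' amplitudes to zero inside $\Gamma_s$ (a single segment, since $\Gamma_s$ is affine), the resulting point lies in $\Perm_\pi \Gamma_0$ for a suitable $\pi$ — i.e., the subspaces with different $s$-tuples actually intersect $\Perm_\pi \Gamma_0$, so no new path-building is required beyond the transposition lemma. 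With that clarification the argument goes through; as you anticipated, the remaining work is verifying that each intermediate point of each line segment sits inside some $\Perm_{\pi'}\Gamma_{s'}(\ptheta^r) \subset \expman$, which the paper streamlines by the forgetting-one-neuron bijection.
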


\begin{proof} We first prove the case $ m = r + 1$. Let $\ptheta^r=(w_1,\ldots,w_r,a_1,\ldots,a_r)$ and consider the following set of points
  \begin{align*}
          \widetilde{\Theta}_{r \to r + 1}(\ptheta^r) := \{ \Perm_\pi \ptheta^{r+1} : \ptheta^{r+1} = (w_0, w_{1}, w_{2}, \ldots, w_{r}, 0, a_{1}, a_{2}, \ldots, a_{r});
           \ \pi \in S_r, \ w_0 \in\R^{\din} \}
  \end{align*}
which is a subset of the expansion manifold $ \Theta_{r \to r + 1}(\ptheta^r) $.
We will show that by construction that a point $\ptheta_0\in \widetilde{\Theta}_{r \to r + 1}(\ptheta^r)$ such that $\ptheta_0 = (w_0, w_{1}, w_{2}, \ldots, w_{r}, 0, a_{1}, a_{2}, \ldots, a_{r})$ is connected to any other point $ \widetilde{\ptheta} = \Perm_\pi \ptheta_0 \in \widetilde{\Theta}_{r \to r + 1}(\ptheta^r) $

via a path in $ \Theta_{r \to r + 1}(\ptheta^r) $. To do so we first show that a neighbor where the neuron $ \vartheta_0 = (w_0, 0) $ is swapped with $ \vartheta_i = (w_i, a_i) $
 \begin{align*}
     \ptheta_1 = (w_{i}, w_{1}, \ldots, w_{i-1}, w_0, w_{i+1}, \ldots, w_{r},
     a_{i}, a_{1}, \ldots, a_{i-1}, 0, a_{i+1}, \ldots, a_{r})
 \end{align*}
can be reached in three steps using the following line segments $ \pupsilon^{(1)}_1, \pupsilon^{(1)}_2, \pupsilon^{(1)}_3: [0, 1] \to \Theta_{r \to r+1}(\ptheta^r) $
\begin{align*}
      \pupsilon^{(1)}_1(\alpha) &= (\alpha(w_{i} - w_0) + w_0, w_{1}, w_{2}, \ldots, w_{r}, 0, a_{1}, a_{2}, \ldots, a_{r}) \\
      \pupsilon^{(1)}_2(\alpha) &= (w_{i}, w_{1}, \ldots, w_{i-1}, w_{i}, w_{i+1}, \ldots, w_{r},
      \alpha a_{i}, a_{1}, \ldots, a_{i-1}, (1 - \alpha) a_{i}, a_{i+1}, \ldots, a_{r}) \\
      \pupsilon^{(1)}_3(\alpha) &= (w_{i}, w_{1}, \ldots, w_{i-1}, \alpha(w_0 - w_{i}) + w_{i}, w_{i+1}, \ldots, w_{r},
      a_{i}, a_{1}, \ldots, a_{i-1}, 0, a_{i+1}, \ldots, a_{r})
\end{align*}
where we have $ \pupsilon^{(1)}_1(0) = \ptheta_0 $, $ \pupsilon^{(1)}_1(1) = \pupsilon^{(1)}_2(0) $,
$ \pupsilon^{(1)}_2(1) = \pupsilon^{(1)}_3(0) $, and $ \pupsilon^{(1)}_3(1) = \ptheta_1 $.
In particular, we constructed a path $ \pgamma^{(1)} $ by glueing three line segments at their end points $$ \pgamma^{(1)}(t) = \pupsilon^{(1)}_1(3t)\mathbbm{1}_{t\in [0,1/3)} + \pupsilon^{(1)}_2(3(t-1/3))\mathbbm{1}_{t\in [1/3,2/3)} + \pupsilon^{(1)}_3(3(t-2/3))\mathbbm{1}_{t\in [2/3,1]}$$
where $ \pgamma^{(1)}(0) = \ptheta_0 $ and $ \pgamma^{(1)}(1) = \ptheta_1 $.
Note that going from $ \ptheta_0 \to \ptheta_1 $, we swapped the neurons $ \vartheta_0 $ and $ \vartheta_{i} $. Moreover, it is well known that any permutation can be written as a composition of transpositions (permutations leaving all elements unchanged but two) and that $(i\ j) = (0\ j) \circ (0\ i) \circ (0\ j)$.
In particular, we can reach $ \widetilde{\ptheta} $ only by swapping $ \vartheta_0 $ with other neurons, which corresponds to some other paths $ \pgamma^{(2)}, \ldots,  \pgamma^{(r)} $ made of three line segments.
Glueing these paths, we observe that $ \widetilde{\Theta}_{r \to r + 1}(\ptheta^r) $ is connected via paths in $ \Theta_{r \to r + 1}(\ptheta^r) $. To finish the case for $ m = r + 1 $, it is enough to show that any point $ \ptheta \in \Theta_{r \to r + 1}(\ptheta^r) \setminus \widetilde{\Theta}_{r \to r + 1}(\ptheta^r)$
\begin{align*}
    \ptheta = \Perm_{\pi}(w_{i}, w_{i}, w_{1}, \ldots, w_{r}, \alpha a_{i},(1 - \alpha)a_{i}, a_{1}, \ldots,  a_{r})
\end{align*}
is connected (via a line segment) to a point in $ \widetilde{\Theta}_{r \to r + 1}(\ptheta^r) $ which is simply
\begin{align*}
      \widetilde{\ptheta} = \Perm_{\pi}(w_{0}, w_{i}, w_{1}, \ldots, w_{r}, 0, a_i, a_{1}, \ldots,  a_{r}).
\end{align*}

Next we will prove for the general case $ m \geq r + 1 $ by induction. We assume that $ \Theta_{r \to m}(\ptheta^r) $ is connected and we will show that $ \Theta_{r \to m + 1}(\ptheta^r) $ is also connected. First we show the connectivity of the points in the following set
\begin{align*}
    \widetilde{\Theta}_{r \to m + 1}(\ptheta^r) := \{ \Perm_{\pi} \ptheta^{m+1}: \ptheta^{m+1} =
    (\underbrace{w_1, \ldots, w_1}_{k_1}, \ldots, \underbrace{w_r, \ldots, w_r}_{k_r}, \underbrace{w_1', \ldots, w_j'}_{j+1}, \underbrace{a_1^1, \ldots a_1^{k_1}}_{k_1}, \ldots,
    \underbrace{a_r^1, \ldots a_r^{k_r}}_{k_r}, \underbrace{0, \ldots, 0}_{j+1}) \notag \\
    \text{where} \ k_i \geq 1, j \geq 0, k_1 + \ldots + k_r + j = m, \sum_{i=1}^{k_j} a_j^i = a_j, \text{and} \ \pi\in S_{m+1} \}
\end{align*}
which is a subset of $ {\Theta}_{r \to m + 1}(\ptheta^r) $.
From the induction hypothesis, we have the connectivity of the manifold $\Theta_{r \to m}(\ptheta^r)$.

An element $ \widetilde{\ptheta} \in \widetilde{\Theta}_{r \to m + 1}(\ptheta^r) $ can be written as
\begin{align*}
    &\widetilde{\ptheta}  = \Perm_{\tilde{\pi}} (\underbrace{w_1, \ldots, w_1}_{k_1}, \ldots, \underbrace{w_r, \ldots, w_r}_{k_r}, \underbrace{w_1', \ldots, w_j'}_j, \underbrace{w_{0}}_1, \underbrace{a_1^1, \ldots a_1^{k_1}}_{k_1}, \ldots,
    \underbrace{a_r^1, \ldots a_r^{k_r}}_{k_r}, \underbrace{0, \ldots, 0}_{j+1}),
\end{align*}
for some $j\geq 0$ and $\tilde{\pi}\in S_{m+1}$.
For a fixed $w_{0}$ at a fixed position, there is a bijection $\widetilde{\Theta}_{r\to m+1}(\ptheta^r)\to\Theta_{r\to m}(\ptheta^r)$ that sends $\widetilde{\ptheta}$ to
\begin{align*}
  &\ptheta  = \Perm_{\pi} (\underbrace{w_1, \ldots, w_1}_{k_1}, \ldots, \underbrace{w_r, \ldots, w_r}_{k_r}, \underbrace{w_1', \ldots, w_j'}_j, \underbrace{a_1^1, \ldots a_1^{k_1}}_{k_1}, \ldots,
    \underbrace{a_r^1, \ldots a_r^{k_r}}_{k_r}, \underbrace{0, \ldots, 0}_j)
\end{align*}
for some $\pi\in S_m$, i.e. $\widetilde{\ptheta}$ where $w_{0}$ and its associated $0$ outgoing weight vector have been dropped.
In particular, any two points of $\widetilde{\Theta}_{r\to m+1}(\ptheta^r)$ with the same $w_{0}$ component at the same position are connected as a consequence of this correspondence and the connectivity of $\Theta_{r\to m}(\ptheta^r)$.
Moreover, we note that $\widetilde{\ptheta}\in\widetilde{\Theta}_{r\to m+1}(\ptheta^r)$ is connected via a line segment in $\widetilde{\Theta}_{r\to m+1}(\ptheta^r)$ to every other point in $\widetilde{\Theta}_{r\to m+1}(\ptheta^r)$ whose components are the same as $\widetilde{\ptheta}$ except for $w_{0}$. This straightforwardly generalizes for different positions of $ w_0 $ and this establishes the connectivity of $ \widetilde{\Theta}_{r\to m+1}(\ptheta^r) $.

Finally, we pick a point $ \ptheta \in \Theta_{r \to m + 1}(\ptheta^r) $ that is
$$
\ptheta = \Perm_\pi (\underbrace{w_1, \ldots, w_1}_{k_1}, \ldots, \underbrace{w_r, \ldots, w_r}_{k_r}, \underbrace{w_1', \ldots, w_1'}_{b_1}, \ldots, \underbrace{w_j', \ldots, w_j'}_{b_j}, \underbrace{a_1^1, \ldots a_1^{k_1}}_{k_1}, \ldots,
\underbrace{a_r^1, \ldots a_r^{k_r}}_{k_r}, \underbrace{\alpha_1^1, \ldots, \alpha_1^{b_1}}_{b_1}, \ldots, \underbrace{\alpha_j^1, \ldots, \alpha_j^{b_j}}_{b_j}).
$$
for some $ \pi \in S_{m+1} $. Note that $ \ptheta $ is connected to
$$
\widetilde{\ptheta} = \Perm_\pi (\underbrace{w_1, \ldots, w_1}_{k_1}, \ldots, \underbrace{w_r, \ldots, w_r}_{k_r}, \underbrace{w_1', \ldots, w_1'}_{b_1}, \ldots, \underbrace{w_j', \ldots, w_j'}_{b_j}, \underbrace{a_1^1, \ldots a_1^{k_1}}_{k_1}, \ldots,
\underbrace{a_r^1, \ldots a_r^{k_r}}_{k_r}, \underbrace{0, \ldots, 0}_{b_1}, \ldots, \underbrace{0, \ldots, 0}_{b_j}),
$$
which is in $ \widetilde{\Theta}_{r \to m + 1}(\ptheta^r) $. We have shown that all points in $ \Theta_{r \to m + 1}(\ptheta^r) $ are connected, which completes the induction step thus the proof.

\end{proof}

\subsection{No New Global Minimum}\label{sec:no-new-minima2}

The following assumption, only made in Theorem~\ref{thm:all-global-minima}, ensures that the activation function $\sigma$ has no specificity that yields other invariances than the symmetries between units, e.g. $\sigma$ cannot be even or odd.

\textbf{Assumption A.}
\textit{Let $ \sigma $ be a smooth activation function. We suppose that $ \sigma(0) \neq 0 $, that $ \sigma^{(n)}(0) \neq 0 $ for infinitely many even and odd values of $n\geq 0$, where $ \sigma^{(n)} $ denotes the $ n $-th derivative.}

The next lemma contains the main argument to prove that when considering an overparametrized 2-layers neural network, no new global minima are created besides those coming from invariances.

\begin{lemma}\label{lem:Vandermonde} Suppose that the activation function $ \sigma $ satisfies the Assumption A. If for some pairwise distinct nonzero $ \beta_1, \ldots, \beta_k \in\R $ and some constant $c\in\R$ we have $
        g(\alpha):=
        \sum_{\ell=1}^k a_\ell \sigma(\alpha \beta_\ell) = c $
for all $ \alpha \in \R $, then $ a_\ell = 0 $ for all $ \ell \in [k] $.
\end{lemma}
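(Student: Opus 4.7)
My approach is to exploit the smoothness of $\sigma$ by differentiating $g$ at the origin. Since $g(\alpha)\equiv c$ is constant, all derivatives of order $n\geq 1$ vanish, while for $n=0$ we compare with $g(0)=\sigma(0)\sum_\ell a_\ell=c$. For $n\geq1$, the chain rule gives
\begin{equation*}
    0=g^{(n)}(0)=\sigma^{(n)}(0)\sum_{\ell=1}^{k}a_\ell\,\beta_\ell^{\,n}.
\end{equation*}
By Assumption A, there is an infinite set $N\subset\mathbb{N}$ of indices for which $\sigma^{(n)}(0)\neq0$, and $N$ contains infinitely many even integers $N_e$ and infinitely many odd integers $N_o$. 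Hence $\sum_\ell a_\ell\beta_\ell^{\,n}=0$ for all $n\in N$.

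The next step is to turn these infinitely many scalar constraints into the conclusion $a_\ell=0$. I would split the index set by sign: write $I^+=\{\ell:\beta_\ell>0\}$ and $I^-=\{\ell:\beta_\ell<0\}$, so that $\beta_\ell^{\,n}=|\beta_\ell|^n$ for $n$ even and $\beta_\ell^{\,n}=\pm|\beta_\ell|^n$ depending on the sign of $\beta_\ell$ for $n$ odd. Thus for $n\in N_e$,
\begin{equation*}
    \sum_{\ell\in I^+}a_\ell|\beta_\ell|^n+\sum_{\ell\in I^-}a_\ell|\beta_\ell|^n=0,
\end{equation*}
and for $n\in N_o$,
\begin{equation*}
    \sum_{\ell\in I^+}a_\ell|\beta_\ell|^n-\sum_{\ell\in I^-}a_\ell|\beta_\ell|^n=0.
\end{equation*}
Let $|\beta_{\ell^*}|=\max_\ell|\beta_\ell|$. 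Dividing both identities by $|\beta_{\ell^*}|^n$ and letting $n\to\infty$ inside $N_e$ (resp.\ $N_o$), all terms with $|\beta_\ell|<|\beta_{\ell^*}|$ vanish, yielding two linear equations in the (at most two) coefficients $a_\ell$ with $|\beta_\ell|=|\beta_{\ell^*}|$. Adding and subtracting those two equations forces each such $a_\ell$ to be zero. I would then iterate this argument on the remaining indices, peeling off successively smaller values of $|\beta_\ell|$ until all $a_\ell=0$.

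The main subtlety — and the reason both parities of $n$ are required in Assumption A — is that distinct $\beta_\ell$'s may share the same absolute value (when $\beta_i=-\beta_j$), so a single-parity asymptotic argument could collapse two independent coefficients into the single combination $a_i+a_j$ or $a_i-a_j$. Having simultaneous constraints from $N_e$ and $N_o$ provides the second independent equation needed to disentangle them. Once this leading-order cancellation is handled, the rest is a straightforward induction on the number of distinct values of $|\beta_\ell|$; no further information about $\sigma$ is needed beyond Assumption A.
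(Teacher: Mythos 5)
Your proof is correct and follows essentially the same strategy as the paper: differentiate $g$ at the origin, observe that $\sum_\ell a_\ell\beta_\ell^n=0$ for infinitely many $n$ of each parity, and let the dominant $|\beta_\ell|$ term separate out asymptotically, with both parities needed to disentangle a possible pair $\beta_i=-\beta_j$. Your organization (split by sign of $\beta_\ell$, divide by $|\beta_{\ell^*}|^n$, take the limit, solve the resulting $2\times2$ system, iterate) is a cleaner packaging than the paper's explicit four-case analysis with triangle-inequality estimates, but it is the same underlying argument.
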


\begin{proof}
We reorder the indices such that for all $\ell\in [k-1]$, either $|\beta_\ell|>|\beta_{\ell+1}|$, or $\beta_{\ell}=-\beta_{\ell+1}$ such that $|a_{\ell}|\geq |a_{\ell+1}|$ (if the equality holds the labelling between the two is not important).
We distinguish the four following cases:
\begin{enumerate}
    \item $|\beta_1|>|\beta_2|$,
    \item $\beta_1=-\beta_2$ and $|a_1|>|a_2|$,
    \item $\beta_1=-\beta_2$ and $a_1=a_2$,
    \item $\beta_1=-\beta_2$ and $a_1=-a_2$.
\end{enumerate}
Note that there cannot be more that two indices $\ell$ with same $|\beta_\ell|$ and that 1. 2. 3. and 4. above are disjoint and cover all the possible cases.

Suppose that 1. holds.
Note that
\begin{align*}
    g^{(n)}(0)=\sum_{\ell = 1}^{k}a_\ell\beta_\ell^n\sigma^{(n)}(0)=0,
\end{align*}
for all $n\geq 1$, by assumption. On the other hand, the triangle inequality yields that
\begin{align*}
    |g^{(n)}(0)|
    &\geq \left(|a_1\beta_1^n|-\left|\sum_{\ell \neq 1}a_\ell\beta_\ell^n\right|\right)|\sigma^{(n)}(0)|
    \geq \left(|a_1\beta_1^n|-|\beta_2^n|\sum_{\ell \neq 1}|a_\ell|\right)|\sigma^{(n)}(0)|.
\end{align*}
One can always choose $n_0\geq 1$ large enough such that $\sigma^{(n_0)}(0)\neq 0$ and
\begin{align*}
    |\beta_1|>|a_1|^{-1/n_0}|\beta_2|\left(\sum_{\ell \neq \ell_1}|a_\ell|\right)^{1/n_0},
\end{align*}
so that $|g^{(n)}(0)|>0$, which is a contradiction with the fact that $g\equiv c$. Hence $a_1=0$. This shows the claim in the particular situation where all $|\beta_{\ell}|$'s are distinct.

One can deal with case 2. using that $|a_1|>|a_2|$, writing
\begin{align*}
    |g^{(n)}(0)|
    &\geq \left((|a_1|-|a_2|)|\beta_1^n|-|\beta_3|\sum_{\ell \neq 1,2}|a_\ell|\right)|\sigma^{(n)}(0)|.
\end{align*}
The reasoning is then identical to 1.

In the case 3., since $\sigma$ has infinitely many non-zero even derivatives at $0$, we use that $a_1\beta_1^{2n}+a_2\beta_2^{2n}=2a_1\beta_1^{2n}$ to write
\begin{align*}
    |g^{(2n)}(0)|
    &\geq \left((2|a_1|)|\beta_1^{2n}|-\sum_{\ell \neq 1,2}|a_\ell\beta_\ell^{2n}|\right)|\sigma^{(2n)}(0)|,
\end{align*}
then choose $n$ large enough to argue as above that $a_1=a_2=0$.
We can thus eliminates these terms from the definition of $g$ and go on with the argument.

In the case 4., if $\sigma$ has infinitely many non-zero odd derivatives at $0$, we apply the same reasoning as in 3. to show that $a_1=a_2=0$.

Since $\sigma$ has infinitely many even and infinitely many odd non-zero derivatives at $0$, we can iterate the argument and the proof is over since the four cases above cover all possible cases.
\end{proof}

When $\sigma$ does not satisfy Assumption A, the proof above allows us to derive the following results:

\begin{lemma}
If $\sigma$ is analytic such that $\sigma^{(n)}(0)\neq 0$ for infinitely many even $n\geq 0$ but only finitely many odd $n\geq 1$, then the function $g$ in Lemma~\ref{lem:Vandermonde} can be written as
\begin{align*}
    g(\alpha)
    =\sum_{\ell=1}^{\widetilde{k}}\widetilde{a}_\ell\widetilde{\sigma}(\alpha\widetilde{\beta}_\ell),
\end{align*}
where $\widetilde{\sigma}$ is an odd polynomial, the $\widetilde{a}_\ell$'s are nonzero and the $|\beta_\ell|$'s are pairwise distinct.

Similarly, if $\sigma^{(n)}(0)\neq 0$ for infinitely many odd $n\geq 1$ but only finitely many even $n\geq 0$, then the function $g$ in Lemma~\ref{lem:Vandermonde} can be written as
\begin{align*}
    g(\alpha)
    =\sum_{\ell=1}^{\widetilde{k}}\widetilde{a}_\ell\widetilde{\sigma}(\alpha\widetilde{\beta}_\ell),
\end{align*}
where $\widetilde{\sigma}$ is an even polynomial, the $\widetilde{a}_\ell$'s are nonzero and the $|\beta_\ell|$'s are pairwise distinct.
\end{lemma}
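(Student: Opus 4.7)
My plan is to mimic the Taylor-coefficient argument of Lemma~\ref{lem:Vandermonde}, but to exploit that the hypothesis on $\sigma$ forces a clean even--odd decomposition. I would first write $\sigma=\sigma_e+\sigma_o$ with $\sigma_e(x)=\tfrac{1}{2}(\sigma(x)+\sigma(-x))$ even and $\sigma_o(x)=\tfrac{1}{2}(\sigma(x)-\sigma(-x))$ odd. Under the first hypothesis, $\sigma_o^{(n)}(0)=\sigma^{(n)}(0)$ for odd $n$ and $0$ for even $n$; since $\sigma$ is analytic and only finitely many odd derivatives of $\sigma$ are nonzero at $0$, the odd part $\sigma_o$ is an odd polynomial, while $\sigma_e$ remains analytic with infinitely many nonzero (even) derivatives at $0$.

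Next I would reorder the indices exactly as in the proof of Lemma~\ref{lem:Vandermonde}: decreasing $|\beta_\ell|$, and within paired slots $\beta_\ell=-\beta_{\ell+1}$ decreasing $|a_\ell|$. The four cases enumerated there cover every adjacent pair, and taking \emph{even}-order derivatives of $g$ at $0$ produces a dominant-term inequality
\begin{align*}
|g^{(2n)}(0)|\geq\bigl(C_1|\beta_1|^{2n}-C_2|\beta_2|^{2n}\bigr)|\sigma_e^{(2n)}(0)|
\end{align*}
for appropriate constants $C_1,C_2>0$ depending on which of cases 1, 2, or 3 is active. Because $\sigma_e$ has infinitely many nonzero even derivatives at $0$, the same choice of large $n$ as in Lemma~\ref{lem:Vandermonde} forces the corresponding $a_\ell$ to vanish in each of those three cases. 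After iterating this peeling, what remains of $g$ is only a disjoint union of case-4 pairs: each surviving pair $(\beta_\ell,-\beta_\ell,a_\ell,-a_\ell)$ contributes $a_\ell[\sigma(\alpha\beta_\ell)-\sigma(-\alpha\beta_\ell)]=2a_\ell\sigma_o(\alpha\beta_\ell)$ to $g$, in which only the polynomial part $\sigma_o$ survives. Picking one representative per pair and relabelling yields the claimed form $g(\alpha)=\sum_\ell\widetilde{a}_\ell\,\widetilde{\sigma}(\alpha\widetilde{\beta}_\ell)$ with $\widetilde{\sigma}=\sigma_o$ odd polynomial, $\widetilde{a}_\ell\neq 0$, and $|\widetilde{\beta}_\ell|$ pairwise distinct. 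The second part of the lemma is handled by the mirror argument: when $\sigma$ has infinitely many nonzero odd but only finitely many nonzero even derivatives at $0$, $\sigma_e$ is an even polynomial; \emph{odd}-order derivatives of $g$ now eliminate cases 1, 2, and 4, and the surviving case-3 pairs contribute $2a_\ell\sigma_e(\alpha\beta_\ell)$, giving $\widetilde{\sigma}=\sigma_e$ even polynomial.

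The main obstacle I foresee is bookkeeping the iterative peeling. The dominant-term estimate only compares the top two magnitudes $|\beta_1|,|\beta_2|$, so after removing an index one must re-verify that no new case-1/2/3 configuration is created among the remaining indices; this is not automatic if, for instance, two previously non-adjacent indices become the new top pair with equal $|\beta|$ but opposite signs of $a$. The cleanest workaround is a strong induction on the multiset $\{|\beta_\ell|\}$: at each step, pick the largest $|\beta|$ still present and show that either it already lies in a case-4 pair that can be set aside, or one of cases 1--3 applies and the corresponding $a_\ell$ is forced to be zero and deleted. The induction terminates because $k$ decreases, and at the end the set of surviving indices is exactly a disjoint union of case-4 pairs, so the rewriting of $g$ via $\sigma_o$ (or $\sigma_e$ in the symmetric case) is immediate.
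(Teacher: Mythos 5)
Your proof is correct and follows essentially the same route as the paper's: isolate case 4 (respectively case 3 in the mirror statement) as the only configuration that survives even-order (odd-order) derivatives of $g$ at $0$, eliminate the others by the same dominant-term estimate used in Lemma~\ref{lem:Vandermonde}, and collapse each surviving $\pm$-pair into $\widetilde{\sigma}(\cdot)=\sigma(\cdot)-\sigma(-\cdot)=2\sigma_o(\cdot)$, which is an odd polynomial since only finitely many of its Taylor coefficients at $0$ are nonzero. The explicit $\sigma=\sigma_e+\sigma_o$ split and the strong induction on the multiset $\{|\beta_\ell|\}$ are just cleaner bookkeeping of the paper's terser iterative peeling; the underlying argument is identical.
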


\begin{proof}
Suppose that $\sigma^{(2n+1)}(0)\neq 0$ for only finitely many $n\geq 0$. In the proof of Lemma \ref{lem:Vandermonde}, the only problematic situation is 4., that is $\beta_1=-\beta_2$ and $a_1=-a_2$.
In particular, they cancel out in the even derivatives of $g$, that is
\begin{align*}
    g^{(2n)}(0)=\sigma^{(2n)}(0)\sum_{\ell\neq 1,2}a_{\ell}\beta_\ell^{2n}.
\end{align*}
If $\beta_3,a_3,\beta_4,a_4$ do not fall into case 4. from the proof of Lemma \ref{lem:Vandermonde}, then one can show with the same argument therein that $a_3=a_4=0$.
Therefore, the problem reduces to the situation where $k$ is even, $\beta_{2\ell-1}=-\beta_{2\ell}$ and $a_{2\ell+1}=-a_{2\ell+2}$ for all $\ell\in [k/2]$.
We can then rewrite $g$ as
\begin{align*}
    g(\alpha)
    =\sum_{\ell=1}^{\widetilde{k}}\widetilde{a}_\ell\widetilde{\sigma}(\alpha\widetilde{\beta}_\ell),
\end{align*}
where $\widetilde{k}\leq k/2$, $\widetilde{a}_\ell:=a_{2\ell-1}$, $\widetilde{\beta}_\ell:=\beta_{2\ell-1}$ and $\widetilde{\sigma}(x):=\sigma(x)-\sigma(-x)$.
The function $\widetilde{\sigma}$ is analytic and locally polynomial around $0$, therefore is a polynomial on $\mathbb{R}$ and the $|\widetilde{\beta}_{\ell}|$'s are pairwise distinct.

When the even derivatives eventually vanish at $0$ instead, then the problematic situation is the 3. from Lemma \ref{lem:Vandermonde} and the function becomes
\begin{align*}
    g(\alpha)
    =\sum_{\ell=1}^{k/2}\widetilde{a}_\ell\widetilde{\sigma}(\alpha\widetilde{\beta}_\ell),
\end{align*}
where $\widetilde{a}_\ell:=a_{2\ell-1}$, $\widetilde{\beta}_\ell:=\beta_{2\ell-1}$ and $\widetilde{\sigma}(x):=\sigma(x)+\sigma(-x)$ with $\widetilde{\sigma}$ polynomial as above.
\end{proof}

\textbf{The case of the sigmoid activation $\sigma(x)=1/(1+e^{-x})$.} In this case, $\sigma(x)=1/2+\tanh(x)$ and $\tanh$ is an odd function, i.e. $\sigma^{(2n)}(0)=0$ for all $n\geq 1$. Hence, $\widetilde{\sigma}(x)=\sigma(x)+\sigma(-x)=1$ for all $x\in\R$ and one can construct the null function with already four $\beta$'s satisfying the constraints: $a_1\sigma(\beta_1 x)+a_1\sigma(-\beta_1 x)+a_3\sigma(\beta_3 x)+a_3\sigma(-\beta_3 x)=0$ as soon as $a_1=-a_3$, such that $|\beta_1|\neq |\beta_3|$. (One could then also achieve this for any even $p\geq 4$ such functions by tuning the $a_\ell$'s.)

\textbf{The case of the softplus activation $\sigma(x)=\ln(1+e^x)$.} The Softplus function is the primitive of the sigmoid such that $\sigma(x)=\int_{-\infty}^x \frac{1}{1+e^{-u}}\mathrm{d}u$. Therefore, $\sigma^{(2n+1)}(0)=0$ when $n\geq 1$. In particular, $\widetilde{\sigma}(x)=\sigma(x)-\sigma(-x)=x$ for all $x\in\R$.
One can thus obtain the null function with four (or a strictly greater even number) $\beta$'s satisfying the constraints: $a_1\sigma(\beta_1 x)-a_1\sigma(-\beta_1 x)+a_3\sigma(\beta_3 x)-a_3\sigma(-\beta_3 x)=0$, as soon as $a_1\beta_1+a_3\beta_3=0$, where $|\beta_1|\neq|\beta_3|$ are pairwise distinct.

\textbf{The case of the tanh activation function $\sigma(x)=(e^x-e^{-x})/(e^x+e^{-x})$.} Since $\sigma$ is an odd function, $\widetilde{\sigma}(x)=\sigma(x)+\sigma(-x)=0$ for all $x\in\R$ and therefore one can achieve the null function with two (or a strictly greater even number) $\beta$'s satisfying the constraints: $a_1\sigma(\beta_1 x)-a_1\sigma(-\beta_1 x)$.

We stress that for the three functions above, there is no other way to obtain the null function (i.e. the coefficients $\beta_{\ell}$'s and $a_{\ell}$'s have to be all in case 3. or case 4. depicted in the proof of Lemma \ref{lem:Vandermonde}, according to the derivatives of $\sigma$).

Recall that we consider the loss $ L_\mu^m $ where $ \mu $ is an input data distribution with support $ \R^{\din}$.

\begin{theorem}[Theorem 4.2 in the main]\label{thm:all-global-minima2} Suppose that the activation function $ \sigma $ satisfies the Assumption A. For $ m > r^* $,
let $ \ptheta $ be an $ m $-neuron point, and $ \ptheta_* $ be a unique $ r^* $-neuron global minimum up to permutation, i.e. $ L_\mu^{r^*}(\ptheta_*) = 0 $.
If $ L_\mu^m(\ptheta) = 0 $, then $ \ptheta \in \expmang $.
\end{theorem}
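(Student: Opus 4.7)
\emph{Proof proposal.} My plan is to reduce the statement to a uniqueness-of-representation question that Lemma~\ref{lem:Vandermonde} handles directly. Since $ \mu $ has full support on $ \R^{\din} $ and the single-sample loss $ c $ vanishes only on the diagonal, the hypotheses $ L_\mu^m(\ptheta) = 0 $ and $ L_\mu^{r^*}(\ptheta_*) = 0 $ will force the two networks to implement the same function pointwise,
\begin{align*}
  \sum_{i=1}^m a_i \, \sigma(w_i \cdot x) \;=\; \sum_{j=1}^{r^*} a_j^* \, \sigma(w_j^* \cdot x) \qquad \forall\, x \in \R^{\din}.
\end{align*}
I would then let $ u_1, \ldots, u_K $ denote the distinct elements of $ \{w_1, \ldots, w_m\} \cup \{w_1^*, \ldots, w_{r^*}^*\} $ and define the vector coefficients $ A_k := \sum_{i:\, w_i = u_k} a_i - \sum_{j:\, w_j^* = u_k} a_j^* \in \R^{\dout} $, so that the identity collapses to $ \sum_{k=1}^K A_k\, \sigma(u_k \cdot x) = 0 $. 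The goal reduces to proving $ A_k = 0 $ for every $ k $.

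To apply Lemma~\ref{lem:Vandermonde}, which is one-dimensional, I would extract a one-parameter subfamily by restricting $ x $ to a line. For each pair $ k \neq k' $ the set $ \{v \in \R^{\din} : (u_k - u_{k'})\cdot v = 0\} $ is a hyperplane, and so is $ \{v : u_k \cdot v = 0\} $ for each $ u_k \neq 0 $. The finite union of these hyperplanes has Lebesgue measure zero, so I can fix $ v \in \R^{\din} $ such that the scalars $ \beta_k := u_k \cdot v $ are pairwise distinct and nonzero for every $ k $ with $ u_k \neq 0 $. Setting $ x = \alpha v $ and isolating the (at most one) index $ k_0 $ with $ u_{k_0} = 0 $ yields
\begin{align*}
  \sum_{k:\, u_k \neq 0} A_k\, \sigma(\alpha \beta_k) \;=\; c \qquad \forall\, \alpha \in \R,
\end{align*}
where $ c := -\sigma(0) \sum_{k:\, u_k = 0} A_k $ is independent of $ \alpha $. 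Applying Lemma~\ref{lem:Vandermonde} componentwise over the $ \dout $ coordinates of the $ A_k $'s gives $ A_k = 0 $ for every $ k $ with $ u_k \neq 0 $; then $ c = 0 $, and since $ \sigma(0) \neq 0 $ the same conclusion follows for $ A_{k_0} $.

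The last step translates $ A_k = 0 $ into membership in $ \expmang $. If $ u_k = w_{j_0}^* $ for some $ j_0 \in [r^*] $ (necessarily unique by distinctness of the $ w_j^* $'s), then $ \sum_{i:\, w_i = w_{j_0}^*} a_i = a_{j_0}^* $; since $ a_{j_0}^* \neq 0 $ by irreducibility of $ \ptheta_* $, this group is nonempty and contributes $ k_{j_0} \geq 1 $ replicas of $ w_{j_0}^* $ whose outgoing weights sum to $ a_{j_0}^* $. Otherwise $ u_k $ does not appear among the $ w_j^* $'s and $ \sum_{i:\, w_i = u_k} a_i = 0 $, so the matching neurons of $ \ptheta $ form a zero-type group. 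Partitioning the $ m $ neurons of $ \ptheta $ along these groups and reordering by an appropriate permutation exhibits $ \ptheta $ in the form of Definition~\ref{def:affine-subs-Gamma} with base point $ \ptheta_* $, hence $ \ptheta \in \expmang $. The main difficulty I anticipate is the reduction to the one-dimensional Vandermonde lemma: one must choose a single direction $ v $ that simultaneously separates all pairs of distinct $ u_k $'s while avoiding the origin, and keep track of the constant that a possible zero-incoming-weight neuron contributes; everything past that is either an application of Lemma~\ref{lem:Vandermonde} or bookkeeping against Definition~\ref{def:affine-subs-Gamma}.
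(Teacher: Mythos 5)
Your proposal is correct and follows essentially the same route as the paper: both reduce the zero-loss condition to the pointwise identity of the two network functions, collapse the resulting sum over distinct incoming weights, and invoke Lemma~\ref{lem:Vandermonde} after restricting to a one-dimensional line through the input space. The only genuine departure is the device for choosing that line: you pick a generic direction $v$ by noting that the set of directions failing to separate the $u_k$'s is a finite union of hyperplanes (hence measure zero), whereas the paper constructs the explicit curve $t_\epsilon=(1,\epsilon,\ldots,\epsilon^M)^{\mathrm T}$ and argues along a sequence $\epsilon_n\to 0$; both are valid, and yours is arguably the more standard and transparent version. Your bookkeeping at the end (handling the possible $u_{k_0}=0$ neuron via $\sigma(0)\neq0$, and translating $A_k=0$ into the group structure of Definition~\ref{def:affine-subs-Gamma}) is also slightly more explicit than the paper's, but matches it in substance.
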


\begin{proof}
  For $ x\in\mathbb{R}^{\din}$, let $ h(x):=\sum_{j=1}^m a_j\sigma(w_j \cdot x)-\sum_{j=1}^{m^*} a_j^*\sigma(w_j^* \cdot x) $ and note that this function is zero on $ \mathbb{R}^{\din} $. Since $ \ptheta_* $ is irreducible, we know that the $ w_j^*$'s are pairwise distinct, and the $a_j^*$'s are nonzero.
  We can always group terms such that, wlog, the $w_j$'s are nonzero, pairwise distinct and the $ a_j $'s are nonzero, and we remain in the expansion manifold, as we now argue: we have that
  \begin{align*}
    h(x)=\sum_{j=1}^{m+m^*} a_j\sigma(w_j \cdot x),
  \end{align*}
 where we set $ a_j= -a_{j-m}^* $ and $ w_j=w_{j-m}^* $ for $ j\in\{m+1,\ldots,m+m^*\}$.
 If some of the $ w_j $'s appear several times, we group them together and if some are zero vectors, we summarize them in a constant $c\in\R$ and arrive at
 \begin{align*}
   h(x)=\sum_{j=1}^{M} A_j\sigma(W_j \cdot x)=c,
 \end{align*}
with $ M\leq m+m^* $, such that $ W_i \neq W_j $ for all $ i\neq j\in[M] $ with $W_j\neq (0,\ldots,0)^{\mathrm{T}}$.
Proving the claim, i.e. that $\ptheta\in\expmang$, is now equivalent to showing that $A_j=0$ for all $j\in M$.

If $ \din=1 $, we simply apply Lemma~\ref{lem:Vandermonde} which shows that $ A_j=0 $ for all $ j\in[M] $.

Suppose now that $ \din>1 $.
Let $\epsilon>0$ and let $t_\epsilon=(1,\epsilon,\epsilon^2,\ldots,\epsilon^M)^{\mathrm{T}}$. We define
\begin{align*}
    h_\epsilon(\alpha)
    :=\sum_{j=1}^MA_j\sigma(\alpha W_j\cdot t_\epsilon),\qquad\alpha\in\mathbb{R}.
\end{align*}
We claim that Lemma~\ref{lem:Vandermonde} applies to $h_\epsilon$, that is, the elements in $\{W_j\cdot t_\epsilon; j\in[M]\}$ are pairwise distinct for all $\epsilon>0$ small enough. Indeed, by contradiction, suppose that there exists a positive decreasing sequence $(\epsilon_n)_{n\geq 1}$ such that $\lim_{n\to\infty}\epsilon_n=0$ and $W_1\cdot t_{\epsilon_n} = W_2\cdot t_{\epsilon_n}$. Then $(W_1)_1 + \mathcal{O}(\epsilon_n) = (W_2)_1 + \mathcal{O}(\epsilon_n)$ where $(W_j)_k$ denotes the $k$-th component of $W_j$. Choosing $n$ large enough enforces $(W_1)_1=(W_2)_1$. It suffices then to explicit the terms of order $\epsilon_n$ in the identity and to reason identically since the rest is $\mathcal{O}(\epsilon_n^2)$. This implies that $W_1=W_2$, which is a contradiction with the assumption that the vectors $W_j$ are pairwise distinct.

Hence, by Lemma~\ref{lem:Vandermonde} applied on $h_\epsilon$, we have that $A_j=0$ for all $j\in[M]$, which concludes the proof.
\end{proof}

\begin{remark}
The theorem above does not apply to the sigmoid, the softplus and the $\tanh$ activation functions, since none of these satisfy Assumption A.
Nonetheless, we discussed above the theorem how to reconstruct a neural network function with these activations, with parameters that have to satisfy some explicit constraints depending on the activation (in particular, every $w'$ in the bigger network has to be either equal to $w$ or $-w$ of the smaller network).
By considering the extended expansion manifolds of these activation functions, comprised of the classical expansion manifold and these new points, Theorem \ref{thm:all-global-minima2} holds true, that is,
the extended expansion manifold is exactly the set of global minima.
\end{remark}

\subsection{Symmetry-Induced Critical Points}\label{sec:sym-induced-crit2}

We will prove the Propositions 4.3 and 4.4 in the main.

Recall that $ \ptheta^r_* = (w_1^*, \ldots, w_r^*, a_1^*, \ldots, a_r^*) $ denotes an irreducible critical point of $ L^r $.

\begin{proposition}[Proposition  4.3 in the main]\label{thm:crit-basins2} The expansion manifold $ \expmanc $ is a union of
  $$
   G(r, m) := \sum_{\substack{k_1 + \ldots + k_r = m \\ k_i \geq 1}} \binom{m}{k_1, \ldots, k_r}
  $$
distinct non-intersecting affine subspaces of dimension $ (m - r) $ and all points therein are critical points of $ L^m $.
\end{proposition}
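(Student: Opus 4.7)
The plan is to address the three assertions of the proposition in turn: dimension, counting together with non-intersection, and criticality. For the dimension, I would observe that on $\subsc$ the incoming weights and the output scalars $a_t^*$ are fixed; the only free parameters are the $m$ coefficients $\{\beta_t^i\}_{t\in[r],\,i\in[k_t]}$, subject to the $r$ independent linear constraints $\sum_{i=1}^{k_t}\beta_t^i = 1$ for $t\in[r]$. This yields an affine subspace of dimension $m-r$.

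For the counting, I would note that each subspace of the form $\Perm_\pi\subsc$ is uniquely encoded by the labelled partition of the $m$ neuron slots into $r$ non-empty groups, the $t$-th group consisting of the positions whose incoming weight is $w_t^*$. Since the $w_t^*$ are pairwise distinct by the irreducibility of $\ptheta_*^r$, this labelled partition can be read off from any single point of the subspace, so distinct labelled partitions give non-intersecting subspaces. For a fixed tuple $s=(k_1,\ldots,k_r)$, the number of such labelled partitions is $\binom{m}{k_1,\ldots,k_r}$, and summing over all admissible $s$ gives $G(r,m)$.

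For criticality, the key observation is that every $\ptheta^m\in\subsc$ implements the same function as $\ptheta_*^r$: using $\sum_{i=1}^{k_t}\beta_t^i a_t^*=a_t^*$, the sum defining $f^{(2)}(x\,|\,\ptheta^m)$ collapses to $\sum_{t=1}^r a_t^*\sigma(w_t^*\cdot x)=f^{(2)}(x\,|\,\ptheta_*^r)$. Consequently the residual derivative $\partial_1 c(f^{(2)}(x_k\,|\,\ptheta^m),y_k)$ equals $\partial_1 c(f^{(2)}(x_k\,|\,\ptheta_*^r),y_k)$ on every training sample. The chain rule then yields, for a neuron $j$ with $(w_j,a_j)=(w_t^*,\beta_t^i a_t^*)$, the identities
\begin{align*}
    \partial_{a_j} L^m(\ptheta^m) &= \partial_{a_t} L^r(\ptheta_*^r)=0,\\
    \partial_{w_j} L^m(\ptheta^m) &= \beta_t^i\, a_t^*\, G_t,
\end{align*}
where $G_t:=\frac{1}{N}\sum_k \partial_1 c\cdot \sigma'(w_t^*\cdot x_k)\, x_k$. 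Since $\partial_{w_t}L^r(\ptheta_*^r)=a_t^* G_t=0$ and $a_t^*\neq 0$ by irreducibility, we deduce $G_t=0$ and hence $\partial_{w_j}L^m(\ptheta^m)=0$. The permutation symmetry of $L^m$ then extends the vanishing of the gradient from $\subsc$ to every permuted copy $\Perm_\pi\subsc$.

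The main obstacle is really the factorization step at the end: it is precisely the non-vanishing of $a_t^*$ (i.e.\ irreducibility) that lets us pass from $\partial_{w_t}L^r(\ptheta_*^r)=0$ to $G_t=0$, and from there to the vanishing of the $w$-gradient for each replicated neuron. Without irreducibility the $w$-partials in the expanded network could in principle fail to vanish, so the whole argument hinges on that assumption.
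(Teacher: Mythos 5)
Your structure (dimension, counting and non-intersection, criticality) matches the paper's, and the first two parts are correct — your ``labelled partition into $r$ non-empty groups'' is just a repackaging of the paper's count of distinct arrangements of the incoming weight vectors. The criticality argument, though, contains a wrong step, and your closing diagnosis of where irreducibility enters the proof is mistaken.

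You argue that since $\partial_{w_t}L^r(\ptheta_*^r)=a_t^*G_t=0$ and $a_t^*\neq0$, one deduces $G_t=0$, and from there $\partial_{w_j}L^m=0$. This detour is both unnecessary and, when $\dout>1$, false. It is unnecessary because the quantity you want to kill already factors through the base gradient: $\partial_{w_j}L^m(\ptheta^m)=\beta_t^i\bigl(a_t^*G_t\bigr)=\beta_t^i\,\partial_{w_t}L^r(\ptheta_*^r)=0$ directly, without ever needing $G_t$ to vanish on its own. It is false for $\dout>1$ because then $a_t^*\in\R^{\dout}$, $G_t$ is the $\dout\times\din$ matrix denoted $U(w_t^*)$ in the paper's proof, and $(a_t^*)^TU(w_t^*)=0$ with $a_t^*\neq0$ only places $a_t^*$ in the left kernel of $U(w_t^*)$; it does not force $U(w_t^*)=0$. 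The paper keeps the product intact, writing $\nabla^w L^m=\beta_i^j(a_i^*)^TU(w_i^*)$ and concluding it is zero because the whole product is the vanishing base gradient — no cancellation of $a_i^*$ is ever performed.

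As a consequence, your closing paragraph misattributes the role of irreducibility: criticality of every point in $\expmanc$ does not hinge on $a_t^*\neq0$ at all, and holds for any critical $\ptheta_*^r$. What actually requires irreducibility is (i) the pairwise distinctness of the $w_t^*$, so that your labelled partition can be read off from a single point and distinct partitions yield genuinely non-intersecting subspaces, and (ii) $a_t^*\neq0$, so that the map $\{\beta_t^i\}\mapsto\{\beta_t^ia_t^*\}$ is injective and $\subsc$ really has dimension $m-r$; if some $a_t^*=0$, the $k_t-1$ corresponding $\beta$-directions become degenerate and the dimension drops. Irreducibility controls the geometry of the subspaces, not the vanishing of the gradient on them.
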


\begin{proof} First, we show that $ \expmanc $ contains $ G(r, m) $ non-intersecting affine subspaces of dimension $ (m - r) $. Recall that by definition, we have
\begin{align*}
   \expmanc = \bigcup_{\substack{s = (k_1, \ldots, k_r) \\ \pi \in S_m}} \Perm_\pi \subsc
\end{align*}
where $ \subsc $ contains the points in the set
\begin{align*}
    \{ (\underbrace{w_1^*, \ldots, w_1^*}_{k_1}, \ldots, \underbrace{w_r^*, \ldots, w_r^*}_{k_r}, \underbrace{\beta^{1}_{1} a_1^*, \ldots, \beta^{k_1}_{1} a_1^*}_{k_1}, \ldots,
   \underbrace{\beta^{1}_{r} a_r^*, \ldots, \beta^{k_r}_{r} a_r^*}_{k_r}):
   \sum_{i=1}^{k_t} \beta_t^{i} = 1 \ \text{for} \ t \in [r] \}.
\end{align*}
Observe that this is an affine subspace. Its dimension is given by the number of free parameters, that is $ (k_1 - 1) + \cdots + (k_r - 1) = m - r $. All its permutations $ \Perm_\pi \subsc $ are also affine subspaces with the same dimension. For two of these subspaces to intersect, there should be a point contained in both subspaces. However, observe that the incoming weight vectors for two distinct subspaces are never the same thus an intersection point is not possible.

For the number of these subspaces, it is enough to count the distinct configurations of the incoming weight vectors, since the outgoing weight vectors follow the incoming ones. The formula for the number of distinct permutations of
\begin{align*}
    (\underbrace{w_1^*, \ldots, w_1^*}_{k_1}, \underbrace{w_2^*, \ldots, w_2^*}_{k_2}, \ldots, \underbrace{w_r^*, \ldots, w_r^*}_{k_r}).
\end{align*}
is $ \frac{m!}{k_1! \cdots k_r!} $ for a given tuple $ (k_1, \ldots, k_r) $ with $ k_i \geq 1 $ and $ k_1 + \cdots + k_r = m $. Summing over all such tuples, we find the formula for $ G(r, m) $.

Second, we will show that all points in $ \expmanc $ are critical.
 To do so we show that all points $ \ptheta^m_* \in \subsc $ are critical, then since $ \nabla L^m(\Perm_\pi \ptheta^m_*) = \Perm_\pi \nabla L^m(\ptheta^m_*) = 0$, we obtain the result for all points in $ \expmanc $.
For $ i \in [r] $, we denote the gradient components with respect to the $ i $-th incoming weight vector and the $i$-th outgoing weight vector as follows
\begin{align*}
      &\nabla_i^w L^r (\ptheta_*^r) = \frac{(a_i^*)^T}{N} \sum_{(x, y) \in \Trn} c'(f^{(2)}(x|\ptheta_*^r), y) \sigma'(w_i^* \cdot x) x, \\
      &\nabla_i^a L^r (\ptheta_*^r) =  \frac{1}{N} \sum_{(x, y) \in \Trn} c'(f^{(2)}(x|\ptheta_*^r), y) \sigma(w_i^* \cdot x).
\end{align*}
By introducing the $ \dout \times \din $ matrix U and the $ \dout $-dimensional vector V as
\begin{align*}
    U(w) &:= \frac{1}{N} \sum_{(x, y) \in \Trn} c'(f^{(2)}(x|\ptheta_*^r), y) \sigma'(w \cdot x) x, \\
    V(w) &:= \frac{1}{N} \sum_{(x, y) \in \Trn} c'(f^{(2)}(x|\ptheta_*^r), y) \sigma(w \cdot x).
\end{align*}
we have $ \nabla_i^w L^r (\ptheta_*^r) = (a_i^*)^T U(w_i^*) $ and $ \nabla_i^a L^r (\ptheta_*^r) = V(w_i^*) $. Since $ \ptheta^r_* $ is a critical point, we have $ (a_i^*)^T U(w_i^*) = 0 $ and $ V(w_i^*) = 0 $ for all $ i \in [r]$.
For $ \ptheta^m_* \in \subsc $, we have $ f^{(2)}(x|\ptheta_*^m) = f^{(2)}(x|\ptheta_*^r) $ and we write down the gradient components for $ L^m $ at $ \ptheta^m_* $
\begin{align*}
      &\nabla_{K_i + j}^w L^m (\ptheta_*^m) = \frac{\beta_i^j(a_i^*)^T}{n} \sum_{(x, y) \in \Trn} c'(f^{(2)}(x|\ptheta_*^m), y) \sigma'(w_i^* \cdot x) x = \beta_i^j (a_i^*)^T U(w_i^*) \\
      &\nabla_{K_i + j}^a L^m (\ptheta_*^m) =  \frac{1}{n} \sum_{(x, y) \in \Trn} c'(f^{(2)}(x|\ptheta_*^m), y) \sigma(w_i^* \cdot x) = V(w_i^*)
\end{align*}
where $ K_i = k_1 + \cdots + k_{i-1} $ and $ j \in [k_i] $ for all $ i \in [r]$. Since all gradient components are zero, thus $  \ptheta^m_* $ is a critical point of $ L^m $.

\end{proof}

\begin{proposition}[Proposition 4.4 in the main]\label{thm:spectra2} For twice-differentiable $ c $ and $ \sigma $, for all $ \ptheta_*^m \in \expmanc $, the spectrum of the Hessian $ \nabla^2 L^m (\ptheta^m_*) $ has $ (m - r) $ zero eigenvalues.
Moreover, if $ \ptheta_*^r $ is a strict saddle, then all points in $ \expmanc $ are also strict saddles.
\end{proposition}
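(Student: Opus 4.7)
The proof splits naturally into the two claims, both of which reduce to the fact that $L^m$ restricted to $\subsc$ is constant. Indeed, for any $\ptheta^m_* \in \subsc$ the network function equals
$\sum_{t=1}^r \bigl(\sum_{i=1}^{k_t} \beta_t^i\bigr) a_t^* \sigma(w_t^* \cdot x) = f^{(2)}(x|\ptheta^r_*)$ because $\sum_i \beta_t^i = 1$. Consequently $L^m$ is constant on the affine subspace $\subsc$, and by permutation symmetry also on every $\Perm_\pi \subsc$.

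For the first claim, let $V_1,\dots,V_{m-r}$ be a basis of the tangent space of $\subsc$ at $\ptheta^m_*$ (which has dimension $m-r$ by Proposition~\ref{thm:crit-basins2}). Since $L^m(\ptheta^m_* + t V_i) = L^m(\ptheta^m_*)$ for all $t$ in a neighborhood of $0$, differentiating twice gives $V_i^T \nabla^2 L^m(\ptheta^m_*) V_i = 0$, and by polarization $V_i^T \nabla^2 L^m(\ptheta^m_*) V_j = 0$ for all $i,j$. Hence the Hessian vanishes on the $(m-r)$-dimensional tangent space, yielding at least $(m-r)$ zero eigenvalues. The same argument applies at any $\Perm_\pi \ptheta^m_*$ using the chain rule identity $\nabla^2 L^m(\Perm_\pi \ptheta) = \Perm_\pi^T \nabla^2 L^m(\ptheta) \Perm_\pi$.

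For the strict-saddle claim, suppose $\ptheta^r_*$ is a strict saddle and pick $v = (u_1,\dots,u_r,b_1,\dots,b_r) \in \R^{Dr}$ with $v^T \nabla^2 L^r(\ptheta^r_*) v < 0$. Given $\ptheta^m_* \in \subsc$ with multiplicities $(k_1,\dots,k_r)$ and splitting coefficients $\beta_t^i$, I lift $v$ to a direction $V \in \R^{Dm}$ by replicating the incoming perturbation $u_t$ across all $k_t$ copies of the $t$-th group, and distributing the outgoing perturbation as $\beta_t^i b_t$ across the same $k_t$ copies. A direct calculation, identical in structure to the one showing $L^m$ is constant on $\subsc$, yields
\begin{equation*}
   f^{(2)}(x \mid \ptheta^m_* + \epsilon V) = \sum_{t=1}^r (a_t^* + \epsilon b_t)\, \sigma\bigl((w_t^* + \epsilon u_t)\cdot x\bigr) = f^{(2)}(x \mid \ptheta^r_* + \epsilon v),
\end{equation*}
so $L^m(\ptheta^m_* + \epsilon V) = L^r(\ptheta^r_* + \epsilon v)$ for all $\epsilon$. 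Differentiating twice at $\epsilon = 0$ gives $V^T \nabla^2 L^m(\ptheta^m_*) V = v^T \nabla^2 L^r(\ptheta^r_*) v < 0$, so $\ptheta^m_*$ is a strict saddle. The case of a general point in $\Perm_\pi \subsc$ follows from the permutation-equivariance of the Hessian.

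The only mildly delicate point will be checking that the quadratic form identity truly transfers from $L^r$ to $L^m$ (rather than just the function values), but since we have $L^m(\ptheta^m_* + \epsilon V) = L^r(\ptheta^r_* + \epsilon v)$ as \emph{scalar functions of $\epsilon$}, termwise Taylor expansion in $\epsilon$ gives both the first- and second-order identities, and no further bookkeeping is needed.
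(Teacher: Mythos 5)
The strict-saddle half of your argument is correct and is essentially what the paper does: the paper invokes a linear map $U$ satisfying $L^m\circ U = L^r$ near $\ptheta^r_*$ and applies the chain rule, whereas you make this map explicit by replicating incoming perturbations and distributing outgoing ones as $\beta_t^i b_t$. Both yield $V^T\nabla^2 L^m(\ptheta^m_*)V = v^T\nabla^2 L^r(\ptheta^r_*)v < 0$, and the permutation-equivariance of the Hessian handles $\Perm_\pi\subsc$.

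However, the first half has a genuine gap. From $L^m(\ptheta^m_* + tV_i) = \text{const}$ you correctly get $V_i^T\nabla^2 L^m(\ptheta^m_*)V_j = 0$ for all $i,j$, i.e.\ the Hessian vanishes \emph{as a quadratic form restricted to the tangent space}. But this does not imply the tangent space lies in the \emph{kernel} of the Hessian, which is what you need to conclude that $(m-r)$ eigenvalues vanish. Concretely, for
\begin{equation*}
  H = \begin{pmatrix} 0 & 1 \\ 1 & 0 \end{pmatrix}, \qquad W = \mathrm{span}(e_1),
\end{equation*}
one has $e_1^T H e_1 = 0$ but $H$ has spectrum $\{+1,-1\}$ with no zero eigenvalue; the quadratic form restricted to $W$ vanishes while $e_1\notin\ker H$. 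What rescues the claim is the stronger fact from Proposition~\ref{thm:crit-basins2}: \emph{every} point of $\subsc$ is a critical point of $L^m$, not merely an equal-loss point. Hence $\nabla L^m(\ptheta^m_* + tV_i) \equiv 0$ for all small $t$, and differentiating this identity at $t=0$ yields $\nabla^2 L^m(\ptheta^m_*)V_i = 0$ directly. This places the full $(m-r)$-dimensional tangent space inside $\ker\nabla^2 L^m(\ptheta^m_*)$ and gives the claimed $(m-r)$ zero eigenvalues. (The paper's own phrasing ``equal-loss affine subspace'' is equally loose on this point, but its cross-reference to Proposition~\ref{thm:crit-basins2} supplies the needed criticality.) Replace your constancy-of-$L^m$ argument with differentiation of the vanishing gradient, and the proof is complete.
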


\begin{proof} Because any $ \ptheta_*^m \in \expmanc $ lies in an equal-loss affine subspace of dimension $ (m-r) $, it has at least $ (m-r) $ zero eigenvalues in the Hessian.

For $ \ptheta_*^r $ that is a strict saddle of $ L^r $, we have an eigenvector $ \beta $ such that $ \beta^T \nabla^2 L^r(\ptheta_*^r) \beta < 0 $. Since $ \expmanc $ is an equal-loss manifold where all the points have the same loss as $ \ptheta_*^r $, we have $ L^r(\ptheta_*^r) = L^m (\ptheta_*^m) = L^m (U \ptheta_*^r)$ where $ U $ is a linear map.
Finally, we have $ (U\beta)^T \nabla^2 L^m(\ptheta_*^m) U \beta = \beta^T \nabla^2 L^r(\ptheta_*^r) \beta < 0  $ by the chain rule, and therefore $ \nabla^2 L^m(\ptheta_*^m) $ cannot be a positive semidefinite matrix, i.e. it has a negative eigenvalue, which completes the proof.
\end{proof}

\subsection{Combinatorial Analysis}\label{sec:comb-analysis2}

For proving the exact combinatorial results presented in the main (Proposition 4.5 and Lemma 4.6) it will be convenient to use Newton's series for finite differences \cite{milne2000calculus}:
\begin{definition}
Let $ p$ be a polynomial of degree $ d $, we define the $ k $-th {\it forward difference} of the polynomial $ p(x)$ at $ 0 $ as
\begin{align*}
   \Delta^k [p](0) &= \sum_{i=0}^k \binom{k}{i} (-1)^{k - i} p(i).
\end{align*}
\end{definition}
Hence, we can write $p(x)$ as
\begin{align}\label{eq:Newton-formula}
    p(x) = \sum_{k=0}^d \binom{x}{k} \Delta^k [p](0).
\end{align}
Rearranging the summands in  Equation~\ref{eq:Newton-formula}, one observes that Newton's series for finite differences is a discrete analog of Taylor's series
\begin{align*}
      p(x) = \sum_{k=0}^d \frac{\Delta^k [p](0)}{k!} [x]_k
\end{align*}
where $ (x)_k = x (x-1) \ldots (x - k + 1) $ is the falling factorial.

We now proceed with proving Proposition 4.5 in the main.
\begin{proposition}[Proposition 4.5 in the main]\label{prop:formulas-app} For $ r \leq m $, we have
  \begin{align}
        &G(r, m) = \sum_{i=1}^r \binom{r}{i}(-1)^{r - i}i^m, \label{eq:formula-g}\\
        &T(r, m) = G(r, m) + \sum_{u=1}^{m-r}  \binom{m}{u} G(r, m-u) g(u).\label{eq:formula-t}
  \end{align}
where $ g(u) = \sum_{j=1}^{u} \frac{1}{j!} G(j, u) $.
\end{proposition}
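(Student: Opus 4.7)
The plan is to establish the closed form for $G(r,m)$ via a classical surjection-counting argument, and then to derive the formula for $T(r,m)$ by reorganizing the defining sum according to the total number of zero-type neurons.

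First, I will recognize $G(r,m)=\sum_{k_1+\cdots+k_r=m,\,k_i\geq 1}\binom{m}{k_1,\ldots,k_r}$ as the number of ways to distribute $m$ distinguishable items into $r$ distinguishable non-empty boxes, i.e.~the number of surjections $[m]\twoheadrightarrow [r]$, classically equal to $r!\,S(m,r)$, where $S(m,r)$ denotes the Stirling number of the second kind. The claimed expression $\sum_{i=1}^r\binom{r}{i}(-1)^{r-i}i^m$ then follows by inclusion–exclusion on the range: from the $r^m$ arbitrary functions $[m]\to[r]$, subtract those whose image misses any prescribed subset $I\subseteq[r]$ of values. Equivalently, Newton's forward-difference formula applied to $p(x)=x^m$ at $x=0$ yields $\Delta^r[x^m](0)=\sum_{i=0}^r(-1)^{r-i}\binom{r}{i}i^m$, which combined with the classical identity $\Delta^r[x^m](0)=r!\,S(m,r)$ gives the result.

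Next, for $T(r,m)$, I will group terms by $u:=b_1+\cdots+b_j\in\{0,\ldots,m-r\}$, the total number of zero-type neurons. Using the multinomial factorization
\begin{align*}
\binom{m}{k_1,\ldots,k_r,b_1,\ldots,b_j}=\binom{m}{u}\binom{m-u}{k_1,\ldots,k_r}\binom{u}{b_1,\ldots,b_j},
\end{align*}
the sum splits cleanly into three independent pieces: (i) $\binom{m}{u}$, choosing which $u$ positions host the zero-type neurons; (ii) the $k$-sum $\sum_{k_1+\cdots+k_r=m-u,\,k_i\geq 1}\binom{m-u}{k_1,\ldots,k_r}$, which is exactly $G(r,m-u)$; and (iii) the remaining sum over $(j,b_1,\ldots,b_j)$ with the normalization $1/(c_1!\cdots c_u!)$, enumerating set partitions of the $u$ zero-type positions into unlabeled non-empty blocks (one per distinct dummy $w'$). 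For fixed $j$ this last sum is exactly the Stirling number $S(u,j)$, and summing over $j$ yields the Bell number $B(u)$. Using the relation $G(j,u)=j!\,S(u,j)$ established in the first part, I identify $g(u)=\sum_{j=1}^u G(j,u)/j!=\sum_{j=1}^u S(u,j)=B(u)$. The $u=0$ term contributes $G(r,m)$ directly (the product in (iii) is empty), and the $u\geq 1$ terms contribute $\sum_{u=1}^{m-r}\binom{m}{u}G(r,m-u)g(u)$, yielding \eqref{eq:formula-t}.

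The main obstacle is the combinatorial bookkeeping in step (iii): one must verify that the normalization factor $1/(c_1!\cdots c_{m-r}!)$, which arises from the indistinguishability of equal-sized zero-type blocks under relabeling of the dummy variables $w'_i$, precisely matches the Stirling-number enumeration of unordered set partitions of $[u]$ with $j$ parts. Once this identification is made, the rest of the proof is a direct rearrangement of the defining sum.
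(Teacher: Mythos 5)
Your proof is correct, and for the formula for $T(r,m)$ it follows the paper's own strategy: group by the total number $u$ of zero-type neurons, factor the multinomial coefficient, and identify the remaining $b$-sum with the quantity $g(u)$. The one substantive difference is that you phrase the key inner identity in terms of Stirling and Bell numbers — namely, that for fixed $j$ the weighted sum $\sum \binom{u}{b_1,\ldots,b_j}\frac{1}{c_1!\cdots c_u!}$ over multisets of block sizes equals $S(u,j)$, so $g(u)=\sum_j S(u,j)=B(u)$ — whereas the paper encapsulates the same fact in an ad hoc Lemma (their Lemma~\ref{lem:recounting}, which is precisely the statement $\tfrac{1}{j!}G(j,u)=S(u,j)$ without naming it). You are right that this identification, and in particular the role of the $1/(c_1!\cdots c_u!)$ normalization in converting from ordered compositions to unordered set partitions, is the one point requiring care; your description of how it arises is accurate.

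For the formula for $G(r,m)$, your route is genuinely shorter than the paper's: you recognize $G(r,m)$ directly as the number of surjections $[m]\twoheadrightarrow[r]$ and apply inclusion--exclusion, which gives the closed form in one step. The paper instead derives the recursion $r^m=\sum_{\ell=0}^r\binom{r}{\ell}G(\ell,m)$, invokes Newton's forward-difference expansion of $x^m$, and concludes by an induction/uniqueness argument matching the two series term by term. Both are valid; the paper's recursive setup is used again later to derive the asymptotics of $G$, which is the main reason it is presented that way. If you wanted your argument to be fully self-contained you would spell out the inclusion--exclusion count of surjections or the identity $\Delta^r[x^m](0)=r!\,S(m,r)$, but these are classical and your proof outline is sound.
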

\proof The proof of the theorem is divided in the two next Propositon. In  Proposition~\ref{prop:formulas2} we prove Equation~\eqref{eq:formula-g}, while in Proposition~\ref{prop:formulas3}, using a counting argument (Lemma~\ref{lem:recounting}), we prove Equation \eqref{eq:formula-t}.
\endproof

\begin{proposition}\label{prop:formulas2} For $ r \leq m $, we have
  \begin{align*}
        G(r, m) = \sum_{i=1}^r \binom{r}{i}(-1)^{r - i}i^m.
  \end{align*}
\end{proposition}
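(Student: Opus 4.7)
The plan is to give a combinatorial interpretation of $G(r,m)$ as the number of surjective functions from an $m$-set to an $r$-set, and then to derive the closed-form expression via inclusion–exclusion.

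First I would recall that by definition
\[
G(r,m) = \sum_{\substack{k_1+\cdots+k_r=m \\ k_i \geq 1}} \binom{m}{k_1,\ldots,k_r}.
\]
The multinomial coefficient $\binom{m}{k_1,\ldots,k_r}$ counts the number of ordered partitions $(B_1,\ldots,B_r)$ of $[m]$ with $|B_i|=k_i$. Ranging over all compositions of $m$ into $r$ positive parts therefore enumerates all ordered partitions of $[m]$ into $r$ non-empty blocks, which is in bijection with the set $\mathrm{Surj}([m],[r])$ of surjective maps $f:[m]\to[r]$ (assign each $j\in B_i$ the value $i$). Hence $G(r,m)=|\mathrm{Surj}([m],[r])|$.

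Next I would apply inclusion–exclusion to this set. Let $A_i\subseteq[r]^{[m]}$ denote the collection of functions $f:[m]\to[r]$ whose image avoids $i\in[r]$. A function is surjective iff it lies in none of the $A_i$, so
\[
G(r,m) = \Bigl|\,[r]^{[m]} \setminus \bigcup_{i=1}^r A_i\,\Bigr|
= \sum_{S\subseteq[r]} (-1)^{|S|} \Bigl|\bigcap_{i\in S} A_i\Bigr|.
\]
For $|S|=j$, the intersection $\bigcap_{i\in S} A_i$ consists of all functions $[m]\to[r]\setminus S$, of which there are $(r-j)^m$. Collecting by $|S|$ yields
\[
G(r,m) = \sum_{j=0}^r (-1)^j \binom{r}{j} (r-j)^m.
\]
The substitution $i := r-j$ rewrites this as $G(r,m)=\sum_{i=0}^{r}\binom{r}{i}(-1)^{r-i}i^m$, and the $i=0$ term vanishes, giving exactly the claimed formula.

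There is no real obstacle here: the work is in recognizing the sum as enumerating surjections (equivalently, $r!\,S(m,r)$ with $S(m,r)$ the Stirling number of the second kind) and then running the standard inclusion–exclusion on missed values. If one prefers an algebraic route instead, the same identity drops out of Newton's forward-difference formula applied to $p(x)=x^m$: the quantity $\Delta^{r}[p](0)=\sum_{i=0}^{r}\binom{r}{i}(-1)^{r-i}i^m$ counts surjections onto $[r]$ because the finite differences of $x^m$ at $0$ express the Stirling numbers up to the factor $r!$. Either route delivers Eq.~\eqref{eq:formula-g}.
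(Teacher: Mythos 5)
Your proof is correct, and it takes a genuinely different route from the paper's. You interpret $G(r,m)$ directly as the number of surjections $[m]\to[r]$ (by reading each multinomial coefficient as counting ordered set partitions into $r$ non-empty blocks) and then apply the standard inclusion--exclusion formula for surjections, finally reindexing $i=r-j$ and discarding the vanishing $i=0$ term. The paper instead derives the recursion $r^m=\sum_{\ell=0}^r\binom{r}{\ell}G(\ell,m)$ by splitting the unrestricted multinomial sum according to how many $k_i$ vanish, and then inverts this recursion by matching it against Newton's forward-difference expansion of $p(x)=x^m$ at $x=r$, arguing uniqueness by induction in $r$. Your argument is shorter and more self-contained; it also makes the identity $G(r,m)=r!\,S(m,r)$ (Stirling numbers of the second kind) transparent, which the paper only implicitly touches via the finite-difference remarks after Proposition~\ref{prop:formulas2}. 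The paper's route has the advantage of introducing the Newton finite-difference machinery, which it reuses for the later asymptotic analysis of $G(m-k,m)$; your route buys directness at the cost of not setting up that later tool. Either way the result is established.
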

\begin{proof}
First, recall that, by Proposition~\ref{thm:crit-basins2}, we have that
  $$
   G(r, m) := \sum_{\substack{k_1 + \ldots + k_r = m \\ k_i \geq 1}} \binom{m}{k_1, \ldots, k_r}.
  $$
The above can be restated by using the identity
\begin{align}\label{eq:expand-G}
    \sum_{\substack{k_1 + \cdots + k_r = m \\ k_i \geq 0}} \binom{m}{k_1, \ldots, k_r} =
    \sum_{\ell = 0}^r \binom{r}{\ell} \sum_{\substack{k_1 + \cdots + k_r = m \\ k_i \geq 0}}
    \binom{m}{k_1, \ldots, k_r}\mathbbm{1}_{I_\ell}(k_1,\ldots,k_r)
\end{align}
where  $I_\ell:=\{(0,\ldots, 0, k_{\ell+1}, \ldots, k_r): k_i\geq 1\text{ for } \ell+1 \leq i \leq  r \}$. Equation~\eqref{eq:expand-G} is equivalent to
\begin{align}\label{eq:recursion-G}
    r^m &=
    \sum_{\ell = 0}^r \binom{r}{\ell} G(r-\ell, m) \\
    &= \sum_{\ell = 0}^r \binom{r}{\ell} G(\ell, m),
\end{align}
with the convention that $ G(0, m) = 0 $.
Newton's series for finite differences (Equation~\eqref{eq:Newton-formula}), applied to the polynomial $ p(x) = x^m $ at $ x = r$, yields
\begin{align}\label{eq:recursion-N}
    r^m = \sum_{\ell =0}^r \binom{r}{\ell} \sum_{i=0}^\ell \binom{\ell}{i} (-1)^{\ell - i} i^m.
\end{align}
Note that the outer summation goes up to $ r $ instead of $ m $ since the terms with a factor $ \binom{r}{k} $ for $ k \geq r + 1 $ are zero. Hence we have
\begin{align}\label{eq:r-to-pow-m}
\sum_{\ell=0}^r \binom{r}{\ell} \left[ \sum_{i=0}^\ell \binom{\ell}{i} (-1)^{\ell - i} i^m -  G(\ell, m)\right]=0.
\end{align}
Indeed, with $m$ fixed, the solution
\begin{align}\label{eq:formula-G}
    G(\ell, m) = \sum_{i=0}^\ell \binom{\ell}{i} (-1)^{\ell - i} i^m
\end{align}
is the unique solution for the Equation~\eqref{eq:r-to-pow-m} with initial value given by the condition $1^m=1$. The uniqueness follows from an immediate induction argument: since
$$
G(1, m) = \sum_{k_1 = m} \binom{m}{k_1} = 1 =\sum_{i=0}^1 \binom{1}{i} (-1)^{1 - i} i^m,
$$
the initial step of induction is verified. Then, for the induction hypothesis, for $ k = 1, \ldots, r - 1 $, the first $r-1$ term in the summation in Equation~\eqref{eq:r-to-pow-m} are null, leaving us with the condition
\begin{align*}
    G(r, m) = \sum_{i=0}^r \binom{r}{i} (-1)^{r - i} i^m.
\end{align*}
\end{proof}

The Proposition above, which holds for $r<m$ shows that $ G(r, m)$ are the forward finite difference at $0$ for $ p(x) = x^m $, i.e. $G(r,m)=\Delta^r [p](0)$. We now comment on the meaning of the formula for $r\geq m$.
For a given polynomial $p(x)$ define the {\it rescaled} Newton's finite differences $\Delta^r_h [p](0)$ as Newton's finite differences (at $0$) for the polynomial $p(hx)$; hence, we can write the $r$-th derivative of the polynomial $p$ as the $h\to0$ limit of the $h$-the $r$-th Newton's finite difference:
\begin{align*}
    p^{(r)}(0) = \lim_{h \to 0^+ } \frac{\Delta_h^r [p](0)}{h^r}
    = \lim_{h \to 0^+ } \frac{1}{h^r}\sum_{i=0}^r \binom{r}{i} (-1)^{r - i} (hi)^m
    = \lim_{h \to 0^+ } \frac{1}{h^{r-m}} G(r,m).
\end{align*}
Hence for $ r = m $ we obtain $G(m,m)=m!$,
whereas for $ r > m $
we find $ G(r, m) = 0 $.

In order to prove Equation~\eqref{eq:formula-t}, we introduce the following Lemma~\ref{lem:recounting}, which is in fact a counting of the same number in two ways.

\begin{lemma}\label{lem:recounting} For $ j \leq n $, we have
  \begin{align*}
      \frac{1}{j!} G(j, n)  =
       \sum_{\substack{c_1 + 2 c_2 + \cdots + n c_n = n \\ c_1 + c_2 + \cdots + c_n = j \\ c_i \geq 0}} \frac{n!}{1!^{c_1} 2!^{c_2} \cdots n!^{c_n}} \frac{1}{c_1! \cdots c_n!}.
  \end{align*}
\end{lemma}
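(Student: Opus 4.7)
The strategy is to recognize that both sides count the same combinatorial object, namely the number of set partitions of $[n]$ into exactly $j$ non-empty unordered blocks (the Stirling number of the second kind $S(n,j)$), and then give a double counting argument.

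For the left-hand side, recall from Proposition~\ref{thm:crit-basins2} that
\begin{align*}
    G(j,n) = \sum_{\substack{k_1 + \cdots + k_j = n \\ k_i \geq 1}} \binom{n}{k_1, \ldots, k_j}.
\end{align*}
The multinomial coefficient $\binom{n}{k_1,\ldots,k_j}$ is exactly the number of ways to distribute the $n$ elements of $[n]$ into $j$ \emph{labeled} urns of prescribed sizes $k_1,\ldots,k_j$, and summing over compositions of $n$ into $j$ positive parts counts the number of ordered tuples $(B_1,\ldots,B_j)$ of disjoint non-empty blocks covering $[n]$. Since each unordered partition into $j$ blocks corresponds to exactly $j!$ such ordered tuples, we conclude $G(j,n)/j! = S(n,j)$.

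For the right-hand side, I would parametrize an unordered partition of $[n]$ by its \emph{type}: let $c_i$ denote the number of blocks of size $i$. Then the constraints $c_1 + c_2 + \cdots + c_n = j$ and $c_1 + 2c_2 + \cdots + n c_n = n$ exactly express that there are $j$ blocks in total covering $n$ elements. For a fixed type $(c_1,\ldots,c_n)$, the number of ordered sequences of blocks realizing this type equals the multinomial coefficient $\frac{n!}{(1!)^{c_1}(2!)^{c_2}\cdots(n!)^{c_n}}$ (choosing which elements go into the first block of size $1$, second block of size $1$, etc.); dividing by $c_1!\,c_2!\cdots c_n!$ accounts for the fact that blocks of equal size are interchangeable in the unordered partition. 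Summing over all admissible types gives the RHS as another expression for $S(n,j)$, and equality follows.

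This argument is essentially clean double counting; the only subtlety is the combinatorial interpretation of the weighted multinomial in the RHS, so I would devote a sentence to justifying the $\frac{1}{c_1!\cdots c_n!}$ factor explicitly. No analytic identities or generating function manipulations are needed.
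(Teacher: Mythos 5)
Your proof is correct and rests on the same core idea as the paper's: reindexing the sum over compositions $(b_1,\ldots,b_j)$ (equivalently, ordered block sequences) by their type $(c_1,\ldots,c_n)$, with the multiplicity factor $j!/(c_1!\cdots c_n!)$. The paper performs this reindexing as a direct algebraic manipulation of the two sums, whereas you frame both sides as counting set partitions of $[n]$ into $j$ blocks (i.e.\ the Stirling number of the second kind), which is a slightly more conceptual packaging of the same double-counting argument.
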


\begin{proof} By definition, we have
  \begin{align*}
      G(j, n) = \sum_{\substack{b_1 + \ldots + b_j = n \\ b_i \geq 1}} \binom{n}{b_1, \ldots, b_j}.
    \end{align*}
Starting from a tuple $ (b_1, \ldots, b_j) $, consider the tuple $ (c_1, \ldots, c_n) $ where $ c_i $ is the number of occurence of $ i $ in $ (b_1, \ldots, b_j) $. Therefore we have
\begin{align}\label{eqn:recounting1}
    \binom{n}{b_1, \ldots, b_j} = \binom{n}{\underbrace{1, \ldots, 1}_{c_1}, \underbrace{2, \ldots, 2}_{c_2}, \ldots, \underbrace{n}_{c_n}}
    = \frac{n!}{1!^{c_1} \cdots n!^{c_n}}.
\end{align}
Moreover, any $c$-tuple $ (c_1, \ldots, c_n) $ appears in
\begin{align}\label{eqn:recounting2}
      \binom{j}{c_1, \ldots, c_n} = \frac{j!}{c_1! \cdots c_n!}
\end{align}
$b$-tuples that are exactly $ (b_1, \ldots, b_j) $. From Equation~\eqref{eqn:recounting1} and Equation~\eqref{eqn:recounting2} and summing over all tuples $ (c_1, \ldots, c_n) $ we conclude.
\end{proof}

We are now in position to prove the closed-form formula for $ T $, Equation~\eqref{eq:formula-t}.

\begin{proposition}\label{prop:formulas3} For $ r \leq m $, we have
  \begin{align}\label{eq:formula-T}
        T(r, m) = G(r, m) + \sum_{u=1}^{m-r} \binom{m}{u} G(r, m-u) g(u)
  \end{align}
where $ g(u) = \sum_{j=1}^{u} \frac{1}{j!} G(j, u) $.
\end{proposition}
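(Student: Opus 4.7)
\emph{Proof proposal.} The plan is to start from the explicit formula for $T(r,m)$ given in Proposition~\ref{prop:invariances-all-in-expman}, separate out the $j=0$ contribution (which will yield $G(r,m)$), and then reorganize the rest by grouping together all tuples that allocate the same total number $u=b_1+\cdots+b_j$ of parameters to the zero-type neurons. The key combinatorial input is Lemma~\ref{lem:recounting}, which exactly identifies the sum over the $b$-variables (normalized by the symmetry factor $1/c_1!\cdots c_u!$) with $\tfrac{1}{j!}G(j,u)$.

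More concretely, first I would write
\begin{align*}
T(r,m)=\underbrace{\sum_{\substack{k_1+\cdots+k_r=m\\ k_i\geq 1}}\binom{m}{k_1,\ldots,k_r}}_{j=0\text{ term}}\;+\;\sum_{j=1}^{m-r}\sum_{\substack{k_1+\cdots+k_r+b_1+\cdots+b_j=m\\ k_i\geq1,\;b_i\geq1}}\binom{m}{k_1,\ldots,k_r,b_1,\ldots,b_j}\frac{1}{c_1!\cdots c_{m-r}!},
\end{align*}
and recognize the $j=0$ term as $G(r,m)$ by its definition in Proposition~\ref{thm:crit-basins2}. For each $j\geq 1$, I introduce the new summation variable $u:=b_1+\cdots+b_j$ (so that $j\leq u\leq m-r$ and $k_1+\cdots+k_r=m-u$), and factor the multinomial as
\begin{align*}
\binom{m}{k_1,\ldots,k_r,b_1,\ldots,b_j}=\binom{m}{u}\binom{m-u}{k_1,\ldots,k_r}\binom{u}{b_1,\ldots,b_j}.
\end{align*}
Then the $k$-sum is precisely $G(r,m-u)$, and everything $u$-dependent factorizes out of the $k$-sum. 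Noting that $c_i=0$ for $i>u$ (so $c_i!=1$ contributes nothing), what remains of the inner $b$-sum is exactly
\begin{align*}
\sum_{\substack{b_1+\cdots+b_j=u\\ b_i\geq1}}\binom{u}{b_1,\ldots,b_j}\frac{1}{c_1!\cdots c_u!}=\frac{1}{j!}G(j,u),
\end{align*}
which is the content of Lemma~\ref{lem:recounting} (with $n$ replaced by $u$). Summing over $j=1,\ldots,u$ produces $g(u)$, and swapping the order of summation to put $u$ on the outside gives the stated formula. Finally, the upper limit $u\leq m-r$ is automatic because $G(r,m-u)=0$ for $m-u<r$ by the discussion after Proposition~\ref{prop:formulas2}.

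The only nontrivial step is recognizing the inner $b$-sum as $\tfrac{1}{j!}G(j,u)$, but this is precisely the content of Lemma~\ref{lem:recounting}; everything else is bookkeeping and the multinomial factorization. So the real work has already been done in the combinatorial identity, and the proof here is essentially an orchestration of that identity with the decomposition of $T(r,m)$ by the number $j$ of zero-type groups and their total size $u$.
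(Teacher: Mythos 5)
Your proof is correct and takes essentially the same route as the paper: decompose the sum over tuples $s$ by the total zero-type budget $u=b_1+\cdots+b_j$, factor the multinomial as $\binom{m}{u}\binom{m-u}{k_1,\ldots,k_r}\binom{u}{b_1,\ldots,b_j}$, identify the $k$-sum as $G(r,m-u)$, and invoke Lemma~\ref{lem:recounting} to reduce the $b$-sum to $\tfrac{1}{j!}G(j,u)$, then sum over $j$ to get $g(u)$. The only cosmetic difference is that you peel off the $j=0$ term up front, whereas the paper carries it through the substitution and separates the $u=0$ term at the end; your explicit remark that $c_i=0$ for $i>u$ is a harmless clarification of what the paper leaves implicit.
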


\begin{proof}
Let $ u = b_1 + \cdots + b_j $ and let $ c_i $ be, as in Lemma~\ref{lem:recounting}, the number of occurrences of $ i $ among $ (b_1, \ldots, b_j )$. Recall that for $T$ we have the identity
 \begin{align*}
 T(r, m) := \sum_{j = 0}^{m-r}  \sum_{\substack{\summ(s) = m \\ k_i \geq 1, b_i \geq 1}} \binom{m}{k_1, \ldots, k_r, b_1, \ldots, b_j} \frac{1}{c_b}.
 \end{align*}
We rewrite the outer summation in $ T $ from the number of $ b_i $'s to the summation of $ b_i $'s and we obtain
\begin{align*}
T(r, m) =  \sum_{u = 0}^{m-r} \sum_{j=0}^u \binom{m}{u} \sum_{\substack{k_1 + \cdots + k_r = m - u \\ b_1 + \cdots + b_j = u \\ k_i \geq 1, b_i \geq 1}} \binom{m - u}{k_1, \ldots, k_r}\binom{u}{b_1, \ldots, b_j} \frac{1}{c_1! c_2! \cdots c_{m-r}!}
\end{align*}
where we split the inner summation and the multinomial coefficient into two parts: one that comes from the incoming weight vectors and the others come from the zero-type neurons $ (w_1', \ldots, w_j') $. Using the formula for $ G $ on $ (k_1, \ldots, k_r )$, we simplify as follows
\begin{align*}
T(r, m) = \sum_{u = 0}^{m-r} \binom{m}{u} G(r, m-u) \sum_{j=0}^u \sum_{\substack{b_1 + \cdots + b_j = u \\ b_i \geq 1}} \binom{u}{b_1, \ldots, b_j} \frac{1}{c_1! c_2! \cdots c_{m-r}!}.
\end{align*}

Finally using Lemma~\ref{lem:recounting}, we find
\begin{align*}
T(r, m) = \sum_{u = 0}^{m-r} \binom{m}{u} G(r, m-u) \sum_{j=0}^u \frac{1}{j!} G(j, u)
\end{align*}
where $ G(0, 0) = 1 $. Splitting the case $ u = 0 $, we derive the closed form formula
\begin{align*}
T(r, m) = G(r, m) + \sum_{u = 1}^{m-r} \binom{m}{u} G(r, m-u) \sum_{j=1}^u \frac{1}{j!} G(j, u).
\end{align*}
\end{proof}

\begin{lemma}[Lemma 4.6 in the main]\label{lem:limiting-behavior} For any $ k \geq 0 $ fixed, we have,
  \begin{align*}
        G(m - k, m) \sim T(m - k, m) \sim \frac{m^k}{ 2^k k!} m!, \ \text{as} \ m \to \infty.
  \end{align*}
For any fixed $ r \geq 0 $, we have $ G(r, m) \sim r^m $ as $ m \to \infty $.
\end{lemma}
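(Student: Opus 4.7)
I will treat the two asymptotic statements separately, starting with the simpler one.

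\textbf{Fixed $r$, $m \to \infty$.} I will use the closed form $G(r, m) = \sum_{i=1}^{r} \binom{r}{i}(-1)^{r-i} i^m$ from Proposition~\ref{prop:formulas}. Factoring out $r^m$ gives $G(r,m) = r^m \bigl(1 + \sum_{i=1}^{r-1} \binom{r}{i}(-1)^{r-i} (i/r)^m\bigr)$. Since each ratio $(i/r)^m$ with $i < r$ tends to $0$ geometrically, the bracket converges to $1$, and so $G(r, m) \sim r^m$.

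\textbf{Fixed $k$, $m \to \infty$: the estimate for $G(m-k, m)$.} Here I will return to the combinatorial definition $G(m-k, m) = \sum_{k_1+\cdots+k_{m-k} = m,\, k_i \geq 1} \binom{m}{k_1,\ldots,k_{m-k}}$ and reparametrize by $b_i := k_i - 1 \geq 0$, so that $\sum b_i = k$ and at most $k$ of the $b_i$ are nonzero. For each partition $\lambda$ of $k$ into $\ell$ positive parts with multiplicities $(c_1, c_2, \ldots)$, the contribution to $G(m-k,m)$ is
\begin{align*}
\binom{m-k}{\ell} \binom{\ell}{c_1, c_2, \ldots} \frac{m!}{\prod_j \bigl((j+1)!\bigr)^{c_j}}.
\end{align*}
The binomial $\binom{m-k}{\ell}$ grows like $m^\ell/\ell!$, so the dominant partition is the one with the maximal number of parts, $\ell = k$, namely $\lambda = (1,1,\ldots,1)$ (equivalently, $k$ of the $k_i$ equal $2$ and the remaining $m-2k$ equal $1$). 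Its contribution is $\binom{m-k}{k}\, m!/2^k \sim (m^k/k!)\, m!/2^k$. Every other partition has $\ell < k$, hence contributes $O(m^{k-1}\, m!)$, which is negligible. Summing gives $G(m-k,m) \sim \frac{m^k}{2^k k!}\, m!$.

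\textbf{Fixed $k$: deducing the estimate for $T(m-k, m)$.} Using the formula $T(r,m) = G(r,m) + \sum_{u=1}^{m-r} \binom{m}{u} G(r, m-u) g(u)$ with $r = m-k$, the leading term $G(m-k,m)$ already matches the claimed asymptotic, so I only need to show the correction terms are $o\bigl(\frac{m^k}{2^k k!} m!\bigr)$. For $1 \leq u \leq k$, the excess $(m-u) - (m-k) = k-u$ is a fixed nonnegative integer, so by the result just established $G(m-k, m-u) \sim \frac{(m-u)^{k-u}}{2^{k-u}(k-u)!}(m-u)!$. Multiplying by $\binom{m}{u} \sim m^u/u!$ and $g(u)$ (a constant in $m$) yields a term of order $m^k (m-u)! = o(m^k m!)$, since $(m-u)!/m! = O(m^{-u})$. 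Summing over $u = 1, \ldots, k$ gives a total correction of order $m^{k-1} m!$, strictly subleading. Hence $T(m-k,m) \sim G(m-k,m) \sim \frac{m^k}{2^k k!} m!$.

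\textbf{Main obstacle.} The routine but delicate part is the partition bookkeeping in step two: making sure that every partition of $k$ with fewer than $k$ parts gives a contribution with one fewer power of $m$, and that the number of such partitions is bounded independently of $m$. Once this is done, the rest is a straightforward comparison of factorial asymptotics.
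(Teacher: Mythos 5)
Your argument is correct and follows essentially the same route as the paper's: you decompose $G(m-k,m)$ by partitions of $k$ (equivalently, the paper sorts by how many $k_i$ exceed $1$ and then by their multiplicity pattern) and identify the $\ell=k$ partition as dominant, and you bound the correction terms in the $T$-formula in the same way. The one small divergence is that for $G(r,m)\sim r^m$ you read the asymptotic directly off the closed-form alternating sum, whereas the paper deduces it by induction on $r$ from the recursion $r^m=\sum_\ell\binom{r}{\ell}G(\ell,m)$; your version is slightly more direct but not substantively different.
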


\begin{proof}
We begin to show that
\begin{align}\label{eq:limitG}
    \lim_{r\to\infty}\frac{1}{(r + k)! r^k} G(r, r + k)= \frac{1}{2^k k!} .
\end{align}
In particular, we observe that for $ k = 1 $ we have that
\begin{align*}
      G(r, r + 1) =  \sum_{\substack{k_1 + \ldots + k_r = r + 1 \\ k_i \geq 1}} \binom{r + 1}{k_1, \ldots, k_r} = \binom{r}{1} \binom{r + 1}{2, 1, \ldots, 1} = r \frac{(r + 1)!}{2!}.
\end{align*}
We find that the asymptotic in Equation~\eqref{eq:limitG} is in fact an exact  equality for any $ r>0 $.

For a generic $ k\geq 0 $, we divide the summation in $ G $ according to the number of $ 1 $'s in $ (k_1, \ldots, k_r) $
\begin{align}\label{eq:expand G}
      G(r, r + k)
      &=  \sum_{\substack{k_1 + \cdots + k_r = r + k \\ k_i \geq 1}} \binom{r + k}{k_1, \ldots, k_r} \nonumber\\
      &=  \binom{r}{k} \binom{r + k}{ \underbrace{2, \ldots, 2}_k, \underbrace{1, \ldots, 1}_{r-k}} +
      \sum_{n=1}^{k-1} \binom{r}{n}\sum_{\substack{k_1 + \cdots + k_n = n + k \\ k_i \geq 2}} \binom{r + k}{k_1, \ldots, k_n, \underbrace{1, \ldots, 1}_{r-n}}.
\end{align}
For a given tuple $(k_1, \ldots, k_n)$,
let $c = (c_2, \ldots, c_n) $, with $\sum_{i=2}^n c_i = n $ and $c_i$ is the number of occurrences of $ i $ among $ (k_1, \ldots, k_n)$, hence we have
\begin{align*}
  \binom{r + k}{k_1, \ldots, k_n, 1, \ldots, 1}
  = \frac{(r+k)!}{2!^{c_2}\cdots n!^{c_n}}.
\end{align*}
Since for a given $c = (c_2, \ldots, c_n) $ there are $\binom{n}{c_2,\ldots,c_n}$ $n$-tuples $(k_1,\ldots,k_n)$ with such occurrences, we rewrite Equation~\eqref{eq:expand G} as
\begin{align*}
      G(r, r + k)
      &= \binom{r}{k} \frac{(r + k)!}{2^k} + \sum_{n=1}^{k-1} \binom{r}{ n}\sum_{\substack{2c_2 + \cdots + nc_n = n + k\\ c_2 + \cdots + c_n = n}}\binom{n} {c_2, \ldots, c_n} \frac{(r+k)!}{2!^{c_2}\cdots n!^{c_n}}.
\end{align*}
Dividing both sides by $ (r + k)! r^k $, we find
\begin{align}\label{eq:Gexpansion}
      \frac{G(r, r + k)}{(r + k)! r^k} =  \frac{1}{2^k k! } \frac{r (r -1) \ldots (r - k + 1)}{ r^k }  +  \sum_{n=1}^{k-1}
      \sum_{\substack{2c_2 + \cdots + nc_n = n + k \\ c_2 + \cdots + c_n = n}} \frac{r (r-1) \ldots (r - n + 1) }{r^k} C_c,
\end{align}
where $ C_c : = 1 / (c_2! \cdots c_n! \cdot 2!^{c_2} \cdots n!^{c_n}) $. For $ n \leq k $, we have the following immediate double inequality:
\begin{align*}
      r^{n - k} \left(\frac{r - n + 1}{r} \right)^n \leq \frac{r (r-1) \ldots (r - n + 1)}{r^k} \leq r^{n - k}.
\end{align*}
Together with Equation~\eqref{eq:Gexpansion}, the above double inequality leads to
\begin{align*}
  &\frac{1}{2^k k! } \left(\frac{r - k + 1}{r} \right)^k +  \sum_{n=1}^{k-1}\sum_{\substack{2c_2 + \cdots + nc_n = n + k \\ c_2 + \cdots + c_n = n}}  r^{n - k} \left(\frac{r - n + 1}{r} \right)^n C_c   \\
  &\hspace{5cm}\leq \frac{1}{(r + k)! r^k} G(r, r + k)
  \leq  \frac{1}{2^k k! } +  \sum_{n=1}^{k-1}\sum_{\substack{2c_2 + \cdots + nc_n = n + k \\ c_2 + \ldots + c_n = n}} r^{n - k} C_c.
\end{align*}
In the limit $ r \to \infty $, both the lower and the upper bound converge to $ \frac{1}{2^k k!} $, hence giving
\begin{align*}
     G(r, r + k) \sim \frac{r^k (r + k)!}{2^k k!} \sim \frac{(r + k)^k (r + k)!}{2^k k!};
\end{align*}
finally, by choosing $ r = m - k $, we recover the first asymptotic of the Lemma.
In order to prove the asymptotic for $ T(m-k, m) $ we divide both sides in Equation~\eqref{eq:formula-T} (with $ r = m - k $) by $ G(m-k, m)$:
\begin{align*}
\frac{T(m-k,m)}{G(m-k,m)} = 1 + \sum_{u = 1}^{k} \binom{m}{u} \frac{G(m-k,m-u)}{G(m-k,m)} g(u).
\end{align*}
The limit of $T(m-k,m)$ as $ m \to \infty $, is then obtained from the asymptotic of $G(m-k,m)$ above:
\begin{align*}
1 + \sum_{u = 1}^{k} \binom{m}{u} \frac{G(m-k,m-u)}{G(m-k,m)} g(u) \sim
1 + \sum_{u = 1}^{k} \frac{m^u}{u!} c_u \frac{m^{k-u} (m-u)!}{m^k m!} g(u) \sim 1 + \sum_{u = 1}^{k} \frac{g(u)}{u!} \frac{c_u}{m^u} \sim 1
\end{align*}
hence, for large $ m $, $ T (m-k, m)  $ and $ G (m-k, m) $ grows at the same rate.

Finally, with an induction argument, we show that $ G(r, m) \sim r^m $ for fixed $r$ and $ m\gg r$.
For $r=1$, we have $G(1, m)=1$. For $ r = 2$, we have $ G(2, m) = 2^m - 2 \sim 2^m $.
We assume that for all $ \ell = 1, \ldots, r-1 $, we have $ G(\ell, m) \sim \ell^m $.
Normalizing Equation~\eqref{eq:recursion-G} by $ 1/r^m $, as $ m \to \infty $ we have
\begin{align*}
    1 = \frac{1}{r^m} G(r, m) +
    \frac{1}{r^m} \sum_{\ell = 1}^{r-1} \binom{r}{\ell} G(\ell, m) \sim \frac{1}{r^m} G(r, m) +
    \sum_{\ell = 1}^{r-1} \frac{r^\ell}{\ell!} \left(\frac{\ell}{r}\right)^m \sim \frac{1}{r^m} G(r, m).
\end{align*}
which completes the induction step, thus the Lemma.
\end{proof}

Thanks to the Propositions and Lemmas demonstrated in this section, we are now in position of proving the asymptotic behaviours presented in Equations (6) and (7) of the main, for \textit{mildly} and \textit{vastly} \textit{parameterized} regimes, respectively. We assume an overparameterized network of width $ m = r^* + n $ where the minimal width $ r^* $ is large.

\textbf{Mildly Overparameterized} (small $ h $). For fixed $k$ and $h$, in the limit $ r^* \to \infty $, Lemma~\ref{lem:limiting-behavior} (Lemma 4.6 in the main) gives the following asymptotic for $ G(r^* - k, m) $ and for $ T(r^*, m) $ :
\begin{align*}
    G(r^* - k, m) &\sim \frac{(m)^{k + h} }{2^{k+h} (k+h)! } m! \sim \frac{(r^*)^{k + h} }{2^{k+h} (k+h)! } m!\,,\\
    T(r^*, m) &\sim \frac{m^{h} }{2^{h} h! } m! \sim \frac{(r^*)^{h}}{2^{h} h! } m!\,.
\end{align*}
Taking the ratio of the two quantities above, we find
\begin{align*}
    \frac{G(r^* - k, m)}{T(r^*, m)}  \sim \frac{(r^*)^{k + h} }{2^{k+h} (k+h)! } \frac{2^{h} h!}{(r^*)^{h} } = \frac{(r^*)^k}{2^k (k+h) \cdots (h+1) }.
\end{align*}

\textbf{Vastly Overparameterized} ($ h \gg r^* $). We consider the case where $ h $ is much bigger than $ r^* $.

Using Equation~\eqref{eq:recursion-G} at $r= r^* - 1 $, we find
\begin{align*}
    \sum_{\ell  = 1}^{r^* - 1} \binom{r^* - 1} { \ell } G(\ell , m) = (r^* - 1)^m.
\end{align*}
We also have that $ T(r^*, m) \geq G(r^*, m) $. Thus if the numbers $ a_k $ of critical points in a network of width $k\in[r^*-1]$ are bounded by $\binom{r^* - 1}{r^*-k} $, we have
\begin{align*}
    \frac{\sum_{k = 1}^{r^* - 1} a_k G(r^*-k, m)}{T(r^*, m)}  \leq \frac{\sum_{r = 1}^{r^* - 1} \binom{r^* - 1}{r} G(r, m)}{G(r^*, m)} = \frac{(r^* - 1)^m}{G(r^*, m)}.
\end{align*}

On the other hand, since $ r^* \ll m $, we have that (Lemma~\ref{lem:limiting-behavior} , i.e. Lemma 4.6 in the main)
\begin{align*}
    \frac{(r^* - 1)^m}{G(r^*, m)} \sim \left(\frac{r^* - 1}{r^*}\right)^m
\end{align*}
as the limit $ m \to \infty $. Thus the inequality (7) in the main holds for large $ m $. Although beyond the scope of the paper, it is worth to point out that a more refined asymptotic analysis for $G$ can be carried on by means of the N\o rlund-Rice integral and saddle point techniques.

\subsection{Multi-Layer ANNs}\label{sec:multi-layer2}

In the case of multi-layers, the equivalence of two incoming weight vectors in the intermediate layers should be understood in the general sense, i.e. all incoming weight vectors of layer $ \ell $ are the outgoing weight vectors of layer $ \ell - 1 $ that can be written as
\begin{align*}
 \{ (\underbrace{(a_1^1)_d, \ldots, (a_{1}^{k_1})_d)}_{k_1}, \ldots, \underbrace{(a_r^1)_d, \ldots, (a_{r}^{k_r})_d}_{k_r},
  \underbrace{(\alpha_1^1)_d, \ldots, (\alpha_{1}^{b_1})_d}_{b_1}, \ldots, \underbrace{(\alpha_1^1)_d, \ldots, (\alpha_{r}^{b_j})_d}_{b_j}) :
  \sum_{i=1}^{k_t} (a_t^i)_d = (a_t)_d \ \text{and} \ \sum_{i=1}^{b_t} (\alpha_t^i)_d = 0 \}
\end{align*}
where $ d \in [r_\ell] $. All weight vectors in this set are equivalent in the sense that they produce the same neuron in layer $ \ell $.

For the general shape of the multi-layer expansion manifold, let us consider first a three-layer network.
If we add one neuron to the first hidden layer, we have that $ \Theta_{\br \to \bm}^{(1)}(\ptheta^{\br}) $ is connected. If we do not add a new neuron in the second hidden layer, the permutations of the neurons in the second hidden layer would bring $ r_2! $ disconnected components where each one of the disconnected components have $ T(r_1, r_1 + 1) $ affine subspaces that are connected to each other. Note that in this case the overall manifold $ \Theta_{\br \to \bm}(\ptheta^{\br}) $ is disconnected. However, adding one neuron to the second hidden layer, every $ r_2! $ disconnected components get connected through the parameters of the neurons in the second hidden layer, which yields a connected multi-layer expansion manifold $ \Theta_{\br \to \bm}(\ptheta^{\br}) $.

In general, adding $ n_1 $ neurons to the first hidden layer results in $ T(r_1, r_1 + n_1) $ connected affine subspaces instead of the usual $ r_1! $ discrete (i.e. disconnected) points. Adding $ n_2 $ neurons to the second hidden layer brings $ T(r_2, r_2 + n_2) $ affine subspaces instead of the usual $ r_2! $ points, for each one of the $ T(r_1, r_1 + n_1) $ affine subspaces. Note that this is multiplicative because every combination of the parameters in the first hidden layer can be paired with every combination of the parameters in the second hidden layer which results in a distinct affine subspace.
Similarly, via induction, if $ n_\ell \geq 1 $ for all $ \ell \in [L-1] $, adding $ (n_1, \ldots, n_{L-1}) $ neurons to each one of the hidden layers make a connected manifold of $ \prod_{\ell=1}^{L-1} T(r_1, r_1 + n_1) $ affine subspaces.

\end{document}